%% file: main.tex
\documentclass{article}

\PassOptionsToPackage{numbers, compress}{natbib}
\usepackage[preprint]{neurips_2026}

\usepackage[utf8]{inputenc}
\usepackage[T1]{fontenc}
\usepackage{hyperref}
\usepackage{url}
\usepackage{booktabs}
\usepackage{amsfonts}
\usepackage{amsmath}
\usepackage{amssymb}
\usepackage{amsthm}
\usepackage{mathtools}
\usepackage{nicefrac}
\usepackage{microtype}
\usepackage{xcolor}
\usepackage{graphicx}
\usepackage{subcaption}
\usepackage{dsfont}  
\usepackage{multirow}
\usepackage{enumitem}
\usepackage{xspace}
\usepackage[capitalize,noabbrev]{cleveref}
\usepackage[textsize=tiny,disable]{todonotes}
\usepackage{etoc}
\usepackage{placeins}

\graphicspath{{images/}}

\theoremstyle{plain}
\newtheorem{theorem}{Theorem}[section]

\theoremstyle{definition}
\newtheorem{definition}[theorem]{Definition}

\theoremstyle{remark}
\newtheorem{remark}[theorem]{Remark}

\newcommand{\PP}{\mathbb{P}}
\newcommand{\EE}{\mathbb{E}}
\newcommand{\xb}{\mathbf{x}}
\newcommand{\Pb}{\mathbf{P}}
\newcommand{\Fb}{\mathbf{F}}
\newcommand{\Rb}{\mathbf{R}}
\newcommand{\Vb}{\mathbf{V}}
\newcommand{\Ub}{\mathbf{U}}
\newcommand{\Sb}{\mathbf{S}}
\newcommand{\ub}{\mathbf{u}}
\newcommand{\gb}{\mathbf{g}}
\newcommand{\tb}{\mathbf{t}}
\newcommand{\Deltab}{{\bf \Delta}}
\newcommand{\im}{\mathrm{i}}
\newcommand{\prule}{(P_n)_{n\ge0}}
\newcommand{\nest}{\texttt{n\_estimators\xspace}}
\newcommand{\tabreg}{\texttt{TabPFNRegressor\xspace}}
\newcommand{\tabclf}{\texttt{TabPFNClassifier\xspace}}

\title{Uncertainty Decomposition for Bayes-Filtered Transformers via Bayesian Predictive Inference}

\author{%
  Sandra Fortini\thanks{Equal contribution.} \\
  Department of Decision Sciences\\
  Bocconi University, Milan, Italy\\
  \And
  Kenyon Ng\footnotemark[1] \\
  Department of Econometrics and Business Statistics\\
  Monash University, Australia\\
  \And
  Sonia Petrone \\
  Department of Decision Sciences\\
  Bocconi University, Milan, Italy\\
  \And
  Judith Rousseau \\
  CEREMADE, Universit\'e Paris Dauphine--PSL\\
  Paris, France and University of Oxford, UK\\
  \And
  Susan Wei\thanks{Corresponding author.} \\
  Department of Econometrics and Business Statistics\\
  Monash University, Australia\\
  \texttt{susan.wei@monash.edu}\\
}

\begin{document}

\maketitle

\etocdepthtag.toc{body}
\input{body}

\begin{ack}
We thank Bin Yu, Jakob Heiss, Yan Shuo Tan, Ryan Giordano, Yingzhen Li, Isuru Shavindra Jayasekera, and Dino Sejdinovic for helpful discussions.
SF was supported by the European Union -- Next Generation EU Funds, PRIN 2022 (2022CLTYP4). KN was supported by the Australian Government Research
Training Program and the Statistical Society of Australia PhD Top-up
Scholarship.
\end{ack}

\bibliography{references}
\bibliographystyle{plainnat}

\newpage
\appendix
\etocdepthtag.toc{appendix}
\etocsettagdepth{body}{none}
\etocsettagdepth{appendix}{subsection}
\setcounter{tocdepth}{2}
\renewcommand{\contentsname}{Appendix contents}
\tableofcontents
\input{appendix}

\end{document}

%% file: body.tex
\begin{abstract}
  Bayes-filtered transformers are transformers meta-learned on sequences
  from a prior predictive distribution to approximate the corresponding
  posterior predictive distribution. They output total predictive uncertainty in
  a single forward pass but never explicitly represent a posterior distribution,
  making the standard route to separating aleatoric from epistemic uncertainty
  unavailable. We address this challenge through the lens of Bayesian predictive
  inference (BPI). Our main result is a predictive Central Limit Theorem (CLT)
  for supervised settings under conditions that are among the weakest known in
  the BPI literature. The CLT characterises the posterior of the limiting
  predictive distribution given an observed context as asymptotically Gaussian;
  the variance of this Gaussian quantifies epistemic uncertainty. We apply the
  framework to TabPFN, a Bayes-filtered transformer that is a
  state-of-the-art foundation model for tabular prediction. The resulting credible bands achieve near-nominal frequentist coverage as context length grows,
  and the decomposition largely matches standard desiderata: epistemic uncertainty
  shrinks with context length and is highest in sparsely observed regions within the span of the context data, while
  aleatoric uncertainty dominates near decision boundaries where classes
  overlap.
\end{abstract}


\section{Introduction}\label{sec:intro}
A \emph{Bayes-filtered transformer} (BFT), with weights $\phi$, is a transformer meta-learned on sequences
drawn from a prior predictive distribution. Through this meta-learning, BFTs learn to approximate the
corresponding Bayesian posterior predictive distribution (PPD). In the idealised limit the approximation is exact: a BFT realises the PPD and is therefore Bayesian by construction.
The theoretical basis for this is well established: minimising autoregressive log-loss on sequences from the prior predictive distribution
recovers the PPD \citep{ortega19metalearning}.
BFTs serve as a popular sandbox for studying in-context learning
\citep{garg22what,raventos23pretraining,zhang24trained,wu24pretraining,wurgaft25strategies,park25competition}. BFTs also underlie
recent advances in tabular prediction, most notably TabPFN
\citep{hollmann23tabpfn,hollmann25accurate}.

BFTs turn Bayesian inference into a simple regression problem: a single forward pass returns the
PPD, sidestepping the intractable posterior distribution that conventional Bayesian
workflows have to reckon with. Prior-data fitted network (PFN) documentation routinely advertises this convenience, claiming that uncertainty
estimates are readily available from the predictive output
\citep{muller22pfns,hollmann23tabpfn,hollmann25accurate}. The claim holds only for \emph{total} predictive
uncertainty. Yet practitioners typically want to separate that total into its \textbf{aleatoric} and
\textbf{epistemic} components, and the standard route to that decomposition runs through the posterior distribution. BFTs internalise this posterior implicitly during meta-learning but never expose
it, leaving the decomposition seemingly out of reach. Bayesian predictive inference \citep[BPI;][]{fortini25exchangeability} offers a route to this decomposition: latent posterior quantities are inferred from the BFT's observable sequence of predictive updates. This recovers the implicit posterior (and with it the aleatoric/epistemic split) without touching the BFT's weights, the meta-learning data, or the prior predictive distribution used during meta-learning. In supervised settings, the predictive CLT we develop delivers this recovery.

To make the problem and our solution concrete,
consider the \textbf{Beta-Bernoulli} model:
$\tilde\theta\sim\mathrm{Beta}(\alpha,\beta)$, then
$Y_1,Y_2,\ldots\mid\tilde\theta$ are i.i.d. $\mathrm{Bernoulli}(\tilde\theta)$.
This model admits two distinct sources of uncertainty.
\textbf{Aleatoric uncertainty} is the irreducible randomness of flipping a coin.
Even knowing $\tilde\theta$ exactly, the outcomes are still random and the randomness
does not vanish with more data. \textbf{Epistemic uncertainty} is the
uncertainty about $\tilde\theta$ that arises from having observed only finitely many flips. After observing $n$ flips, the posterior
$\tilde\theta\mid y_{1:n} \sim \mathrm{Beta}\bigl(\alpha+\sum_{i=1}^n y_i,\; \beta+n-\sum_{i=1}^n y_i\bigr)$
has variance $\mathrm{Var}(\tilde\theta\mid y_{1:n}) = O(1/n)$, vanishing as $n\to\infty$.

Now consider a BFT meta-learned on the Beta-Bernoulli prior predictive defined above. It
learns its own predictive rule
$g_k := \PP_\phi(Y_{k+1}=1\mid y_{1:k})$, the probability the BFT assigns to $Y_{k+1}=1$ given context $y_{1:k}$ in a single forward pass. In the idealised limit
$g_k$ coincides with the analytic PPD
$g_k^{\mathrm{oracle}} := (\alpha+\sum_{i=1}^k y_i)/(\alpha+\beta+k)$, so the
BFT realises Bayes' rule exactly. Our predictive CLT
(Theorem~\ref{th:ascondmult}) recovers the posterior of $\tilde\theta$ from the
\emph{volatility} of the predictive updates $\Delta_k := g_k - g_{k-1}$:
$\tilde\theta\mid y_{1:n} \approx \mathcal N(g_n,\,V_n/n)$ with
$V_n = \tfrac{1}{n}\sum_{k=1}^{n} k^2\Delta_k^2$; the weight $k^2$ corresponds to setting the rate parameter $\gamma=1$ in the general statement (Section~\ref{sec:PCLT}). When uncertainty is quantified by variance, $V_n/n$
\emph{is} the epistemic component.

A real trained BFT, of course, only approximates $g_k^{\mathrm{oracle}}$.
Our predictive CLT accommodates this: under our \emph{signed conditions}
(Section~\ref{sec:PCLT}), pathwise regularity conditions on the predictive
updates and on signed sums of their conditional drifts, the Gaussian approximation $\tilde\theta\mid y_{1:n}\approx\mathcal N(g_n,\,V_n/n)$
continues to hold for any approximate $g_k$ that satisfies them.
Appendix~\ref{app:theorycheck} provides empirical diagnostics for these conditions on a Beta-Bernoulli BFT (architecture and training in
Appendix~\ref{app:theorycheck-setup}; figure details in Appendix~\ref{app:headline-figure}). Figure~\ref{fig:beta_bernoulli_intro} shows the resulting fit on this BFT:
panel~(a) shows the Gaussian from our predictive CLT
closely matching the analytic Beta posterior, and panel~(b) shows
$V_n$ accumulating from the predictive updates.

\begin{figure}[tbp]
  \centering
  \includegraphics[width=0.8\linewidth]{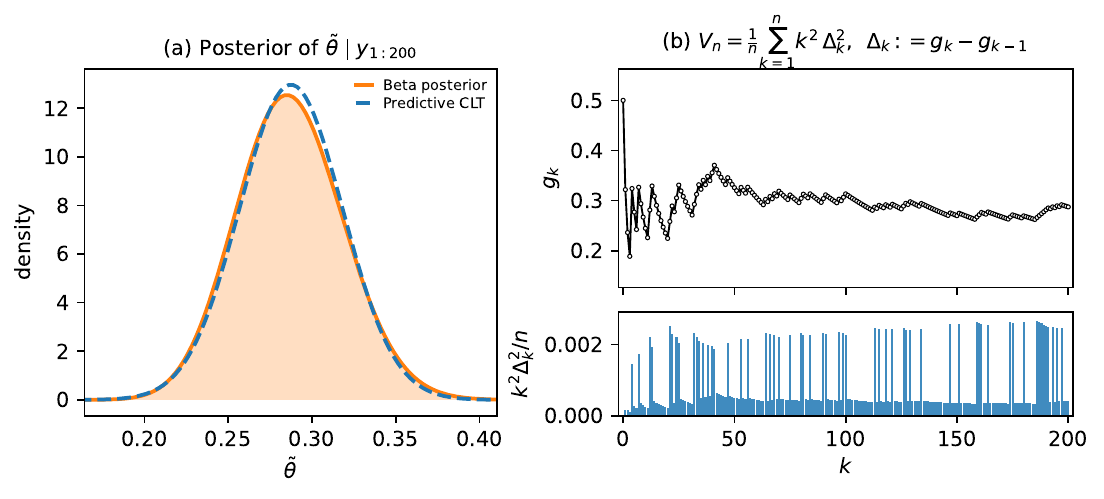}
  \caption{Beta-Bernoulli BFT meta-learned with $\tilde\theta\sim\mathrm{Beta}(1,1)$ and i.i.d.\ $\mathrm{Bernoulli}(\tilde\theta)$ sequences.
    \textbf{(a)}~On a sequence of $n=200$ observations: the analytic Beta posterior $\tilde\theta\mid y_{1:n}$ (orange) and the Gaussian approximation
    $\mathcal N(g_n, V_n/n)$ from our predictive CLT (blue dashed).
    \textbf{(b)}~Top: the predictive trajectory $(g_k)_{k\ge 0}$, where $g_k := \PP_\phi(Y_{k+1}=1\mid y_{1:k})$. Bottom: the per-step contributions
    $k^{2}\Delta_k^{2}/n$ to $V_n$, as a stem plot.
    Full experimental details are in Appendix~\ref{app:headline-figure}.}
  \label{fig:beta_bernoulli_intro}
\end{figure}

\textbf{The supervised setting.} Given a context $z_{1:n}$ with $z_i := (x_i,y_i)$ and a test covariate $x^\ast$, a BFT meta-learned in a supervised setting outputs the conditional predictive distribution $\PP_\phi(Y^\ast\in\cdot\mid x^\ast, z_{1:n})$. The role of the Beta-Bernoulli parameter $\tilde\theta$ is now played by a random predictive distribution $\tilde P$; in the Beta-Bernoulli example, $\tilde P$ is the $\mathrm{Bernoulli}(\tilde\theta)$ law, so $\tilde P(\{1\})=\tilde\theta$. By the law of total variance,
\begin{equation*}
  \underbrace{\mathrm{Var}(Y^\ast\mid x^\ast, z_{1:n})}_{\text{total}}
  \;=\;
  \underbrace{\EE[\mathrm{Var}(Y^\ast\mid x^\ast,\tilde P)\mid z_{1:n}]}_{\text{aleatoric}}
  \;+\;
  \underbrace{\mathrm{Var}(\EE[Y^\ast\mid x^\ast,\tilde P]\mid z_{1:n})}_{\text{epistemic}}.
\end{equation*}
The left-hand side, equal to the variance of the BFT's output, is the total predictive uncertainty. Our predictive CLT supplies an asymptotic approximation to the epistemic variance; the aleatoric term then follows by subtraction.

\textbf{Why uncertainty decomposition (UD) matters.} BFTs are already deployed in downstream pipelines such as Bayesian optimisation \citep{muller23pfns4bo,yu26gitbo}, which relies on the \emph{epistemic} component of uncertainty. Acquisition functions like expected improvement target posterior uncertainty; irreducible noise carries no signal about where to query next, so conflating the two causes these pipelines to over-explore high-noise regions. Existing TabPFN-BO methods \citep{muller23pfns4bo,yu26gitbo} feed the BFT's \emph{total} predictive variance into the acquisition function; our decomposition supplies the principled epistemic component.

More broadly, the proposed framework offers Bayesian practitioners an automatic
alternative to standard Bayesian computation, including MCMC and variational inference. Figure~\ref{fig:lfp_intro} shows $95\%$ credible bands
for a real-data binary classification task (labour-force participation as a
function of family income; $n=753$), obtained by applying TabPFN
together with our predictive CLT. The resulting credible bands are qualitatively similar to those from a standard
textbook Bayesian logistic regression analysis \citep{albert20probability},
which requires careful prior specification and
MCMC. The route here requires neither. A second example in
Appendix~\ref{app:real_data_fibre} applies the same approach to a textbook
Bayesian reliability analysis on silicon-carbide fibre strengths
\citep{hamada08bayesian}.

\begin{figure}[tbp]
  \centering
  \includegraphics[width=\linewidth]{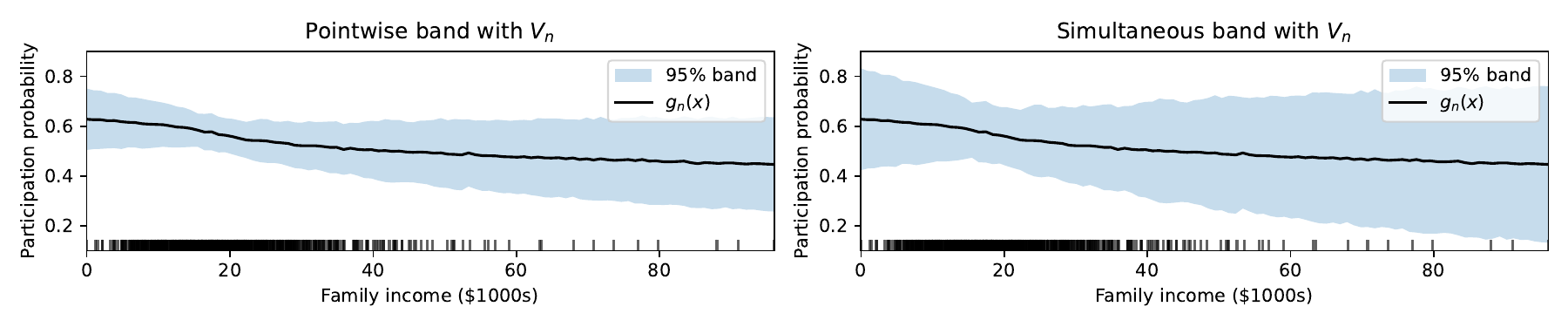}
  \caption{PSID labour-force participation ($n=753$): TabPFN's predicted
    probability $g_n(x^\ast)$ versus family income (\$1000s), with 95\% credible
    bands for the posterior of $\tilde P(x^\ast,\{1\})$ given $z_{1:n}$
    derived from the predictive CLT (Theorem~\ref{th:ascondmult}). This
    posterior represents epistemic uncertainty. \emph{Left}: pointwise credible
    band, $g_n(x^\ast)\pm 1.96\sqrt{V_n(x^\ast)/n}$. \emph{Right}: simultaneous
    credible band.}
  \label{fig:lfp_intro}
\end{figure}

\textbf{Contributions.}
\begin{itemize}[nosep,leftmargin=*]
  \item \textbf{A predictive CLT for supervised settings} under some of the
        weakest known conditions in the BPI literature. BPI's two tools for
        recovering the latent posterior, predictive Monte Carlo
        \citep{fortini20quasibayes,fong23martingale} and predictive CLTs
        \citep{fortini23predictionbased,fong26asymptotics,fortini26principled},
        both leave gaps in supervised settings: predictive Monte Carlo requires
        simulating future covariates that BFTs do not model, and existing
        predictive CLTs target unsupervised settings under stronger
        sufficient conditions (e.g., (quasi-)martingale predictive rules).
        Our predictive CLT (Theorem~\ref{th:ascondmult}) closes both gaps: it applies in supervised settings under our \emph{signed conditions} (Section~\ref{sec:PCLT}), which accommodate random-sign cancellation in the predictive drift and are strictly weaker than the (quasi-)martingale conditions of existing predictive CLTs.

  \item \textbf{First meta-learned BFT consistent with BPI sufficient conditions.} Prior empirical work has shown TabPFN to violate the martingale \citep{nagler25uncertainty} and almost conditionally identically distributed (a.c.i.d.)\ \citep{ng25tabmgp} conditions, while diagnostics of the quasi-martingale conditions on TabPFN were inconclusive \citep{fortini26principled}. We show, through empirical diagnostics, that a meta-learned Beta-Bernoulli BFT is consistent with our weaker sufficient conditions, the first positive empirical evidence for any BPI sufficient conditions on a real meta-learned model.

  \item \textbf{UD for BFTs.} The decomposition separates
        total predictive uncertainty into aleatoric and epistemic components
        (via variance or entropy), using only the observable sequence of
        predictive updates, with no access to model internals, meta-learning data, or
        prior --- the first such UD for BFTs, and the first without predictive Monte Carlo simulation. More broadly, the framework applies to any  sequence model trained to be approximately
        Bayesian.

  \item \textbf{`Automatic' Bayesian inference.} Our predictive CLT operates on the
        predictive rule directly
        without requiring a likelihood, prior elicitation, or sampling.
\end{itemize}

\textbf{Outline.}
Section~\ref{sec:background} reviews BPI and BFTs.
Section~\ref{sec:related} discusses related work.
Section~\ref{sec:methodology} presents the predictive CLT informally
and instantiates it for BFTs.
Section~\ref{sec:PCLT} states the formal theorems.
Section~\ref{sec:experiments} reports three experiments: empirical diagnostics of our sufficient conditions on a Beta-Bernoulli BFT, frequentist coverage of TabPFN's credible bands, and entropic UD on TabPFN classification.

\section{Background}
\label{sec:background}

\textbf{Bayes-filtered transformers.} \citet{ortega19metalearning} showed that
memory-based architectures trained to minimise cumulative log-loss on sequences
from a prior predictive distribution learn to implement the posterior
predictive distribution. Subsequent work confirmed this for increasingly
rich prior predictive structures, including non-exchangeable sequences such
as Markov chains and non-stationary processes
\citep{mikulik20metatrained,genewein23memorybased,grau-moya24learning}. This
broad class of meta-learned transformers is what we call a BFT. A PFN \citep{muller22pfns} is the
\emph{exchangeable} special case of a BFT. TabPFN \citep{hollmann23tabpfn,hollmann25accurate} is a prominent foundation-model PFN for tabular data.

\textbf{Meta-learning a PFN.} The meta-learning corpus consists of a dataset-of-datasets
$\{ z_{1:T}^m \}_{m=1}^M$, where each meta-learning sequence $ z_{1:T}^m = (z_1^m,\ldots,z_T^m)$ has length $T$ and contains
pairs $z_i^m=(x_i^m,y_i^m)$. Each dataset is obtained by first sampling from the
prior $p(\theta)$, then drawing samples i.i.d.\ from the $\theta$-conditional
distribution $p_\theta(x,y)$. In TabPFN, the distributions $p_\theta(x,y)$ are a rich mixture of
synthetic data-generating processes, primarily structural causal models (SCMs) and Bayesian neural networks (BNNs).

During meta-learning, each dataset $ z_{1:T}$ is presented to the transformer as
a sequence. At position $k$, the prefix $z_{1:k-1}$ plays the role of
\emph{context} and the PFN is trained in a next-token fashion to predict the
label $y_k$ at covariate $x_k$. Under the generative specification, the
conditional distribution of $y_k$ given $x_k$ and $z_{1:k-1}$ coincides with the
\textbf{Bayesian PPD}:
$p(y_k \mid x_k, z_{1:k-1}) = \int p_\theta(y_k \mid x_k)\,p(\theta \mid z_{1:k-1})\,\,d\theta.$
Following \citet{ortega19metalearning}, we call a sequence \textbf{Bayes-filtered}
when it has this property: at every position, the conditional distribution of
the next observation given the prefix is the Bayesian posterior predictive,
i.e., the result of applying Bayes' rule to all observations so far. Minimising log-loss on these
Bayes-filtered samples trains the PFN to approximate, \emph{in context}, the
PPD.

The PFN learns to approximate this PPD by minimising
$\tfrac1M \sum_{m=1}^M \sum_{i=1}^T -\log \PP_\phi(y_i^m \mid x_i^m, z_{1:i-1}^m)$.
In the idealised limit, where the transformer is sufficiently expressive, $M$
is large enough for the empirical objective to approach the population risk, and optimisation reaches the global minimum, the trained PFN realises the Bayesian PPD.

\textbf{BPI.}
We next review the BPI framework \citep{fortini25exchangeability}, which underpins our predictive CLT and UD. BPI's central object is the sequence of one-step-ahead predictive distributions, analogous to the next-token distributions a transformer defines through in-context learning. 

Let $Z_1,Z_2,\ldots\in\mathcal Z$ be a stochastic sequence governed by a
probability law $\PP$. Lower-case $z_{1:n}$ will always denote a concrete realisation of the first $n$ observations. Consider the sequence of one-step-ahead predictive distributions
\begin{equation*}
  P_n(\cdot):=\PP\bigl[Z_{n+1}\in\cdot\mid Z_{1:n}\bigr], \quad n=0,1,2,\ldots,
\end{equation*}
with $P_0(\cdot)=\PP[Z_1\in\cdot\;]$. We refer to the full sequence $\prule$ as
a \textbf{predictive rule}, which determines the entire path law
$\PP$ via the \emph{Ionescu--Tulcea extension theorem} \citep[Theorem 5.17
and Corollary 5.18]{kallenberg21foundations}. Conversely, $\PP$ determines the
predictive rule by conditioning. Thus we may speak interchangeably of the
predictive rule $\prule$ and the joint law $\PP$ it induces.

If $(Z_n)_{n \ge 0}$ is infinitely \textbf{exchangeable}, \emph{de Finetti's
  representation theorem} guarantees the existence of a random probability
measure $\tilde P$ on $\mathcal Z$ such that, conditionally on $\tilde P = P$,
the observations $Z_1, Z_2, \ldots$ are i.i.d.\ from $P$. Equivalently,
$\tilde P$ is the almost-sure limit of the predictive rule $(P_n)_{n \ge 0}$
\citep{fortini23predictionbased}. Such a $\tilde P$ is called the
\emph{directing random measure}: it plays the role of the statistical model, and
its probability law $\pi$ is the prior distribution.

Since $\tilde P$ is simultaneously a probability measure and a random object,
its notation warrants a word. Let $\Omega = \mathcal Z^\infty$ be the space of
infinite sample paths, with $\omega = (z_1, z_2, \ldots) \in \Omega$ one such
path. Then
\begin{enumerate}[label=(\alph*),itemsep=0pt,topsep=2pt,parsep=0pt]
    \item $\tilde P: \Omega \to \mathcal P(\mathcal Z)$ maps each path $\omega$ to a probability measure $\tilde P(\cdot)(\omega)$ on $\mathcal Z$;
    \item $\tilde P(A): \Omega \to [0,1]$ is, for a fixed measurable $A \subseteq \mathcal Z$, a real-valued random variable.
\end{enumerate}

In the Beta-Bernoulli setting of Section~\ref{sec:intro},
$\mathcal Z = \mathcal Y = \{0,1\}$ and the directing random measure is 
$\tilde P(\cdot)(\omega) = \mathrm{Bernoulli}(\tilde\theta(\omega))$, with
$\tilde\theta(\omega) = \lim_{n\to\infty} \tfrac{1}{n}\sum_{i=1}^n y_i = \lim_{n\to\infty} P_n(\{1\})$.
So $\tilde P(\{1\})$ and $\tilde\theta$ coincide as random variables. The conditional law of $Y_{n+1}\mid \tilde\theta$ carries the aleatoric uncertainty, and the posterior of $\tilde\theta$ given $y_{1:n}$ carries
the epistemic uncertainty. Our goal is to derive that posterior, which
enables the full aleatoric/epistemic decomposition.


BPI extends beyond strict exchangeability. We say $(Z_n)$ is
\textit{asymptotically exchangeable} with limiting directing random measure
$\tilde P$ if for $n\to\infty$, $(Z_{n+1}, Z_{n+2},\ldots) \to (Z'_{1},
Z'_{2},\ldots)$ in distribution, where $(Z'_n)$ is exchangeable with directing
random measure $\tilde P$. The relaxation matters in practice: a trained BFT
typically produces sequences that are only approximately exchangeable. 
A sufficient condition for asymptotic exchangeability is that the predictive rule is a \textbf{martingale},
$\EE [ P_{n+1}(A) \mid Z_{1:n}] = P_n(A)$ $\PP$-a.s.\ for every measurable $A\subseteq\mathcal Z$ and $n\ge0$. Under exchangeability $(P_n)$ is itself a martingale, so a trained BFT's predictive rule should ideally be one, or approximately so (e.g., a quasi-martingale \citep{fortini26principled}). Section~\ref{sec:PCLT} states the conditions we use, which are weaker still and, together with a condition on the covariate process, guarantee that future labels are, in the limit, conditionally i.i.d.\ given their covariates (Theorem~\ref{th:joint}).

\section{Related Work}
\label{sec:related}

The BPI literature is organised around conditions under which the predictive distributions $P_n$ converge to a directing random measure $\tilde P$. These convergence conditions form a hierarchy of decreasing strength: $\text{exchangeable} \subset \text{c.i.d./martingale} \subset \text{a.c.i.d.} \subset \text{quasi-martingale} \subset \text{signed (ours)}$, where c.i.d.\ denotes conditionally identically distributed.\footnote{As a companion result, Appendix~\ref{app:qm-general-gamma} states a general-$\gamma$ quasi-martingale CLT extending the $\sqrt n$-rate ($\gamma=1$) result of \citet{fortini26principled} to arbitrary $\gamma\in(0,1]$.} Classical BPI assumes exchangeability \citep{definetti37prevision} or martingale predictive rules \citep{berti04limit,fortini20quasibayes}; recent works relax this to a.c.i.d.~\citep{battiston25bayesian} and quasi-martingale conditions \citep{fortini26principled}. Empirically, TabPFN violates the martingale \citep{nagler25uncertainty} and a.c.i.d.\ \citep{ng25tabmgp} conditions, with quasi-martingale diagnostics inconclusive \citep{fortini26principled}.

A predictive CLT requires more than convergence to $\tilde P$, namely a rate of convergence and regularity on the conditional drift. Existing predictive CLTs \citep{fortini20quasibayes,fortini23predictionbased,fong25bayesian,fong26asymptotics} assume martingale or quasi-martingale predictive rules and target unsupervised settings; our predictive CLT extends to supervised settings under our \emph{signed conditions} (Section~\ref{sec:PCLT}), among the weakest known in the BPI literature for which a predictive CLT is established. Predictive Monte Carlo \citep{fortini20quasibayes,fong23martingale} hits a different obstacle for BFTs in supervised settings: simulating future observations requires simulating future covariates, which BFTs do not model. \citet{nagler25uncertainty} circumvent this by replacing the predictive rule's update dynamics at a fixed query covariate with a Gaussian copula update, so the resulting decomposition characterises a hybrid rule (TabPFN initialisation + copula dynamics) with TabPFN absent from the forward simulation (Appendix~\ref{app:coverage}; Table~\ref{tab:coverage}).

Beyond the BPI literature, several uncertainty methods are commonly invoked for deep models but do not address our decomposition target. Standard methods that operate on the network's parameters $\phi$ (deep ensembles, MC dropout, Bayesian neural networks) face practical obstacles for foundation BFTs: they require retraining or architectural surgery, which defeats the point of an off-the-shelf foundation model. Even applied to a smaller BFT one trains oneself, they characterise uncertainty about the trained model itself (its weights, dropout masks, or ensemble members), a different object than $\tilde P$. Conformal prediction produces prediction sets for $Y^*$ with no aleatoric/epistemic decomposition. Variational UD has been proposed for LLMs by bounding aleatoric uncertainty via auxiliary probes \citep{jayasekera25variational}; the setting (internal probes for LLMs vs.\ tabular BFTs) and decomposition machinery differ from ours.

\section{Methodology}
\label{sec:methodology}
We present the predictive CLT (Theorem~\ref{th:ascondmult}) informally here and defer the complete statement
to Section~\ref{sec:PCLT}. Let $Z_i=(X_i,Y_i)$, where
$X_i\in\mathcal X\subseteq\mathbb R^d$ and
$Y_i\in\mathcal Y\subset\mathbb R$, both spaces measurable. Denote by $\PP$ the law of $(Z_i)$. Under asymptotic exchangeability, the pairs $(X_i,Y_i)$ for $i>n$ are, given $\tilde P$, approximately i.i.d.\ from $\tilde P$ when $n$ is large, inducing a posterior on the conditional distribution of $Y$ given $X$. We target this posterior at a finite grid of test pairs $\{(x_j, A_j)\}_{j=1}^m$, for which our CLT provides an explicit asymptotic approximation.

For any measurable $A \subseteq \mathcal{Y}$, define the predictive probability of $A$ conditional on the history $Z_{1:k}$ and a new covariate $x$:
\begin{equation*}
  P_k(x, A) := \PP (Y_{k+1} \in A \mid X_{k+1}=x, Z_{1:k}), \quad \text{for } k \ge 0,
\end{equation*}
with $P_0(x,A) := \PP(Y_1 \in A \mid X_1=x)$. Under our \textbf{signed conditions} (Section~\ref{sec:PCLT}), for any fixed $(x,A)$ the sequence $(P_k(x, A))_{k \ge 0}$ converges to a limiting random variable $\tilde{P}(x, A)$.

For $m$ covariate-event pairs $\{(x_j, A_j)\}_{j=1}^m$, collect the predictive
probabilities into the vector
$\mathbf{P}_k = \bigl(P_k(x_1,A_1),\ldots,P_k(x_m, A_m)\bigr)^\top$, with
limit $\tilde{\mathbf{P}}$. The predictive CLT (Theorem~\ref{th:ascondmult}) states that the posterior of
$\tilde{\mathbf{P}}$ is asymptotically Gaussian:
\begin{equation}\label{eq:general_clt_vector}
  \tilde{\mathbf{P}} \mid z_{1:n} \;\approx\; \mathcal{N}_m\left(\mathbf{P}_n, \frac{\mathbf{V}_n}{n^\gamma}\right),
\end{equation}
where $\gamma\in(0,1]$ is a rate parameter and $\mathbf{V}_n$ is the $m \times m$ covariance matrix
\begin{equation}\label{eq:Vn_xA}
  \mathbf{V}_n = \frac{1}{n} \sum_{k=1}^n \frac{k^{\gamma+1}}{\gamma} \Deltab_k \Deltab_k^\top,
  \qquad
  \Deltab_k:=\Pb_k - \Pb_{k-1}.
\end{equation}
The covariance $\mathbf{V}_n$ is determined by the volatility of the predictive
rule's updates along $z_{1:n}$. Computing it requires only $n$ forward passes
of the BFT; no sampling or imputation of future values is involved.
The matrix $\mathbf{V}_n/n^\gamma$ is the posterior variance of the limiting
predictive functional $\tilde{\mathbf{P}}$, the \emph{epistemic}
uncertainty, which shrinks to zero as $n\to\infty$. The \emph{aleatoric} uncertainty, the
dispersion inherent to each realisation $\tilde{\mathbf{P}}(\omega)$,
persists. The approximation~\eqref{eq:general_clt_vector} concerns the posterior of the limiting predictive functional $\tilde{\mathbf{P}}$ given the observed context; the BFT's own predictive distributions are not themselves required to be Gaussian.

The rate $\gamma$ is a property of the predictive rule alone: condition~$(iii)$ of Theorem~\ref{th:ascondmult} pins $\gamma$ at the value at which the inflated outer product of residuals $\Rb_n=n^\gamma\sum_{k>n}\Deltab_k\Deltab_k^\top$ has a positive definite limit, while conditions~$(ia)$ and~$(ib)$ of Theorem~\ref{th:ascondmult} require the conditional drift $b_n := \EE[\Delta_n \mid Z_{1:n-1}]$ to vanish at this same rate. For complicated BFTs such as TabPFN, $\gamma$ is unknown and not reliably estimable in practice. We assume $\gamma=1$ throughout, supported by direct empirical evidence on the small Beta-Bernoulli BFT (Appendix~\ref{app:theorycheck}).

We specialise to two settings, summarised in Table~\ref{tab:specialisation}: binary classification (event $A=\{1\}$) and regression (event $A_t=(-\infty,t]$).
On a grid $\xb=(x_1,\dots,x_m)\in\mathcal X^m$ (and, for regression, $\tb=(t_1,\dots,t_m)$), we stack these into vectors of length $m$:
\begin{equation}\label{eq:multdef}
  \gb_n(\xb) := \bigl(g_n(x_j)\bigr)_{j=1}^m,
  \qquad
  \Fb_n(\xb,\tb) := \bigl(F_n(x_j,t_j)\bigr)_{j=1}^m.
\end{equation}
Then~\eqref{eq:general_clt_vector} holds with $\Pb_n=\gb_n$ or $\Pb_n=\Fb_n$.
The framework places no restriction on the dimension of $\mathcal X$, so it accommodates high-dimensional covariates.
The practical details of applying the framework to TabPFN (e.g., $\Vb_n$ estimator's sensitivity to data ordering) are described in
Appendix~\ref{app:ud4tabpfn}.

\begingroup
\setlength{\intextsep}{4pt plus 1pt minus 1pt}
\begin{table}[h]
  \centering
  \caption{Specialisation of the predictive CLT to binary classification and regression.}
  \label{tab:specialisation}
  \footnotesize
  \begin{tabular}{l c c c}
    \toprule
    \textbf{Setting} & \textbf{Event $A$} & \textbf{Finite sample} & \textbf{Limit} \\
    \midrule
    \textbf{Classification} &
    $\{1\}$ &
    $g_k(x) := P_k(x, \{1\})$ &
    $\tilde{g}(x) := \tilde{P}(x, \{1\})$ \\
    \textbf{Regression} &
    $(-\infty, t]$ &
    $F_k(x, t) := P_k\bigl(x, (-\infty, t]\bigr)$ &
    $\tilde{F}(x, t) := \tilde{P}\bigl(x, (-\infty, t]\bigr)$ \\
    \bottomrule
  \end{tabular}
\end{table}
\endgroup

\section{A Predictive CLT for Supervised Settings}
\label{sec:PCLT}

We use the notation $\mathcal L(U \mid Z)$ to denote the conditional distribution of a random vector $U$ given a random vector $Z$. Convergence of probability measures is always understood as weak convergence.
All proofs are deferred to Appendix~\ref{app:proofs}.

\subsection{Convergence of the predictive distributions}

Let $\Delta_n(x,t)=F_n(x,t)-F_{n-1}(x,t)$ be the predictive update based on $Z_n$.
Note that what follows also applies to multiclass classification with atomic events for each class.

\begin{theorem}\label{th:convergence}
Assume that the following conditions hold:
\begin{description}[itemsep=2pt,topsep=2pt,parsep=0pt]
    \item[(i)] $\mathcal Y$ is compact.
    \item[(ii)] For every $t\in \mathcal Y$, the functions $x\mapsto F_n(x,t)$ are equicontinuous.
    \item[(iii)] For every $x$ and $t$, $\sum_{k\geq n} \EE(\Delta_k(x,t)\mid Z_{1:k-1})\rightarrow 0$ $\PP$-a.s.
\end{description}
Then there exists $\tilde F(x,t)$ such that $F_n(x,\cdot)$ converges weakly to $\tilde F(x,\cdot)$ for every $x\in\mathcal X$, $\PP$-a.s.
\end{theorem}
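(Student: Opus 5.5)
Fix $x\in\mathcal X$ and $t\in\mathcal Y$. The plan is to split the predictive update into a drift and a martingale increment,
\[
\Delta_k(x,t)=b_k(x,t)+M_k(x,t),\qquad b_k(x,t):=\EE(\Delta_k(x,t)\mid Z_{1:k-1}),\qquad M_k:=\Delta_k-b_k .
\]
Then, for $n<N$,
\[
F_N(x,t)-F_{n-1}(x,t)=\sum_{k=n}^{N} b_k+\sum_{k=n}^{N} M_k .
\]
Condition (iii) says the tail sums $\sum_{k\ge n}b_k$ exist (the series converges) and tend to $0$ a.s. Hence the drift partial sums $\sum_{k\le N} b_k$ form an a.s.\ Cauchy sequence. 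For the martingale part, I would note that $0\le F_k\le 1$, so $|F_N-F_{n-1}|\le 1$. This gives a.s.\ boundedness of $S_N:=\sum_{k\le N}M_k=F_N-F_0-\sum_{k\le N}b_k$. Because condition (iii) gives convergence of $\sum b_k$ pathwise, $\sup_N|S_N|<\infty$ a.s. Increments satisfy $|M_k|\le 2$. A martingale with bounded increments converges a.s.\ on the event $\{\sup_N S_N<\infty\}$ (the standard dichotomy: it either converges or oscillates with $\limsup=+\infty$, $\liminf=-\infty$). Hence $S_N$ converges a.s., and so $F_n(x,t)$ converges a.s.\ for each fixed $(x,t)$.

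Next I pass to countably many points and upgrade to weak convergence in $t$. Take a countable dense set $D\subset\mathcal Y$ that also contains $\max\mathcal Y$; by (i) $\mathcal Y$ is compact. Take also a countable dense $C\subset\mathcal X$, which exists since $\mathcal X\subseteq\mathbb R^d$. On a single null-set complement, $F_n(x,t)$ converges for all $(x,t)\in C\times D$; call the limit $G(x,t)$. For fixed $x\in C$, $G(x,\cdot)$ is nondecreasing on $D$ with values in $[0,1]$. I define $\tilde F(x,t):=\inf_{s\in D,s>t}G(x,s)$, which is right-continuous and nondecreasing, with $\tilde F(x,\max\mathcal Y)=1$ by compactness. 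Since every $F_n(x,\cdot)$ is a distribution function on compact $\mathcal Y$, there is no mass escape. The standard Helly-type argument then gives $F_n(x,t)\to\tilde F(x,t)$ at continuity points of $\tilde F(x,\cdot)$, i.e.\ weak convergence.

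The remaining step, which I expect to be the main obstacle, is extending from $x\in C$ to every $x\in\mathcal X$ simultaneously on one null set; the a.s.\ statement must hold for all $x$, not just a countable family. Here I use (ii). For fixed $t\in D$ and $\varepsilon>0$, equicontinuity gives a $\delta$ (possibly path-dependent, which is fine pathwise) with $|F_n(x,t)-F_n(x',t)|<\varepsilon$ for all $n$ whenever $|x-x'|<\delta$. Choosing $x'\in C$ close to $x$ then yields
\[
\limsup_{n,m}|F_n(x,t)-F_m(x,t)|\le 2\varepsilon .
\]
So $F_n(x,t)$ is Cauchy for all $x\in\mathcal X$ and $t\in D$ on the same full-measure event. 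The Helly step from the previous paragraph then applies verbatim for every $x$, giving $\tilde F(x,\cdot)$. Equicontinuity should be read pointwise in $x$ and uniform in $n$, and if needed a.s.; I would check that the countable intersection over $t\in D$ handles its quantifiers. The delicate points are (a) making (ii) and (iii) hold on a common full-measure set, and (b) the boundedness argument for $S_N$, which relies on (iii) holding pathwise, not merely in expectation.
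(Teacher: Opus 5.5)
Your route is essentially the paper's. The paper also splits $F_n$ into a drift and a bounded-increment martingale, and uses convergence of that martingale on the event of bounded paths. It then takes countable grids in $x$ and $t$, uses equicontinuity to reach every $x$, and finishes with a right-continuous regularisation and a Helly sandwich. Your Cauchy estimate for the extension in $x$ is a slightly cleaner version of the paper's construction of $H(x,s)$ through limits along sequences in $D$.

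One step, however, fails as written: the choice of the $t$-grid as a countable dense subset $D\subset\mathcal Y$, with $\tilde F(x,t)=\inf_{s\in D,\,s>t}G(x,s)$. This regularisation, and the sandwich $r<t<s$ with $r,s\in D$, need $D$ to approach every real $t$ from both sides. That is impossible when $\mathcal Y$ is not an interval.

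In the binary case $\mathcal Y=\{0,1\}$, which is the paper's main application, you get $D=\{0,1\}$. Your definition then gives $\tilde F(x,t)=G(x,0)$ for $t<0$ instead of $0$, and $\tilde F(x,t)=G(x,1)=1$ for $t\in[0,1)$ instead of $G(x,0)$. So $\tilde F(x,\cdot)$ is neither a distribution function nor the limit of $F_n(x,\cdot)$. The same misattribution of mass happens at left endpoints of gaps in, e.g., $[0,1]\cup[2,3]$. In addition, at $t=\max\mathcal Y$ your infimum runs over the empty set.

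The fix is the paper's choice $D=\mathbb Q$.
\begin{itemize}
\item Convergence at each rational $s$ follows from your first step. If you read (iii) as stated only for $t\in\mathcal Y$, use $F_n(x,s)=F_n(x,\pi(s))$ with $\pi(s)=\max(\mathcal Y\cap(-\infty,s])$, which exists by compactness, together with $F_n(x,s)=0$ for $s<\min\mathcal Y$.
\item The same identity transfers the equicontinuity in (ii) from $t\in\mathcal Y$ to every rational $s$, because $\pi$ depends on neither $x$ nor $n$.
\item Compactness then gives $\tilde F(x,t)=0$ for $t<\min\mathcal Y$ and $\tilde F(x,t)=1$ for $t\ge\max\mathcal Y$, without adding $\max\mathcal Y$ to the grid.
\end{itemize}
With this grid, the rest of your argument goes through unchanged.
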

Condition (ii) of Theorem~\ref{th:convergence} is to be understood in
an almost-sure sense under $\PP$. In practice we construct credible bands on a finite evaluation grid in $x$, where (ii) reduces to continuity of $F_n(\cdot,t)$ on a finite set and is trivially satisfied.
Note that condition~(iii) of Theorem~\ref{th:convergence} is weaker
than the quasi-martingale condition
$\sum_{n\geq 1}\EE[|\EE[\Delta_n\mid Z_{1:n-1}]|]<\infty$
\citep{fortini26principled}, as it asks
only that the signed tail sums of the conditional drifts $b_n$ vanish, without requiring
absolute summability.
Theorem~\ref{th:convergence} can be further strengthened by adding a
conditional independence assumption on the covariate process, as stated in the
next theorem.

\begin{theorem}\label{th:joint}
  Under the assumptions (i)--(iii) of Theorem~\ref{th:convergence} and \textbf{(iv)} for every $n\geq 1$, $X_{n+1}$ is conditionally independent of $Y_{1:n}$ given $X_{1:n}$, it holds $\PP$-a.s. that the conditional CDF
  $\PP[Y_{n+1}\leq t_1,\dots,Y_{n+k}\leq t_k\mid X_{n+1:n+k}=x_{1:k}, Z_{1:n}]$
  converges weakly to $\prod_{i=1}^k \tilde F(x_i,t_i)$ for every
  $x_1,\dots,x_k$.
\end{theorem}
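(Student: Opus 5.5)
The plan is to prove the claim by induction on $k$. The key idea is that condition (iv) lets us replace conditioning on future covariates by conditioning on the current history only. For $k=1$ the conditional CDF is exactly $F_n(x_1,t_1)$, so the claim is Theorem~\ref{th:convergence}, read as pointwise convergence at continuity points of $\tilde F(x_1,\cdot)$.

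For general $k$, we first use the tower property to write the joint conditional probability as
\[
\EE\Bigl[\mathbf 1\{Y_{n+1}\le t_1,\dots,Y_{n+k-1}\le t_{k-1}\}\,F_{n+k-1}(x_k,t_k)\,\Big|\, X_{n+1:n+k}=x_{1:k},Z_{1:n}\Bigr].
\]
This step needs (iv) in the form ``$X_{n+k}$ is conditionally independent of $Y_{n+1:n+k-1}$ given $X_{1:n+k-1}$ and $Y_{1:n}$.'' We extract it from (iv) applied at step $n+k-1$, which gives $X_{n+k}\perp Y_{1:n+k-1}\mid X_{1:n+k-1}$, together with the weak union property of conditional independence. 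The same property shows that the conditioning on $X_{n+k}=x_k$ can be dropped from the law of $(Y_{n+1},\dots,Y_{n+k-1})$. Hence that law given $(X_{n+1:n+k-1}=x_{1:k-1},Z_{1:n})$ is exactly the object covered by the induction hypothesis.

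Next we split the expression above as
\[
F_n(x_k,t_k)\cdot \PP[Y_{n+1}\le t_1,\dots\mid x_{1:k-1},Z_{1:n}] \;+\; R_n,
\]
where
\[
R_n=\EE\bigl[\mathbf 1\{\cdot\}\,(F_{n+k-1}(x_k,t_k)-F_n(x_k,t_k))\mid\cdots\bigr].
\]
The first term converges to $\prod_i\tilde F(x_i,t_i)$ by the induction hypothesis and Theorem~\ref{th:convergence}, provided each $t_i$ is a continuity point.

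The main obstacle is showing $R_n\to0$ almost surely. Here $F_{n+k-1}-F_n=\sum_{j=n+1}^{n+k-1}\Delta_j$ is a finite sum of future increments, evaluated along a path whose intermediate covariates are pinned at the fixed points $x_1,\dots,x_{k-1}$. These are not the random covariates under which the a.s.\ statement of Theorem~\ref{th:convergence} holds.

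As a first attempt, we would bound $|R_n|\le \EE[\,|F_{n+k-1}(x_k,t_k)-F_n(x_k,t_k)|\mid\cdots]$ and show the right side tends to zero. We would try two routes:
\begin{itemize}[nosep,leftmargin=*]
\item Use the equicontinuity (ii) together with the a.s.\ convergence $F_m(x,\cdot)\to\tilde F(x,\cdot)$ for all $x$ simultaneously, which is Theorem~\ref{th:convergence} upgraded via (ii) to locally uniform convergence in $x$. With a uniform bound, conditional dominated convergence (L\'evy's upward theorem for conditional expectations of a.s.\ vanishing bounded variables along the filtration) gives $R_n\to0$.
\item Alternatively, use condition (iii) to control the drift part of $\sum_j\Delta_j$, and a martingale-difference bound for the remainder.
\end{itemize}
The delicate point is justifying the conditioning on the fixed values $x_{1:k-1}$, which requires working with regular versions and the continuity in $x$ from (ii). We expect to spend most of the effort there.

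Finally, to pass from convergence at continuity points to weak convergence of the $k$-dimensional CDFs, we note that a countable dense set of such points suffices, since $\mathcal Y$ is compact and the limit is a product of CDFs. Taking a countable intersection of null sets then gives a single $\PP$-null exceptional set.
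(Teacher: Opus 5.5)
Your skeleton is the paper's. It runs an induction on $k$ and uses (iv), via weak union as you say, to drop $X_{n+k}$ from the conditioning of $Y_{n+1:n+k-1}$. The joint law then factors into the one-step predictive on the augmented history $\bigl(Z_{1:n},(x_1,y_1),\dots,(x_{k-1},y_{k-1})\bigr)$ times the $(k-1)$-dimensional law covered by the induction hypothesis. (A minor point: the tower-property display itself needs no (iv); only dropping $X_{n+k}$ does.)

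The paper then simply asserts that the first factor converges to $\tilde F_{x_k}$. Your claim $R_n\to0$ is exactly that assertion, and you rightly flag it as the whole difficulty. However, the proposal leaves it unproved, and neither of your routes can prove it:
\begin{itemize}
\item Theorem~\ref{th:convergence} and condition (iii) hold along $\PP$-typical paths. $R_n$ lives on histories whose last $k-1$ covariates are pinned, and for continuous covariates these histories form a $\PP$-null set, so nothing transfers.
\item Condition (ii) is equicontinuity in the \emph{query} covariate with the history held fixed. It says nothing about how $F_{n+k-1}$ depends on the \emph{history} covariates $x_{1:k-1}$.
\item Hunt's lemma (your conditional dominated convergence step) gives convergence at the realized $X_{n+1:n+k-1}$, which are redrawn at every $n$, not at fixed $x_{1:k-1}$.
\item The martingale route gives at best $\EE[\Delta_{n+1}(x,t)^2\mid Z_{1:n}]\to0$ $\PP$-a.s. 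This follows from a.s.\ convergence of the bounded-increment martingale $L_n$ in the proof of Theorem~\ref{th:convergence}, plus $b_n\to0$. But it averages over $X_{n+1}$ drawn from its conditional law, and says nothing at a pinned $x_1$ unless, e.g., that law charges $x_1$ with mass bounded away from zero.
\end{itemize}

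In fact this step cannot be closed from (i)--(iv) alone, for the version of the conditional law that the factorization produces; this is a genuine gap, shared by the paper's proof. Let $X_i$ be i.i.d.\ uniform on $[0,1]$, independent of $Y_i$ i.i.d.\ Bernoulli$(1/2)$. Take the predictive rule $g_n(x)=1/2$ unless $X_n=1/2$, in which case $g_n(x)=1$ for all $x$. This rule has the following properties:
\begin{itemize}
\item it induces the same $\PP$;
\item it is constant in $x$, so (ii) holds;
\item it has zero drift $\PP$-a.s., so (iii) holds;
\item it satisfies (iv).
\end{itemize}
Yet for $k=2$, $x_1=1/2$ and $t_1=t_2=1/2$ (continuity points of $\tilde F$), the factorization you and the paper both use gives
\[
\PP[Y_{n+1}=0,\,Y_{n+2}=0\mid X_{n+1}=1/2,\,X_{n+2}=x_2,\,Z_{1:n}]=\tfrac12\cdot 0=0
\]
for every $n$, not $\tfrac14$. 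Equivalently, $R_n=-\tfrac14$.

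So an extra hypothesis is needed. One option is equicontinuity of the predictive in the history covariates, uniformly in $n$, together with a support condition on the conditional law of future covariates, so that the averaged bound above can be localised at $x_{1:k-1}$. Alternatively, at the realized covariates $X_{n+1:n+k-1}$ Hunt's lemma does dispose of $R_n$, but that is a different statement.

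The same issue affects your final step. The countable intersection handles the $t_i$, but the exceptional set for $R_n$ may depend on the uncountably many $x_{1:k-1}$, for which no equicontinuity is available.
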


\subsection{Predictive CLT}
Theorem~\ref{th:convergence} applied at each $x_i$ gives, $\PP$-a.s.,
componentwise weak convergence $F_n(x_i,\cdot)\to\tilde F(x_i,\cdot)$ for
$i=1,\dots,m$. Since $m$ is finite, joint weak convergence follows:
$\PP$-a.s., $\Fb_n(\xb,\cdot)\to\tilde{\Fb}(\xb,\cdot)$ weakly, with
$\tilde{\Fb}(\xb,\cdot):=[\tilde F(x_1,\cdot),\dots,\tilde F(x_m,\cdot)]^\top$.
The next theorem states that the posterior of $\tilde\Fb(\xb,\tb)$ given
$Z_{1:n}$ is, under additional conditions, asymptotically Gaussian.

\begin{theorem}\label{th:ascondmult}
Let $\Deltab_n(\xb,\tb):=\Fb_n(\xb,\tb)-\Fb_{n-1}(\xb,\tb)$ be the vector of one-step predictive updates, with $\Fb_n$ as in~\eqref{eq:multdef}. Under the
assumptions of Theorem~\ref{th:convergence}, suppose $\tb$ is a continuity
point of $\EE[\tilde\Fb(\xb,\cdot)]$, and there exists
$\gamma\in(0,1]$ for which the following hold:
    \begin{description}[itemsep=2pt,topsep=2pt,parsep=0pt]
        \item[$(ia)$] For $i=1,\dots,m$, \;
       $n^{\gamma/2}\sum_{k> n}\EE(\Delta_k(x_i,t_i)\mid Z_{1:k-1})$ and
$  n^{\gamma/2} \sup_{k\geq n}|\EE(\Delta_k(x_i,t_i)\mid Z_{1:k-1})|$ converge to zero $\PP$-a.s.\ and are dominated in $L^1$;
        \item[$(ib)$] For $i=1,\dots,m$, \;
   $ n^{\gamma}\sum_{k\geq n}\EE(\Delta_k(x_i,t_i)\mid Z_{1:k-1})^2\rightarrow 0$ \; $\PP$-a.s.;
        \item[$(ii)$] For $i=1,\dots,m$, \; $ \EE\bigl[\sup_{k\geq 1}k^{\gamma/2} |\Delta_k(x_i,t_i)|\bigr]<+\infty$;
        \item[$(iii)$] the matrix of inflated outer product of residuals
        $\Rb_n(\xb,\tb):=n^\gamma\sum_{k\geq n+1} \Deltab_k(\xb,\tb)\Deltab_k(\xb,\tb)^\top$
        converges $\PP$-a.s.\ to a positive definite random matrix
        $\Vb(\xb,\tb)$ as $n\to\infty$.
    \end{description}
Then $\mathcal L\bigl(n^{\gamma/2}(\Fb_n(\xb,\tb) - \tilde \Fb(\xb,\tb)) \mid Z_{1:n}\bigr) \to \mathcal N_m(\mathbf{0},\Vb(\xb,\tb))$, $\PP$-a.s.
\end{theorem}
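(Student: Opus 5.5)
We decompose the tail sum into a martingale part and a drift part and apply a conditional martingale CLT to the martingale part, with all statements conditional on $Z_{1:n}$. Throughout, write $\Deltab_k:=\Deltab_k(\xb,\tb)$ and $\mathbf b_k:=\EE(\Deltab_k\mid Z_{1:k-1})$, and set $\mathbf M_k:=\Deltab_k-\mathbf b_k$, a martingale difference sequence for the filtration $\mathcal G_k=\sigma(Z_{1:k})$.

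\textbf{Step 1: the identity.} Theorem~\ref{th:convergence} gives $F_k(x_i,\cdot)\to\tilde F(x_i,\cdot)$ weakly $\PP$-a.s. Since $\tb$ is a continuity point of $\EE[\tilde\Fb(\xb,\cdot)]$, and $\tilde F$ is monotone in $t$, $\tb$ is a.s.\ a continuity point of $\tilde\Fb(\xb,\cdot)$. Hence $\Fb_k(\xb,\tb)\to\tilde\Fb(\xb,\tb)$ a.s., and we may write
\[
n^{\gamma/2}\bigl(\tilde\Fb-\Fb_n\bigr)=n^{\gamma/2}\sum_{k>n}\mathbf M_k+n^{\gamma/2}\sum_{k>n}\mathbf b_k .
\]
The drift term tends to $0$ a.s.\ by $(ia)$. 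Because it is $L^1$-dominated, conditional dominated convergence also gives $\EE\bigl[|n^{\gamma/2}\sum_{k>n}\mathbf b_k|\mid \mathcal G_n\bigr]\to0$ a.s. It therefore suffices to prove that $\mathcal L\bigl(n^{\gamma/2}\sum_{k>n}\mathbf M_k\mid\mathcal G_n\bigr)\to\mathcal N_m(\mathbf 0,\Vb)$ a.s. The sign flip is harmless since the limit is centred Gaussian.

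\textbf{Step 2: the martingale part.} We will invoke the almost-sure conditional CLT for martingale tail sums of Crimaldi (2009), in the form used by \citet{fortini20quasibayes} and \citet{fortini26principled}. Its hypotheses are that (a) $\EE\bigl[\sup_{k\ge1}k^{\gamma/2}|\mathbf M_k|\bigr]<\infty$ and (b) $n^\gamma\sum_{k>n}\mathbf M_k\mathbf M_k^\top\to\Vb$ a.s.

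For (a), note $|\mathbf M_k|\le|\Deltab_k|+|\mathbf b_k|$. The first sup is integrable by $(ii)$. The second is bounded, after shifting the index, by the $L^1$-dominating variable for $n^{\gamma/2}\sup_{k\ge n}|\mathbf b_k|$ in $(ia)$.

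For (b), expand
\[
\mathbf M_k\mathbf M_k^\top=\Deltab_k\Deltab_k^\top-\Deltab_k\mathbf b_k^\top-\mathbf b_k\Deltab_k^\top+\mathbf b_k\mathbf b_k^\top .
\]
The first term gives $\Rb_n\to\Vb$ by $(iii)$. The last term vanishes under $n^\gamma$ scaling by $(ib)$. For the cross terms, Cauchy--Schwarz gives
\[
n^\gamma\Bigl|\sum_{k>n}\Delta_{k,i}b_{k,j}\Bigr|\le\Bigl(n^\gamma\sum_{k>n}\Delta_{k,i}^2\Bigr)^{1/2}\Bigl(n^\gamma\sum_{k>n}b_{k,j}^2\Bigr)^{1/2}.
\]
The first factor is bounded by $(iii)$ and the second tends to $0$ by $(ib)$.

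Crimaldi's theorem then yields the conditional Gaussian limit for the martingale tails. A Slutsky-type argument for conditional laws, using Step 1's drift control (a.s.\ convergence of the drift, which is $\mathcal G_n$-measurable in its conditional-expectation form), transfers the limit to $n^{\gamma/2}(\Fb_n-\tilde\Fb)$. This proves Theorem~\ref{th:ascondmult}.

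\textbf{Main obstacle.} The delicate point is the drift term. The tail sum $\sum_{k>n}\mathbf b_k$ is not $\mathcal G_n$-measurable, so its a.s.\ vanishing does not immediately make it negligible under the conditional law given $\mathcal G_n$. This is exactly why $(ia)$ demands both a.s.\ convergence and $L^1$ domination. I would handle it with the lemma that, when $U_n\to0$ a.s.\ and $\sup_n|U_n|\in L^1$, one has $\EE(|U_n|\wedge1\mid\mathcal G_n)\to0$ a.s. This follows from the generalized (Hunt) martingale convergence theorem, and it gives conditional convergence in probability, which suffices for the Slutsky step.

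A secondary check is that Crimaldi's theorem is stated for martingales converging in $L^1$, so that $\sum_{k>n}\mathbf M_k$ is well defined. This follows because $\Fb_k$ is bounded and the drift tail sums converge a.s.\ with domination.
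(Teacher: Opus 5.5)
Your proposal is correct and is essentially the paper's proof. The paper reduces to scalar projections $\ub^\top\Fb_n(\xb,\tb)$ by Cram\'er--Wold and applies Theorem~\ref{th:berti_new}, which is exactly your martingale-plus-drift decomposition followed by Crimaldi's almost-sure conditional CLT: $(ii)$ and $(ia)$ give the $L^1$-dominated supremum, and $(iii)$, $(ib)$ and Cauchy--Schwarz give the quadratic-variation limit. The only differences are that you work with the vector martingale directly, and that you spell out the Hunt-lemma step ($\EE(|U_n|\wedge 1\mid\mathcal G_n)\to 0$ a.s.) needed to discard the non-$\mathcal G_n$-measurable drift tail under the conditional law, which the paper passes over by only noting that this term tends to zero $\PP$-a.s.
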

The requirement that $\tb$ be a continuity point of $\EE[\tilde{\Fb}(\xb,\cdot)]$ may seem purely theoretical, since $\tilde{\Fb}$ is unknown. However, for each $i=1,\dots,m$, $\EE[\tilde F(x_i,\cdot)]$ is a cumulative distribution function and is therefore continuous at almost every $t_i$. In binary classification, $\tilde F(x,\cdot)$ is constant on $(0,1)$, so any $t\in(0,1)$ is a continuity point and Theorem~\ref{th:ascondmult} at such $t$ gives a CLT for $g_n(x)=1-F_n(x,t)$.

To apply the predictive CLT we need an estimator of the limiting variance
$\Vb(\xb,\tb)$. We use two: $\Vb_n$ in~\eqref{eq:Vn_xA}, computable directly
from the observed history; and a permutation-invariant alternative $\Ub_n$
that integrates over the next observation $Z_{n+1}$. Appendix~\ref{app:credible_bands}
formalises both: Theorem~\ref{th:residualmult} gives a consistency
relationship between them (if $\Ub_n\to\Vb$ a.s., so does $\Vb_n$), and
Theorem~\ref{th:jointnormmult} uses the resulting self-normalised predictive CLTs to
construct pointwise and simultaneous asymptotic credible bands.
Section~\ref{sec:experiments} reports results with both estimators.

\textbf{Checking the conditions.}
Applying the predictive CLT to a BFT requires verifying four pathwise
conditions (rendered as five diagnostic panels in
Appendix~\ref{app:theorycheck}, which splits condition~$(ia)$ into its
tail-sum and pointwise parts) on the predictive rule's conditional drifts $b_k$ and
increments $\Delta_k$: the signed tail-sum condition~(iii) of
Theorem~\ref{th:convergence}, which gives the rule a well-defined limit;
conditions $(ia)$ and $(ib)$ of Theorem~\ref{th:ascondmult}, requiring
$b_n$ to vanish at rate~$n^{\gamma/2}$; and condition $(iii)$ of
Theorem~\ref{th:ascondmult}, which pins $\gamma$
(Section~\ref{sec:methodology}). The domination in $L^1$ in
condition $(ia)$ and the moment condition $(ii)$ of
Theorem~\ref{th:ascondmult} are assumed, as they cannot be diagnosed
pathwise (Appendix~\ref{app:theorycheck}). We refer to these
collectively as our \textbf{signed conditions}, named for the signed sums
of $b_k$ in conditions $(iii)$ of Theorem~\ref{th:convergence} and
$(ia)$ of Theorem~\ref{th:ascondmult}. The next section describes checking these conditions on Beta-Bernoulli BFTs.

\section{Experiments}
\label{sec:experiments}

This section reports signed-condition diagnostics on Beta-Bernoulli BFTs (Appendix~\ref{app:theorycheck}), frequentist coverage of credible bands on synthetic data-generating processes (DGPs; Appendix~\ref{app:coverage}), and entropic UD on classification tasks (Appendix~\ref{app:entropic_ud}). The appendix additionally reports regression and classification with a gap in the observed covariate range on seven DGPs: credible bands respond visibly to the gap on all DGPs except Probit, whose response is muted (figures in Appendix~\ref{app:gap}). The appendix also contains illustrations on real data: PSID labour-force participation (Figure~\ref{fig:lfp_intro}) and silicon-carbide fibre strength (Appendix~\ref{app:real_data}).

\textbf{Signed-condition diagnostics.}\label{sec:bb-diagnostics}
We meta-learn two Beta-Bernoulli BFTs (600-step and 50k-step) on $\tilde\theta\sim\mathrm{Beta}(1,1)$, $Y_{1:T}\mid\tilde\theta$ i.i.d.\ $\mathrm{Bernoulli}(\tilde\theta)$. The binary outcome and absence of covariates give conditional drift $b_k$ in closed form, avoiding the Monte Carlo noise in $b_k$ that left the tests of the quasi-martingale conditions on TabPFN in \citet{fortini26principled} inconclusive. The diagnostics are consistent with the signed conditions at $\gamma=1$ on both BFTs, with residual drift bias visible on the 600-step BFT; the quasi-martingale CLT condition (Q2) at $\gamma=1$ holds on neither. This illustrates empirically the gap between the signed and quasi-martingale conditions. Full details are in Appendix~\ref{app:theorycheck}.

\textbf{Frequentist coverage.}\label{sec:frequentist}
A standard diagnostic for Bayesian uncertainty quantification is
\emph{frequentist coverage}: under repeated draws of a dataset from a
DGP, does a $(1-\alpha)$ credible set contain
the truth with probability close to $(1-\alpha)$? We evaluate coverage
and width on synthetic DGPs in $d=10$ covariate dimensions, comparing
our $\Vb_n$ and $\Ub_n$ bands against (i) the bootstrap, which
resamples $z_{1:n}$ with replacement and refits TabPFN per draw
($B=200$ samples), and (ii) the predictive Monte Carlo method of
\citet{nagler25uncertainty} (NR), which initialises a Gaussian copula
from TabPFN at a fixed query covariate and rolls the copula forward to
draw posterior samples. We restrict the comparison to these two
because deep ensembles, MC dropout, and Bayesian neural networks
require model internals or retraining, impractical for an off-the-shelf foundation model, and
conformal prediction provides no aleatoric/epistemic decomposition;
Section~\ref{sec:related} elaborates. Full setup, additional $\alpha$
values, and extra DGPs are in Appendix~\ref{app:coverage}.

\begin{table}[!ht]
  \centering
  \scriptsize
  \caption{Frequentist coverage and width of pointwise and simultaneous
    95\% credible bands for three DGPs at three sample sizes $n$. Four
    methods: our predictive CLT with $\Vb_n$ or $\Ub_n$, the bootstrap
    (Boot.), and \citet{nagler25uncertainty} (NR). The $\Vb_n$ and
    $\Ub_n$ bands approach nominal coverage as $n$ grows, with widths
    shrinking for $\Vb_n$ on the Gaussian DGPs; $\Ub_n$ is consistently more conservative than
    $\Vb_n$. The bootstrap under-covers pointwise at generally wider
    bands. NR produces wide bands on the Gaussian
    DGPs; on the Poisson DGP, its simultaneous coverage collapses to
    $0.00$ by $n=500$.}
  \label{tab:coverage}
\setlength{\tabcolsep}{3pt}
\begin{tabular}{llcccccccccccccccc}
  \toprule
  & & \multicolumn{2}{c}{$\mathbf{V}_n$ Point.} & \multicolumn{2}{c}{$\mathbf{U}_n$ Point.} & \multicolumn{2}{c}{Boot. Point.} & \multicolumn{2}{c}{NR Point.} & \multicolumn{2}{c}{$\mathbf{V}_n$ Simul.} & \multicolumn{2}{c}{$\mathbf{U}_n$ Simul.} & \multicolumn{2}{c}{Boot. Simul.} & \multicolumn{2}{c}{NR Simul.} \\
  \cmidrule(lr){3-4} \cmidrule(lr){5-6} \cmidrule(lr){7-8} \cmidrule(lr){9-10} \cmidrule(lr){11-12} \cmidrule(lr){13-14} \cmidrule(lr){15-16} \cmidrule(lr){17-18}
  DGP & $n$ & Rate & Width & Rate & Width & Rate & Width & Rate & Width & Rate & Width & Rate & Width & Rate & Width & Rate & Width \\
  \midrule
  \multirow{3}{*}{Linear}      & 200  & 0.97 & 0.39 & 0.98 & 0.47 & 0.93 & 0.42 & 0.98 & 0.85 & 0.88 & 0.67 & 0.98 & 0.81 & 1.00 & 0.82 & 1.00 & 0.98 \\
                               & 500  & 0.99 & 0.30 & 1.00 & 0.62 & 0.93 & 0.34 & 0.99 & 0.70 & 1.00 & 0.53 & 1.00 & 1.04 & 1.00 & 0.76 & 1.00 & 0.98 \\
                               & 1000 & 1.00 & 0.25 & 1.00 & 0.53 & 0.91 & 0.30 & 1.00 & 0.55 & 1.00 & 0.44 & 1.00 & 0.89 & 1.00 & 0.73 & 1.00 & 0.98 \\
  \midrule
  \multirow{3}{*}{Dependent}   & 200  & 0.90 & 0.38 & 0.96 & 0.51 & 0.94 & 0.47 & 0.97 & 0.89 & 0.68 & 0.65 & 0.90 & 0.88 & 1.00 & 0.87 & 0.98 & 0.96 \\
                               & 500  & 0.97 & 0.33 & 1.00 & 0.73 & 0.93 & 0.39 & 0.99 & 0.68 & 0.92 & 0.59 & 1.00 & 1.24 & 1.00 & 0.83 & 1.00 & 0.97 \\
                               & 1000 & 0.99 & 0.29 & 1.00 & 0.64 & 0.92 & 0.35 & 1.00 & 0.50 & 1.00 & 0.50 & 1.00 & 1.08 & 1.00 & 0.80 & 1.00 & 0.95 \\
  \midrule
  \multirow{3}{*}{Poisson}     & 200  & 0.87 & 0.07 & 0.90 & 0.08 & 0.90 & 0.13 & 0.30 & 0.10 & 0.48 & 0.11 & 0.68 & 0.14 & 1.00 & 0.45 & 0.10 & 0.10 \\
                               & 500  & 0.90 & 0.07 & 0.99 & 0.18 & 0.86 & 0.11 & 0.12 & 0.01 & 0.62 & 0.13 & 0.94 & 0.30 & 1.00 & 0.39 & 0.00 & 0.02 \\
                               & 1000 & 0.98 & 0.10 & 1.00 & 0.25 & 0.85 & 0.10 & 0.07 & 0.00 & 0.96 & 0.18 & 1.00 & 0.43 & 1.00 & 0.38 & 0.00 & 0.01 \\
  \bottomrule
\end{tabular}
\end{table}

Table~\ref{tab:coverage} reports coverage and width at $\alpha=0.05$ over $50$ Monte~Carlo replications, giving Binomial standard error $\sqrt{p(1-p)/50}\le 1/(2\sqrt{50})\approx 0.07$ on each coverage estimate (${\sim}\,0.03$ near nominal). Both $\Vb_n$ and $\Ub_n$ approach nominal coverage as $n$ grows, and meet or exceed it at $n=1000$ on every DGP. In nearly every cell, $\Vb_n$ produces the narrowest bands among the methods that retain coverage. $\Ub_n$'s bands are wider than $\Vb_n$'s across DGPs and, on the Linear, Dependent, and Poisson DGPs, can exceed the bootstrap (e.g., $0.64$ vs.\ $0.35$ pointwise on Dependent at $n=1000$).
NR \citep{nagler25uncertainty} matches $\Vb_n$/$\Ub_n$ on the Gaussian DGPs
but produces bands roughly twice as wide ($0.55$ vs.\ $0.25$ at $n=1000$ on
the Linear DGP), and on the Poisson DGP the Gaussian copula underlying NR
collapses: the simultaneous coverage drops to $0.00$ while $\Vb_n$ holds
at $0.96$. The bootstrap consistently under-covers pointwise and over-covers
simultaneously.
Widths shrink with $n$ for $\Vb_n$ on the Gaussian DGPs (Linear, Dependent), consistent with the $1/\sqrt{n}$ rate of the predictive CLT.

\textbf{Entropic UD.}\phantomsection\label{sec:entropic_ud}
For binary classification with test covariate $x^\ast$ and label $Y^\ast \in \{0,1\}$, we decompose total predictive entropy \citep[cf.][Eq.~(3)]{jayasekera25variational} as $\mathbb H(Y^\ast \mid x^\ast, z_{1:n}) = U_a + U_e$. The aleatoric term $U_a := \EE[h(\tilde g(x^\ast)) \mid z_{1:n}]$ is the expected entropy of the limiting predictive distribution $\tilde g(x^\ast)$ (with $h(p) = -p\log p - (1-p)\log(1-p)$ the binary entropy); the epistemic term $U_e$ is the remainder. The predictive CLT supplies estimators of $U_a$ and $U_e$; derivations and the extension to multiclass classification are in Appendix~\ref{app:entropy-decomp}.

Absent ground truth for
epistemic uncertainty, we evaluate any decomposition against three
broadly agreed qualitative properties: (i) epistemic uncertainty decreases as
the context length $n$ grows; (ii) epistemic uncertainty is largest far from
observed covariates; and (iii) aleatoric uncertainty dominates near decision
boundaries (classification) or in high-noise regions (regression).
Figure~\ref{fig:entropic-ud} illustrates these on synthetic experiments. The logistic-regression panel exhibits (ii) and (iii), and (i) for in-distribution $x$ (the raw uncertainty curves in Figure~\ref{fig:ud-logreg-context-length-raw} show (i) directly); the two-moons and spirals heatmaps exhibit (iii) at class overlaps, with (i) for two moons shown in Appendix~\ref{app:moons} (Figure~\ref{fig:appendix-moons-grid}). On two moons and spirals, the framework returns low epistemic uncertainty in covariate regions far from the in-context data manifold, where TabPFN reverts to a maximum-entropy predictive distribution. This may be useful for Bayesian optimisation when the optimum should remain close to this data manifold (e.g., to keep candidate molecules stable or non-toxic): standard upper confidence bound (UCB) acquisition can recommend extreme candidates. An acquisition built on our epistemic variance estimator should be less prone to pushing the search into such regions.
Experimental details and additional results are in Appendix~\ref{app:entropic_ud_experiments}.

\begin{figure}[tbp]
  \centering
  \begin{subfigure}[t]{0.28\linewidth}
    \centering
    \includegraphics[width=\linewidth]{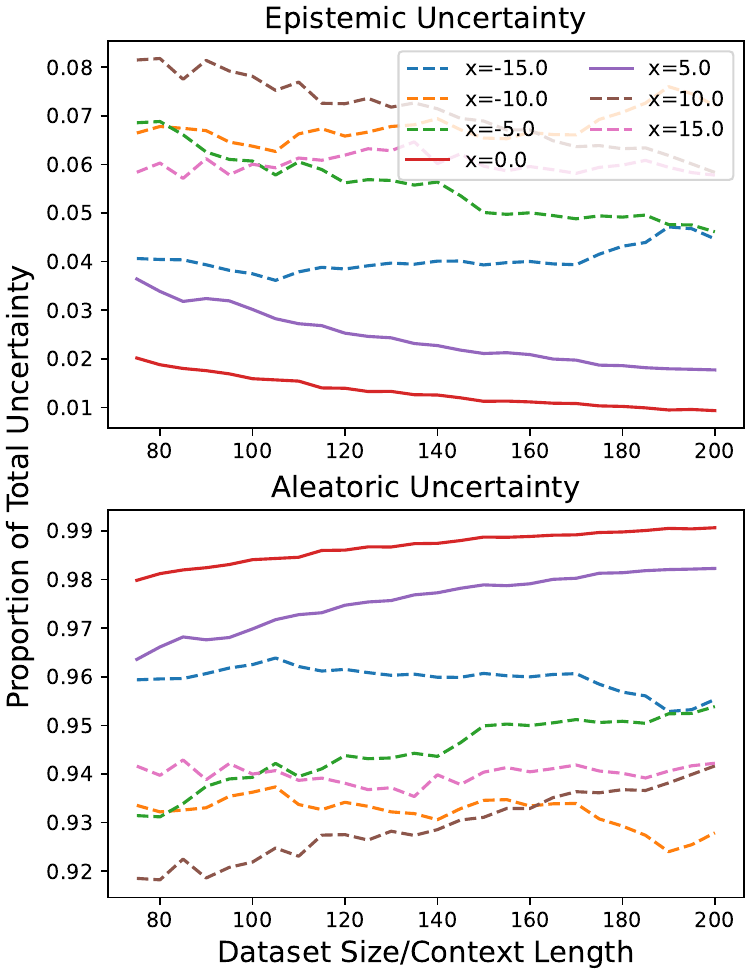}
    \caption{Logistic regression.}
    \label{fig:ud-logreg-context-length}
  \end{subfigure}\hfill
  \begin{subfigure}[t]{0.68\linewidth}
    \centering
    \includegraphics[width=\linewidth]{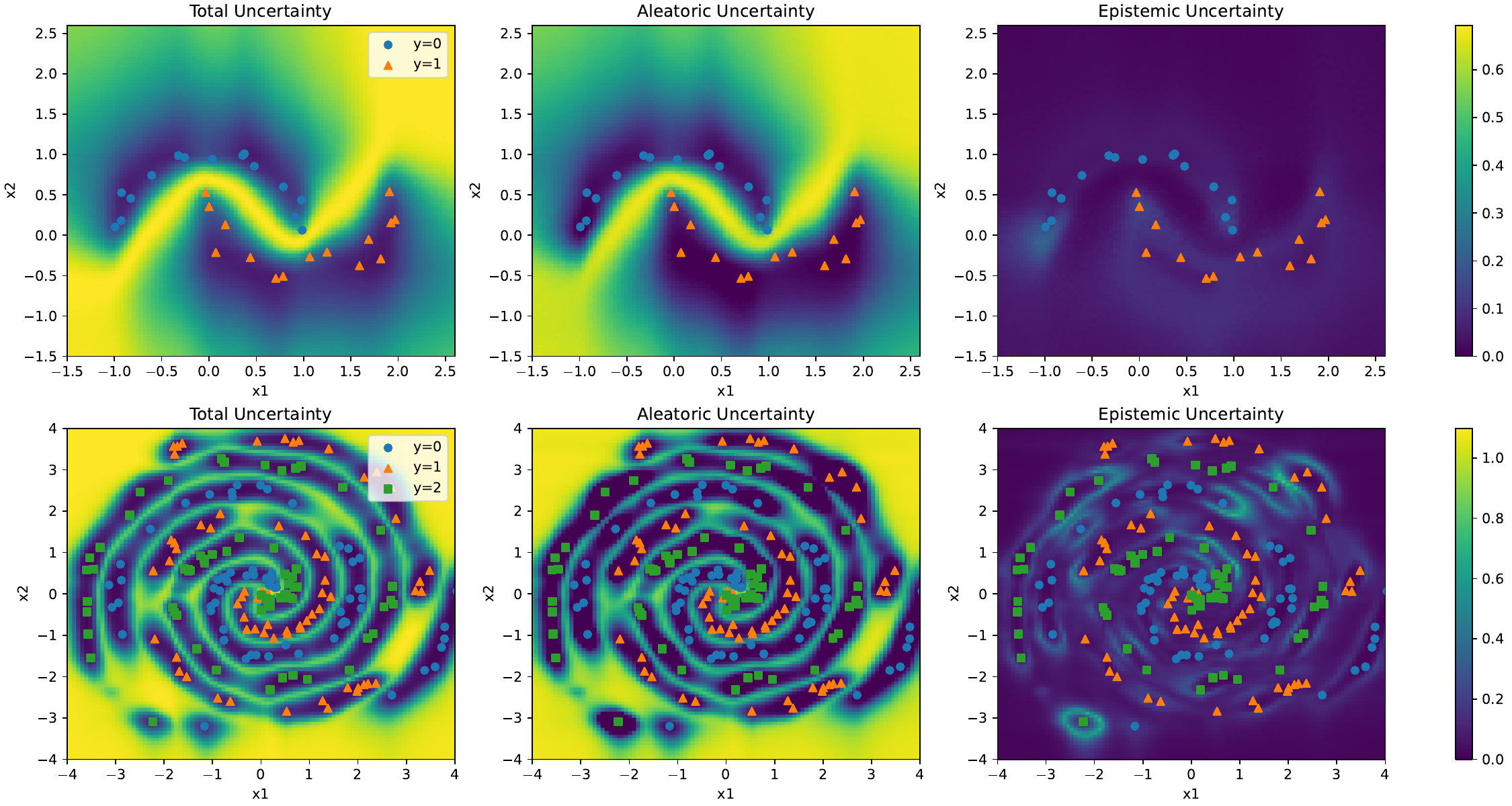}
    \caption{Two moons (top) and three-class spirals (bottom). }
    \label{fig:2moons-spirals}
  \end{subfigure}
  \caption{Entropic UD for TabPFN on synthetic tasks. \emph{Left, logistic
    regression}: aleatoric and epistemic uncertainty proportions versus context
    length for several test covariates $x$, with solid lines for in-distribution
    $x$ and dotted lines for out-of-distribution $x$. For in-distribution $x$,
    the aleatoric share grows and the epistemic share shrinks as $n$ grows;
    epistemic is highest at out-of-distribution
    $x$, and aleatoric peaks at $x=0$, the test point nearest the decision
    boundary. \emph{Right, two
    moons (top) and three-class spirals (bottom)}: total uncertainty peaks
    where classes overlap, attributed to aleatoric. Far from the data TabPFN
    reverts to a maximum-entropy predictive distribution; this region carries high total
    entropy with low epistemic, so the decomposition assigns it to aleatoric.}
  \label{fig:entropic-ud}
\end{figure}

\raggedbottom
\section{Conclusion}
\label{sec:conclusion}

\textbf{Summary.} We presented the first UD for Bayes-filtered transformers with
no access to model internals or meta-learning data, and without
computationally expensive predictive Monte Carlo simulation. The mechanism is a
predictive CLT for supervised settings under our \emph{signed conditions},
among the weakest known in the BPI literature and weaker than the
quasi-martingale conditions of \citet{fortini26principled}. A meta-learned
Beta-Bernoulli BFT is consistent with the
signed conditions while failing the quasi-martingale CLT condition (Q2) at
$\gamma=1$ (Appendix~\ref{app:theorycheck}), the first
positive empirical evidence for any BPI sufficient conditions on a
meta-learned model, narrowing the gap between rigorous BPI theory and
the practical behaviour of modern tabular foundation models. On TabPFN, the resulting credible bands achieve
near-nominal frequentist coverage as context length grows and the entropic decomposition largely matches
standard desiderata, with low epistemic uncertainty far from the data manifold
where TabPFN reverts to a maximum-entropy predictive distribution
(Section~\ref{sec:entropic_ud}); the framework also reproduces textbook Bayesian
credible bands on real data.

\textbf{Limitations.} We did not test the signed conditions empirically on
TabPFN. Diagnosing the stronger quasi-martingale conditions
\citep{fortini26principled} on TabPFN was inconclusive, as Monte~Carlo noise
from the rollouts overwhelmed the signal. The signed conditions on the
conditional drift $b_n$ and the condition on the inflated outer product of
residuals $\Rb_n$ inherit the same obstacle. Monte~Carlo noise also
frustrates empirical estimation of the convergence rate $\gamma$, which
is pinned by condition~$(iii)$ of Theorem~\ref{th:ascondmult} but unknown
for BFTs such as TabPFN; we therefore assumed $\gamma=1$ throughout. Developing diagnostics that survive Monte~Carlo noise and estimating $\gamma$ directly remain the main open problems.

\textbf{Outlook.} Two orthogonal directions can advance the applicability of BPI to BFTs. Theoretically, the sufficient conditions for BPI tools like the predictive CLT can be weakened further; our signed conditions are one step in this direction. Separately, as compute scales, BFTs will be trained closer to their idealised Bayesian limit, making the sufficient conditions easier to satisfy; the Beta-Bernoulli BFT trained for fewer steps in Appendix~\ref{app:theorycheck} already appears closer to violating them than its better-trained counterpart. Moving towards asymptotic partial exchangeability \citep{fortini18notion} would extend the predictive CLT to BFTs meta-learned on partially exchangeable data (e.g., complex random-effects models).


%% file: appendix.tex
\section{Experiment code and default settings}
\label{app:code}

The code is released at \url{https://github.com/weiyaw/ud4pfn}.

For TabPFN we use $\nest=16$ for the frequentist
coverage experiment of Appendix~\ref{app:coverage}; all other TabPFN
experiments use the default $\nest=64$. We otherwise use
\texttt{tabpfn==6.2.0} with default settings
(\texttt{softmax\_temperature=1.0}, default initialisation), except that
TabPFN is instantiated with \texttt{fit\_mode="low\_memory"}. We use
\texttt{tabpfn-v2.5-classifier-v2.5\_default.ckpt} for \tabclf{} and
\texttt{tabpfn-v2.5-regressor-v2.5\_default.ckpt} for \tabreg, both
downloaded from
\url{https://huggingface.co/Prior-Labs/tabpfn\_2\_5/tree/main}.

\textbf{Compute resources.} The frequentist coverage experiment (Section~\ref{sec:frequentist}) required approximately 700 GPU-hours on NVIDIA L40S. All other experiments (Beta-Bernoulli diagnostics, real data illustrations, and UD visualisations) can be reproduced with a few GPU-hours of compute.

\textbf{Licences for assets used.} TabPFN v2.5 is used under the Prior Labs Licence (Apache 2.0 with additional attribution requirement) \citep{priorlabs25tabpfn}. The PSID labour-force participation data are used under the University of Michigan PSID Conditions of Use (\url{https://psidonline.isr.umich.edu/}), which permit academic use with attribution. The silicon-carbide fibre strength data are reproduced from Example~7.4 of \citet{hamada08bayesian} (Springer, 2008) under standard academic fair-use provisions.

\FloatBarrier
\section{Applying the framework to TabPFN}
\label{app:ud4tabpfn}

Let $\tilde P$ represent the latent Bayesian model in TabPFN. Our goal is to
perform, given $z_{1:n}$, posterior inference for $\tilde{P}(x,A)$ via the
predictive CLT. We regard the context $z_{1:n}$ passed into
TabPFN as a table where each row is a unit of observation and each column is a feature.

\textbf{Data ordering.} By design, TabPFN is invariant to \textit{row}
permutations, meaning the prediction $\Pb_n$ depends only on the \textbf{set}
$z_{1:n}$. However, the \textit{trajectory} of updates, e.g.,
$\gb_1,\ldots, \gb_n$ and $\Fb_1,\ldots, \Fb_n$---and thus the asymptotic
variance estimator $\Vb_n$---depends on the order of the observations. In
binary classification, for example, if the rows were sorted by class, the early updates would
be deceptively small. To recover a representative trajectory that reflects the
true epistemic uncertainty of $\tilde{\Pb}(\xb,A)$, we randomly permute the rows of
$z_{1:n}$ before computing the asymptotic variance.

\textbf{Computing \eqref{eq:Vn_xA}.} We construct $\mathbf{V}_n$ by evaluating
TabPFN on expanding (non-empty) prefixes of the randomly permuted context. For
prefixes that lack diversity we substitute the predictive distribution with the
observed statistics: in classification, for single-class prefixes $g_k(x)$ is
set to $1$ if the prefix contains only class $1$ and to $0$ if it contains only
class $0$; in regression, $F_k(x, t)$ is set to the
marginal empirical CDF of $y_{1:k}$ when fewer than two unique values are
present. Otherwise, $g_k(x)$ and $F_k(x, t)$ are obtained via softmax
(temperature=1) over the output logits or the discretised bins covering
$(-\infty, t]$, respectively. Because the prior predictive $\Pb_0$ is undefined for
TabPFN, which requires at least one context token, the implementation of
$\Vb_n$ sums from $k=2$ and normalises by $n-1$; the Beta-Bernoulli
experiments use the full sum with $g_0=1/2$, the analytic prior predictive.

\textbf{Computational efficiency.} Computing $\Vb_n$ requires one forward pass per prefix $z_{1:k}$, $k=1,\ldots,n$. TabPFN's encoder-style architecture re-attends over the full context each call, so a pass on a prefix of size $k$ costs $O(k^2)$ in sequence length. Summing over prefixes,
\begin{equation*}
  \sum_{k=1}^n k^2 \;=\; \frac{n(n+1)(2n+1)}{6} \;\sim\; \frac{n^3}{3},
\end{equation*}
giving an overall $O(n^3)$ cost. With $J$ test queries riding along inside each prefix's forward pass at attention cost $O((k+J)^2)$ for prefix size $k$, the full cost is $O(n^3 + n^2 J + n J^2)$, dominated by the $n^3$ term when $J \ll n$. This cost must be contextualised within TabPFN's operational scope---typically small-to-medium tabular datasets (e.g., $n \le 10{,}000$)---where total wall-clock time remains modest. Nothing in our framework prevents a future TabPFN variant from adopting decoder-style causal masking, which would amortise the forward passes and reduce the overall cost from $O(n^3)$ to $O(n^2)$.
This cost is negligible when compared to
the methodological alternative that still uses a deep neural network: to achieve
comparable UD without a foundation model, a practitioner
would be forced to specify a bespoke Bayesian neural network and perform
computationally intensive inference (e.g., via MCMC or variational inference)
for each new dataset. Our framework allows users to extract rigorous epistemic
uncertainty estimates directly from the meta-learned TabPFN model via in-context
updates, replacing the burdensome tasks of model specification and posterior
approximation with a fast procedure.

\textbf{Asymptotic validity.} Our coverage experiments show the approximation is
accurate by $n=1000$; modern TabPFN variants handle contexts up to
$n=10{,}000$, well beyond this. While theoretical finite-sample bounds under our signed
conditions remain open problems, our extensive empirical coverage
results (Section~\ref{sec:frequentist}) demonstrate that the Gaussian
approximation holds well at moderate $n$ on most data-generating processes,
with the Poisson simultaneous bands requiring larger $n$.

\textbf{Ensembling.} TabPFN is not invariant to \textit{column} permutations; it
natively handles this by averaging predictions over an ensemble of column
shuffles (controlled by \nest, the number of column-permutation averages in TabPFN's built-in ensemble). We utilise this built-in mechanism during the
computation of the asymptotic variance to ensure the individual predictions for
each prefix are stable, thereby reducing the noise in the estimated increments.
Unless otherwise specified, we set $\nest=64$.

\newcommand{\sixhundredbeta}{1.155}
\newcommand{\sixhundredgq}{0.31}
\newcommand{\sixhundredmeanbn}{4.5\times 10^{-6}}
\newcommand{\fiftykbeta}{1.108}
\newcommand{\fiftykgq}{0.22}
\newcommand{\fiftykmeanbn}{1.7\times 10^{-6}}

\FloatBarrier
\section{Asymptotic credible intervals and bands}
\label{app:credible_bands}

Let $\Rb_n(\xb,\tb)$ be defined as in condition~$(iii)$ of Theorem~\ref{th:ascondmult} and let
\begin{equation}\label{eq:unmult}
\Ub_{n}(\xb,\tb)= \frac{n^{\gamma+1} }{\gamma}\EE\left[\Deltab_{n+1}(\xb,\tb)\Deltab_{n+1}(\xb,\tb)^\top\mid Z_{1:n}\right]
\end{equation}
and
\begin{equation}\label{eq:Vnmult}
        \Vb_n(\xb,\tb)=\frac{1}{n
        }\sum_{k=1}^n \frac{k^{\gamma+1}}{\gamma}\Deltab_k(\xb,\tb)\Deltab_k(\xb,\tb)^\top.
\end{equation}

\begin{theorem}\label{th:residualmult}
   If  the following conditions hold:
    \begin{description}
    \item[$(i)$] For $i=1,\dots,m$, $\sum_{n\geq 1} n^{2\gamma}\EE\left[\Delta_n^4(x_i,t_i)\right]<+\infty$;
        \item[$(ii)$] $\Ub_n(\xb,\tb)$ converges $\PP$-a.s.\ to a positive definite random matrix $\Vb(\xb,\tb)$.
    \end{description}
    Then $\Rb_n(\xb,\tb)$ and $\Vb_n(\xb,\tb)$ converge  to $\Vb(\xb,\tb)$  $\PP $-a.s.\ as $n\rightarrow\infty$.
    \end{theorem}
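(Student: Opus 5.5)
The plan is to work entrywise and show that both $\Rb_n$ and $\Vb_n$ are, up to almost surely vanishing martingale errors, weighted averages of $\Ub_k$. Since $\Ub_k\to\Vb$ by (ii), a Kronecker/Toeplitz argument then finishes. Fix $i,j$ and write $D_k:=\Delta_k(x_i,t_i)\Delta_k(x_j,t_j)$. By definition~\eqref{eq:unmult},
\begin{equation*}
\EE[D_{k}\mid Z_{1:k-1}]=\gamma\,(k-1)^{-(\gamma+1)}\,U_{k-1}^{(ij)} .
\end{equation*}
Decompose $D_k=\EE[D_k\mid Z_{1:k-1}]+\xi_k$, where $\xi_k$ is a martingale difference. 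By Cauchy--Schwarz and condition (i),
\begin{equation*}
\EE\xi_k^2\le \EE D_k^2\le \tfrac12\bigl(\EE\Delta_k^4(x_i,t_i)+\EE\Delta_k^4(x_j,t_j)\bigr),
\end{equation*}
so $\sum_k k^{2\gamma}\EE\xi_k^2<\infty$.

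\textbf{Step 1: $\Rb_n$.} Split $n^\gamma\sum_{k>n}D_k$ into a compensator part and a martingale part. The compensator part is
\begin{equation*}
n^\gamma\sum_{k>n}\gamma(k-1)^{-(\gamma+1)}U^{(ij)}_{k-1}.
\end{equation*}
Because $n^\gamma\sum_{k>n}\gamma (k-1)^{-(\gamma+1)}\to1$ and the weights are positive, this is a Toeplitz-type average of the tail of $(U_k^{(ij)})$ and converges to $V^{(ij)}$ a.s. For the martingale part, $n^\gamma\sum_{k>n}\xi_k$, the series $\sum_k k^\gamma\xi_k$ converges a.s. because its square-summability is in $L^2$, by the martingale convergence theorem. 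I then apply the tail version of Kronecker's lemma: if $\sum_k a_k\xi_k$ converges with $a_k=k^\gamma\uparrow\infty$, then $a_n\sum_{k>n}\xi_k\to0$. This follows from Abel summation on the tail sums $r_k:=\sum_{l>k}k^{\gamma}\xi_l$-type remainders. Hence $\Rb_n\to\Vb$ a.s.

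\textbf{Step 2: $\Vb_n$.} Write
\begin{equation*}
\tfrac1n\sum_{k\le n}\tfrac{k^{\gamma+1}}{\gamma}D_k=\tfrac1n\sum_{k\le n}\bigl(\tfrac{k}{k-1}\bigr)^{\gamma+1}U^{(ij)}_{k-1}+\tfrac1n\sum_{k\le n}\tfrac{k^{\gamma+1}}{\gamma}\xi_k .
\end{equation*}
The first term is a Ces\`aro mean of a convergent sequence, since $(k/(k-1))^{\gamma+1}\to1$; the $k=1$ term is finite and negligible. It therefore tends to $V^{(ij)}$. For the second term, set $M_n=\sum_{k\le n}k^{\gamma}\xi_k$, whose increments have variances summable by (i). By Kronecker's lemma with weights $b_k=n$ applied to $\sum_k (k^{\gamma+1}\xi_k)/k$, this converges a.s., so the normalised sum vanishes.

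The main obstacle is the careful tail-Kronecker argument in Step 1, together with checking that the moment condition (i) is exactly what is needed. In particular, the $n^\gamma$ inflation in the tail requires the weights $k^\gamma$ on $\xi_k$, giving the $k^{2\gamma}\EE\Delta_k^4$ summability assumed in (i). Positive definiteness of $\Vb$ is inherited directly from (ii).
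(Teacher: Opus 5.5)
Your proposal is correct and is essentially the paper's argument. The paper applies Theorem~\ref{th:suff} (Lemma~4.1 of Crimaldi, Dai Pra and Minelli) with $W_k=\frac{k^{\gamma+1}}{\gamma}\ub^\top\Deltab_k\Deltab_k^\top\ub$, using part~(b) for $\Rb_n$ and part~(a) for $\Vb_n$; your compensator/martingale split with Toeplitz and (tail-)Kronecker steps is that lemma's proof, written out and done entrywise rather than through quadratic forms over a dense set of directions $\ub$ (the only slip is cosmetic: in Step~2 Kronecker's lemma uses normalisers $b_k=k$, not $b_k=n$).
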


\begin{theorem}\label{th:jointnormmult}
    Under the assumptions of Theorem~\ref{th:convergence}, the assumptions $(ia)$--$(ii)$ of Theorem~\ref{th:ascondmult}, and $(i)$--$(ii)$ of Theorem~\ref{th:residualmult},  with $\tb$ a continuity point for $\EE[\tilde{\Fb}(\xb,\cdot)]$,
    \begin{equation*}
    \mathcal L(n^{\gamma/2}\Ub_n(\xb,\tb)^{-1/2}(\tilde \Fb(\xb,\tb)-\Fb_n(\xb,\tb))\mid Z_{1:n}){\rightarrow}\mathcal N_m({\bf 0},{\bf I}_m)
    \end{equation*}
    and
     \begin{equation*}
    \mathcal L(n^{\gamma/2}\Vb_n(\xb,\tb)^{-1/2}(\tilde \Fb(\xb,\tb)-\Fb_n(\xb,\tb))\mid Z_{1:n}){\rightarrow}\mathcal N_m({\bf 0},{\bf I}_m)
    \end{equation*}
    $\PP$-a.s.\ as $n\rightarrow\infty$.
\end{theorem}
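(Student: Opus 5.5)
The plan is to reduce Theorem~\ref{th:jointnormmult} to Theorem~\ref{th:ascondmult}, using Theorem~\ref{th:residualmult} to supply its missing hypothesis, and then self-normalise with a conditional Slutsky argument. The hypotheses of Theorem~\ref{th:jointnormmult} include everything in Theorem~\ref{th:ascondmult} except condition $(iii)$, namely the almost sure convergence of $\Rb_n(\xb,\tb)$ to a positive definite $\Vb(\xb,\tb)$. Theorem~\ref{th:residualmult} gives exactly this under its conditions $(i)$--$(ii)$, which are assumed. It also gives $\Vb_n(\xb,\tb)\to\Vb(\xb,\tb)$ $\PP$-a.s., and its condition $(ii)$ gives $\Ub_n(\xb,\tb)\to\Vb(\xb,\tb)$ $\PP$-a.s. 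Invoking Theorem~\ref{th:ascondmult}, with $\tb$ a continuity point of $\EE[\tilde\Fb(\xb,\cdot)]$, yields, $\PP$-a.s.,
\[
\mathcal L\bigl(n^{\gamma/2}(\Fb_n(\xb,\tb)-\tilde\Fb(\xb,\tb))\mid Z_{1:n}\bigr)\to\mathcal N_m(\mathbf 0,\Vb(\xb,\tb)).
\]
By symmetry of the centred Gaussian, the same limit holds for $n^{\gamma/2}(\tilde\Fb-\Fb_n)$.

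Next, I fix an $\omega$ in the probability-one set where all three almost sure statements hold. Since $\Vb(\omega)$ is positive definite and $\Ub_n(\omega)\to\Vb(\omega)$, the matrix $\Ub_n(\omega)$ is positive definite for $n$ large. Continuity of the matrix square root and inverse on positive definite matrices then gives $\Ub_n(\omega)^{-1/2}\to\Vb(\omega)^{-1/2}$, and likewise for $\Vb_n(\omega)$. The key observation is that $\Ub_n$ and $\Vb_n$ are $\sigma(Z_{1:n})$-measurable. Conditionally on $Z_{1:n}$ they are therefore deterministic matrices $M_n(\omega)$, and the conditional law of $n^{\gamma/2}M_n(\tilde\Fb-\Fb_n)$ is the pushforward of the conditional law of $n^{\gamma/2}(\tilde\Fb-\Fb_n)$ under the linear map $M_n(\omega)$.

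It then suffices to prove a deterministic lemma: if $\mu_n\to\mu$ weakly on $\mathbb R^m$ and $M_n\to M$ are matrices, then $\mu_n\circ M_n^{-1}\to\mu\circ M^{-1}$. I would prove it via characteristic functions, writing $\hat\mu_n(M_n^\top u)$ and using that weak convergence gives locally uniform convergence of characteristic functions together with continuity of $\hat\mu$. Alternatively, one can realise $\xi_n\sim\mu_n$ and note that $M_n\xi_n - M\xi_n\to0$ in probability by tightness. Applying the lemma with $M=\Vb^{-1/2}$ sends $\mathcal N_m(\mathbf 0,\Vb)$ to $\mathcal N_m(\mathbf 0,\Vb^{-1/2}\Vb\Vb^{-1/2})=\mathcal N_m(\mathbf 0,\mathbf I_m)$, which gives both displays.

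The main obstacle is not analytic but bookkeeping. One must make sure the hypotheses of Theorem~\ref{th:residualmult} are precisely what is needed to supply condition $(iii)$ of Theorem~\ref{th:ascondmult}, with the same limit $\Vb$ appearing in all three statements. One must also handle the conditional-law statements on a single null-set-free event. The measurability of $\Ub_n$ and $\Vb_n$ with respect to $Z_{1:n}$ is what makes the Slutsky step legitimate, so it must be stated explicitly. The positive-definiteness of $\Ub_n$ and $\Vb_n$ holds only eventually, so the statements should be read for $n$ large, on an $\omega$-dependent range.
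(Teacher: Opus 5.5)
Your proposal is correct and follows essentially the same route as the paper: Theorem~\ref{th:residualmult} supplies condition~$(iii)$ of Theorem~\ref{th:ascondmult} with the same limit $\Vb(\xb,\tb)$, and the $\sigma(Z_{1:n})$-measurability of $\Ub_n$ and $\Vb_n$ drives a conditional Slutsky step. The paper packages that step as convergence of the joint conditional characteristic function of $(\Ub_n, n^{\gamma/2}(\Fb_n-\tilde\Fb))$ (and likewise with $\Vb_n$) to that of $\delta_{\Vb}\otimes\mathcal N_m(\mathbf 0,\Vb)$. Your deterministic pushforward lemma spells out the final continuous-mapping step, which the paper leaves implicit.
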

While $\Vb_n$ is computed directly from the observed history, $\Ub_n$ in \eqref{eq:unmult} requires evaluating an expectation over $Z_{n+1}=(X_{n+1},Y_{n+1})$. We approximate this by Monte Carlo: $X_{n+1}$ is sampled from the empirical measure $\frac{1}{n} \sum_{i=1}^n \delta_{x_i}$, and the conditional label $Y_{n+1}\mid X_{n+1},z_{1:n}$ is sampled from the BFT's predictive distribution $P_n(\cdot\mid X_{n+1},z_{1:n})$. Beyond Monte Carlo error, sampling $X_{n+1}$ from the empirical covariate measure substitutes an approximation for the true conditional law of $X_{n+1}$ given $Z_{1:n}$ in \eqref{eq:unmult}. Under the conditions of Theorem~\ref{th:residualmult}, both $\Vb_n$ and $\Ub_n$ serve as valid asymptotic covariance estimators, up to the Monte Carlo and empirical-measure approximations of $\Ub_n$ just described.

Theorem~\ref{th:jointnormmult} and the Portmanteau theorem allow us to construct $(1-\alpha)$ asymptotic credible sets for the limiting predictive distribution. For the pointwise case ($m=1$), we obtain intervals:
\begin{equation*}
I_n^{(1)}(x,t)=F_n(x,t)\pm z_{1-\alpha/2}\left(\frac{n}{\gamma}\EE\left[\Delta_{n+1}^2(x,t)\mid Z_{1:n}\right]\right)^{1/2}
\end{equation*}
and
\begin{equation*}
I_n^{(2)}(x,t)=F_n(x,t)\pm z_{1-\alpha/2}\left(\frac{\sum_{k=1}^n \frac{k^{\gamma+1}}{\gamma} \Delta_k^2(x,t)}{n^{\gamma+1}}\right)^{1/2},
\end{equation*}
where $z_{1-\alpha/2}$ is the standard normal $(1-\alpha/2)$-quantile. For $j \in \{1,2\}$, it holds that:
\begin{equation*}
  \liminf_{n\rightarrow\infty} \PP \left[\tilde F(x,t)\in I_n^{(j)}(x,t) \mid Z_{1:n}  \right]\geq 1-\alpha.
\end{equation*}
For the simultaneous case ($m>1$), we construct studentised sup-$t$ bands \citep{olea19simultaneous}. Let $\widehat\Sigma_n \in \{\Vb_n(\xb,\tb)/n^\gamma,\,\Ub_n(\xb,\tb)/n^\gamma\}$ denote the chosen plug-in covariance on the grid, and let $s_j = \sqrt{\widehat\Sigma_{n,jj}}$ be the pointwise standard errors. Draw
\begin{equation*}
  W^{(\ell)} \sim \mathcal N_m\!\bigl(0,\widehat\Sigma_n\bigr), \qquad \ell=1,\dots,L,
\end{equation*}
and compute $T^{(\ell)} := \max_{1\le j\le m} |W^{(\ell)}_j|/s_j$. Let $c_{1-\alpha}$ be the empirical $(1-\alpha)$-quantile of $\{T^{(\ell)}\}_{\ell=1}^L$. The simultaneous sup-$t$ $(1-\alpha)$ credible band on the grid is
\begin{equation*}
  \bigl[\Pb_{n,j}\,\pm\, c_{1-\alpha}\,s_j\bigr],\qquad j=1,\dots,m,
\end{equation*}
where $\Pb_{n,j}$ is the $j$th component of $\gb_n(\xb)$ (classification) or $\Fb_n(\xb,\tb)$ (regression).

\FloatBarrier
\section{Real data illustrations}
\label{app:real_data}

\subsection{PSID labour-force participation}
\label{app:real_data_psid}

We illustrate our predictive CLT on the University of Michigan Panel Study of Income Dynamics (PSID) labour-force participation dataset studied in Chapter~12.4 of \citet{albert20probability}. The response indicates whether a married woman participates in the labour force, and the covariate is family income excluding her own earnings. The sample size is $n=753$. The textbook's canonical Bayesian analysis fits a logistic regression with a conditional-means prior and performs posterior inference via MCMC. Here, we instead apply TabPFN and construct 95\% credible bands using the predictive CLT.

Figure~\ref{fig:lfp_intro} (in the main text) plots $\gb_n(\xb)$ against income with 95\% credible bands. The estimated participation probability decreases with income; uncertainty is smallest in the central income range and widens as income grows beyond the bulk of the data, with simultaneous bands wider than pointwise bands. The resulting fit is qualitatively similar to the Bayesian logistic regression posterior summaries reported by \citet{albert20probability}, while avoiding explicit prior specification and MCMC.

\subsection{Fibre strength reliability}
\label{app:real_data_fibre}

\begin{figure}[h]
  \centering
  \includegraphics[width=\linewidth]{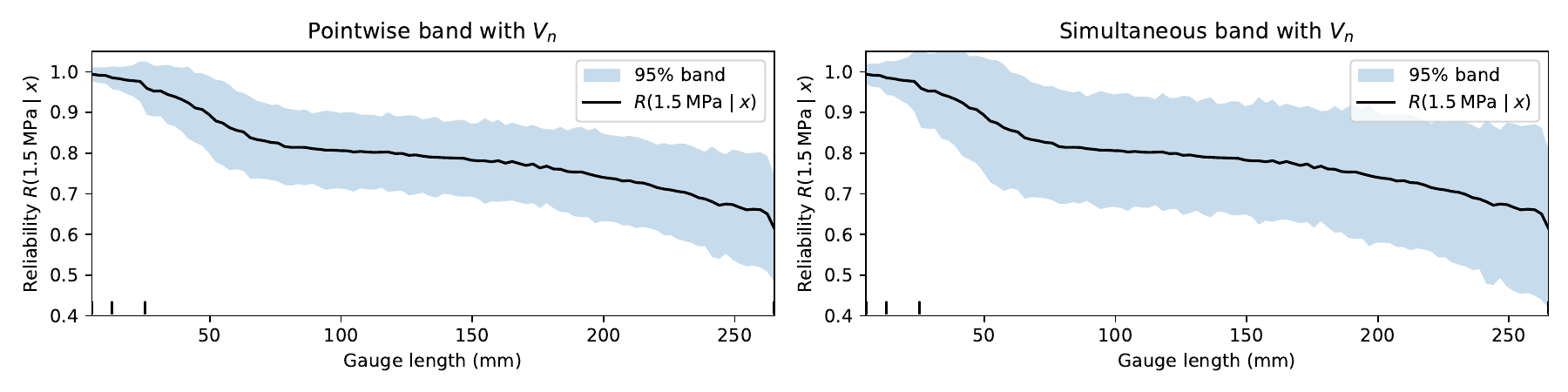}
  \caption{\emph{Left}: pointwise 95\% credible band for TabPFN predicted conditional reliability $R(1.5\text{ MPa}\mid x)$. \emph{Right}: simultaneous 95\% credible band for TabPFN predicted conditional reliability $R(1.5\text{ MPa}\mid x)$. The result is comparable to that obtained in Fig.~7.10 in \citet{hamada08bayesian} which shows length (mm) versus posterior mean and 0.05/0.95 quantiles of the conditional reliability $R(1.5\text{ MPa}\mid x)$. }
  \label{fig:fiber}
\end{figure}

Following Example~7.4 of \citet{hamada08bayesian}, we analyse tensile strengths (MPa) of silicon-carbide fibres tested at four gauge lengths
$x\in\{5.0,\,12.7,\,25.4,\,265\}\,$mm. The sample size of this dataset is $n=214$.
Let $S$ denote fibre strength and $x$ the gauge length. The textbook models the conditional distribution
$$
  S\mid x \sim \operatorname{Weibull}\!\big(\lambda(x),\,\beta(x)\big),
  \log \lambda(x)=\eta_1+\eta_2 \log x,
  \log \beta(x)=\eta_3+\eta_4 \log x,
$$
with independent diffuse Normal priors on $\eta_{1:4}$.
The target is the \emph{conditional exceedance (reliability) probability} at strength threshold $s$,
$$
  R(s\mid x)=\PP(S>s\mid x)
  = \exp\!\Big(-\,\lambda(x)\, s^{\beta(x)}\Big)
  = \exp\!\Big(-\,e^{\eta_1}x^{\eta_2}\, s^{\,e^{\eta_3}x^{\eta_4}}\Big).
$$
The traditional Bayesian workflow, which the textbook illustrates, is to first obtain posterior draws $\{\eta^{(b)}\}_{b=1}^B$ via MCMC.
For a grid of lengths $\mathcal{X}=\{x_1,\dots,x_G\}$ and fixed threshold $s=1.5$\,MPa, they evaluate
$$
  R^{(b)}(s\mid x_g)
  = \exp\!\Big(-\,e^{\eta^{(b)}_1}x_g^{\eta^{(b)}_2}\, s^{\,e^{\eta^{(b)}_3}x_g^{\eta^{(b)}_4}}\Big), \qquad g=1,\dots,G.
$$
At each $x_g$, one can summarise $\{R^{(b)}(s\mid x_g)\}_{b=1}^B$ by its posterior mean (solid curve) and the $0.05$ and $0.95$ posterior quantiles (dashed curves). Fig.~7.10  in \citet{hamada08bayesian} shows that reliability decreases with increasing length, and uncertainty widens slightly as $x$ grows.

We produce an analogous analysis to Fig.~7.10 in \citet{hamada08bayesian} using TabPFN and our predictive CLT. The result is in Figure~\ref{fig:fiber}.

\FloatBarrier
\section{Proof preliminaries}
\label{app:proof_prelim}

This appendix and the next (Appendix~\ref{app:proofs}) prove the theorems stated in the main text (Theorems~\ref{th:convergence}, \ref{th:joint}, and \ref{th:ascondmult}), together with Theorems~\ref{th:residualmult} and~\ref{th:jointnormmult} stated in Appendix~\ref{app:credible_bands}. The present appendix collects general results used across the proofs.

\subsection{A general central limit theorem}
Given a probability space $(\Omega,\mathcal F,\PP )$ and a Polish space $\mathbb S$ with its Borel sigma-algebra $\mathcal S$, a  kernel on $\mathbb S$ is a function $K:\Omega\times\mathcal S\to[0,1]$ satisfying:
\begin{description}
    \item[$(i)$] for every $\omega\in\Omega$, $K(\omega,\cdot)$ is a probability measure on $\mathcal  S$;
    \item[$(ii)$] for each $B\in\mathcal S$, the function $K(\cdot,B)$ is measurable with respect to $\mathcal F$.
\end{description}
A kernel $K$ on $\mathbb R$ is called a Gaussian kernel if $K(\omega,\cdot)$ is a Gaussian distribution with mean $\mu(\omega)$ and variance $\sigma^2(\omega)$, where $\mu$ and $\sigma^2$ are random variables defined on $(\Omega,\mathcal F,\PP )$. We denote the Gaussian kernel by $\mathcal N(\mu,\sigma^2)$ and interpret the Gaussian distribution with zero variance as the degenerate law centred on the mean.

\begin{definition}
    Let $(Z_n)$ be a sequence of random variables  adapted to a filtration $(\mathcal G_n)$ and taking values in a Polish space $\mathbb S$ with its Borel sigma-algebra $\mathcal S$. For every $n\geq 0$, let $K_n$ denote a regular version of the conditional distribution of $Z_{n+1}$, given $\mathcal G_n$. If there exists a kernel $K$ such that the sequence $(K_n(\omega,\cdot))_{n}$ converges weakly to $K(\omega,\cdot)$ for almost every $\omega\in\Omega$, then we say that the sequence $(Z_n)$ converges to $K$ in the sense of almost-sure conditional convergence.
\end{definition}

We first state a central limit theorem for martingales that will be used in the subsequent proofs.

\begin{theorem}[\cite{crimaldi2009}, Theorem A.1]
\label{th:app:almost sure conditional}
For each $n\geq 1$, let $(M_{n,j})_{j\geq 1}$ be a real valued martingale with respect to the filtration $(\mathcal F_{n,j})_{j\geq 1}$, satisfying $M_{n,0}=0$, and converging in $L^1$ to a random variable $M_{n,\infty}$. Set
$$
X_{n,j}=M_{n,j}-M_{n,j-1}\mbox{ for }j\geq 1, \quad S_n=\sum_{j\geq 1}X_{n,j}^2,\quad X_n^*=\sup_{j\geq 1}|X_{n,j}|.
$$
Assume that
\begin{itemize}
    \item[(a)] $(X_n^*)_n$ is dominated in $L^1$ and converges to zero a.s.
    \item[(b)] $(S_n)_n$ converges a.s. to a non-negative random variable $S$.
\end{itemize}
Then the sequence $(M_{n,\infty})$
                    converges to the Gaussian kernel $\mathcal N(0,S)$ in the sense of almost-sure conditional convergence with respect to the filtration $(\mathcal F_{n,1})$.
\end{theorem}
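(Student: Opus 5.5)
The plan is to prove almost-sure conditional convergence through conditional characteristic functions. For a countable dense set of $u\in\mathbb R$, I aim to show
\[
\EE\bigl[e^{\im u M_{n,\infty}}\mid\mathcal F_{n,1}\bigr]\to e^{-u^2S/2}\quad\text{a.s.}
\]
Continuity and tightness, which follows from the boundedness in probability of $M_{n,\infty}$ given $\mathcal F_{n,1}$, then upgrade this to weak convergence of the regular conditional laws for almost every $\omega$.

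\textbf{Caveat on the filtrations.} I will use that $(\mathcal F_{n,1})_n$ is increasing and that $S$ is measurable with respect to $\mathcal F_{\infty}:=\bigvee_n\mathcal F_{n,1}$. This holds in our application, where $\mathcal F_{n,1}=\sigma(Z_{1:n})$. I believe it is implicit in the setting of \citet{crimaldi2009}, but I have not checked it there; without it, the a.s.\ limit of $\EE[e^{-u^2S/2}\mid\mathcal F_{n,1}]$ need not equal $e^{-u^2S/2}$. The tool is the generalised Hunt lemma: if $Y_n\to Y$ a.s.\ with $\sup_n|Y_n|$ integrable, then $\EE[Y_n\mid\mathcal G_n]\to\EE[Y\mid\mathcal G_\infty]$ a.s.\ for an increasing filtration $(\mathcal G_n)$.

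\textbf{Step 1: truncation.} Fix $u$ and a level $c>0$. Let $\tau_n$ be the first $j$ with $\sum_{i\le j}X_{n,i}^2>c$; it is a stopping time in $j$. Work with the stopped martingale $M_{n,j\wedge\tau_n}$. Its increments are bounded in $L^1$-dominated fashion by $X_n^*$, and its quadratic variation is at most $c+(X_n^*)^2$. Since $\PP(\tau_n<\infty)\le\PP(S_n>c)$ and $S_n\to S$ a.s., the difference between stopped and unstopped quantities vanishes on $\{S<c\}$ eventually. Letting $c\uparrow\infty$ along integers handles the full space.

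\textbf{Step 2: exponential comparison.} For real $x$ write $e^{\im x}=(1+\im x)\exp(-x^2/2+r(x))$ with $|r(x)|\le|x|^3$ for small $|x|$. Then
\[
e^{\im u M_{n,\infty}}=e^{\im uX_{n,1}}\,T_n\,\exp\Bigl(-\tfrac{u^2}{2}\sum_{j\ge2}X_{n,j}^2+\sum_{j\ge2}r(uX_{n,j})\Bigr),
\qquad T_n:=\prod_{j\ge2}(1+\im uX_{n,j}).
\]
Three facts drive the argument.
\begin{itemize}
\item $X_{n,1}\to0$ a.s., since it is bounded by $X_n^*$.
\item $\bigl|\sum_j r(uX_{n,j})\bigr|\le |u|^3X_n^*S_n\to0$ a.s.
\item $\EE[T_n\mid\mathcal F_{n,1}]=1$, by the martingale property with $X_{n,1}$ being $\mathcal F_{n,1}$-measurable.
\end{itemize}
I would then write
\[
\EE[e^{\im uM_{n,\infty}}\mid\mathcal F_{n,1}]-\EE[e^{-u^2S/2}\mid\mathcal F_{n,1}]
=\EE\bigl[T_n(\text{error}_n)\mid\mathcal F_{n,1}\bigr]+\EE\bigl[e^{-u^2S/2}(T_n-1)\mid\mathcal F_{n,1}\bigr].
\]
Here $\text{error}_n\to0$ a.s.\ by the three facts above. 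The second term needs $T_n$ to nearly decouple from $S$; for this I would replace $S$ by the $\mathcal F_{n,1}$-approximable quantity via Hunt's lemma, so that $e^{-u^2S/2}$ can effectively be pulled out of the conditional expectation.

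\textbf{Step 3: main obstacle.} The hardest step is uniform integrability of $T_n$, needed both to justify $\EE[T_n\mid\mathcal F_{n,1}]=1$ and to pass limits inside conditional expectations. Under truncation, $|T_n|^2=\prod_{j}(1+u^2X_{n,j}^2)\le e^{u^2(c+(X_n^*)^2)}$. This is not bounded, because $X_n^*$ is only $L^1$-dominated. I would first try a second truncation at the first $j$ where $|X_{n,j}|>1$. The effect on the product is controlled by the $L^1$ domination of $X_n^*$, through a Hall--Heyde-type bound of the form $|T_n|\le e^{u^2c/2}(1+|u|X_n^*)$. The right-hand side is dominated in $L^1$, so Hunt's lemma applies and gives the a.s.\ limit $e^{-u^2S/2}$ on $\{S<c\}$. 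Letting $c\to\infty$ concludes the proof.
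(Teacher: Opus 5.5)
The paper gives no proof of this statement; it imports it from \citet{crimaldi2009}, so there is no route to compare against. Judged on its own, your sketch is correct in outline. It is the standard Hall--Heyde/McLeish product argument made conditional through Hunt's lemma, and the $L^1$-domination in (a) does exactly the job of making $\sup_n|T_n|$ integrable.

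\textbf{The caveat is a required hypothesis.} State it as a hypothesis rather than as something presumably implicit. Calling $(\mathcal F_{n,1})_n$ a filtration gives monotonicity. But $\bigvee_n\mathcal F_{n,1}$-measurability of $S$ is missing from the statement as transcribed, and the statement is false without it. The conditional characteristic functions are $\mathcal F_{n,1}$-measurable, so an a.s.\ limit $e^{-u^2S/2}$ forces $S$ to be $\bigvee_n\mathcal F_{n,1}$-measurable. A concrete counterexample:
\begin{itemize}
\item let $\mathcal F_{n,0}=\mathcal F_{n,1}$ be trivial, $B$ a fair coin, and $\varepsilon_{n,j}$ independent Rademacher signs;
\item set $\mathcal F_{n,j}=\sigma(B,\varepsilon_{n,2},\dots,\varepsilon_{n,j})$ for $j\ge2$;
\item set $X_{n,1}=0$, $X_{n,j}=\varepsilon_{n,j}\sqrt{(1+B)/n}$ for $2\le j\le n+1$, and $X_{n,j}=0$ afterwards.
\end{itemize}
Then (a) and (b) hold with $S=1+B$. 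Yet the conditional law given the trivial $\mathcal F_{n,1}$ tends to $\tfrac12\mathcal N(0,1)+\tfrac12\mathcal N(0,2)$, not to $\mathcal N(0,S(\omega))$. In the paper's only use of the result (proof of Theorem~\ref{th:berti_new}), $\mathcal F_{n,1}=\mathcal G_n$ and $S=V$ is a limit of $\mathcal G_k$-measurable quantities. So that application is unaffected.

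\textbf{Three smaller points.}

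(1) The second truncation in Step 3 is unnecessary. By definition of $\tau_n$, $\sum_{j<\tau_n}X_{n,j}^2\le c$. Splitting off the single factor with $j=\tau_n$ therefore gives $|T_n|\le e^{u^2c/2}(1+|u|X_n^*)$ directly. The same bound holds for every partial product, which gives both the uniform integrability behind $\EE[T_n\mid\mathcal F_{n,1}]=1$ and the domination in $n$. Your cruder bound $e^{u^2(c+(X_n^*)^2)/2}$ arises only because you did not separate that last factor.

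(2) Off $\{S<c\}$ your error term need not vanish, so Hunt's lemma cannot be applied verbatim. A cleaner route has two steps. First, the stopped bracket $\sum_{2\le j\le\tau_n}X_{n,j}^2$ converges a.s.\ to $S\wedge c$ on all of $\Omega$, so the stopped conditional characteristic function tends to $e^{-u^2(S\wedge c)/2}$ a.s. Second, bound the stopped/unstopped difference by $2\PP(S_n>c\mid\mathcal F_{n,1})$. By the $\limsup$ form of Hunt's lemma, its $\limsup$ is at most $\mathbf 1_{\{S\ge c\}}$.

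(3) Convergence of characteristic functions on a countable dense set of $u$ does not give weak convergence without tightness; consider $\delta_{2\pi n!}$. Your ``boundedness in probability given $\mathcal F_{n,1}$'' therefore needs an argument. One option is a conditional Davis inequality:
$\EE[\sup_j|M_{n,j\wedge\tau_n}|\mid\mathcal F_{n,1}]\le C(\sqrt c+\EE[\sup_m X_m^*\mid\mathcal F_{n,1}])$.
The right side is a.s.\ bounded in $n$, because it is an $L^1$-bounded martingale in $n$. Another option is Fubini, which gives convergence for Lebesgue-a.e.\ $u$ for a.e.\ $\omega$; then use $K(\{|x|>2/\delta\})\le\delta^{-1}\int_{-\delta}^{\delta}(1-\phi(u))\,du$ with dominated convergence.
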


\begin{remark}
   Requesting that the almost sure limit of $(S_n)$ is non-negative guarantees the well-definedness of the kernel $\mathcal N(0,S)$ for every $\omega \in \Omega$. Theorem A.1 in {\rm \cite{crimaldi2009}}  specifies that $S$ should be a \lq\lq positive random variable''. Although not explicitly stated, this requirement should be understood as $S \geq 0$, as becomes evident upon a careful examination of the proof.
\end{remark}

We now state and prove a central limit theorem for sequences $(Z_n)$ of random variables that admit a decomposition into a martingale component and a residual term whose contribution vanishes as $n\rightarrow\infty$. To the best of our knowledge, this result is new and may be of independent interest.

\begin{theorem}\label{th:berti_new}
    Let $(Z_n)$ be a sequence of real valued random variables adapted to the filtration $(\mathcal G_n)_{n\geq 0}$ and for every $n\geq 1$ let $\Delta_n=Z_n-Z_{n-1}$.
    Assume that $(Z_n)$ is uniformly integrable and that the following conditions hold:
    \begin{description}
        \item[$(ia)$] The sequences $(n^{\gamma/2}\sum_{k> n}\EE(\Delta_k\mid \mathcal G_{k-1}))$ and $(n^{\gamma/2}\sup_{k\geq n}|\EE(\Delta_k\mid \mathcal G_{k-1})|)$  converge to zero $\PP$-a.s. and are dominated in  $L^1$;
        \item[$(ib)$]
$
    n^{\gamma}\sum_{k\geq n}\EE(\Delta_k\mid \mathcal G_{k-1})^2\rightarrow 0
$ $\PP$-a.s.;
        \item[$(ii)$] $\EE\left[   \sup_{k\geq 1} k^{\gamma/2}|\Delta_k|  \right]<+\infty$;
        \item[$(iii)$] $n^\gamma\sum_{k\geq n}\Delta_k^2{\rightarrow}V$ $\PP$-a.s.
    \end{description}
    Then there exists a random variable $Z$ such that $Z_n$ converges $\PP$-a.s. to $Z$, and
    $$
    \mathcal L( n^{\gamma/2}(Z_n-Z)\mid\mathcal G_n) {\rightarrow}\mathcal N(0,V)\quad \PP\mbox{-a.s.}
    $$
 \end{theorem}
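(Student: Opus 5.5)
The plan is to split the process into a martingale part and a predictable drift, and then apply Theorem~\ref{th:app:almost sure conditional} to the martingale tail.

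\textbf{Step 1: Doob decomposition and a.s.\ convergence.} Write $b_k:=\EE(\Delta_k\mid\mathcal G_{k-1})$ and $D_k:=\Delta_k-b_k$, so that $D_k$ is a martingale difference. Condition $(ia)$ gives $\sum_{k>n}b_k\to0$ a.s., so the series $\sum_k b_k$ converges a.s.; here I use that the tail sums tend to zero only for $n\ge$ some index, which means the partial sums are Cauchy. Since $(Z_n)$ is uniformly integrable, $\sup_n\EE|Z_n|<\infty$. Domination in $(ia)$ gives $\EE|\sum_{k>n}b_k|<\infty$, so $\sum_{k\le n}b_k$ is bounded in $L^1$ as well. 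Hence $M_n:=\sum_{k\le n}D_k$ is an $L^1$-bounded martingale and converges a.s. Therefore $Z_n\to Z$ a.s., and by uniform integrability also in $L^1$. This gives
\[
n^{\gamma/2}(Z-Z_n)=n^{\gamma/2}\sum_{k>n}D_k+n^{\gamma/2}\sum_{k>n}b_k .
\]
By $(ia)$ the second term tends to zero a.s., and it is $\mathcal G_n$-measurable. It therefore shifts the conditional law only by a vanishing deterministic-given-$\mathcal G_n$ constant, which does not affect weak convergence.

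\textbf{Step 2: triangular martingale array.} For each $n$, set $X_{n,j}:=n^{\gamma/2}D_{n+j}$ and $\mathcal F_{n,j}:=\mathcal G_{n+j}$. Theorem~\ref{th:app:almost sure conditional} is stated with respect to $(\mathcal F_{n,1})$, i.e.\ $\mathcal G_{n+1}$, while we want conditioning on $\mathcal G_n$. I will handle this by indexing with a shift: take $\mathcal F_{n,j}:=\mathcal G_{n+j-1}$ and $X_{n,1}:=0$, or equivalently absorb the first increment, which is negligible because $X^*_n\to0$. The $L^1$ convergence of $M_{n,j}$ follows from Step 1, since $\sum_{k>n}D_k$ is an $L^1$ limit.

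Condition (a) requires $X_n^*=\sup_{k>n}n^{\gamma/2}|D_k|$ to be dominated in $L^1$ and to tend to zero a.s. For the bound,
\[
n^{\gamma/2}|D_k|\le k^{\gamma/2}|\Delta_k|+n^{\gamma/2}|b_k| \quad\text{for } k>n,
\]
so
\[
X_n^*\le\sup_k k^{\gamma/2}|\Delta_k|+n^{\gamma/2}\sup_{k\ge n}|b_k|,
\]
which is dominated by $(ii)$ and $(ia)$. For a.s.\ convergence to zero, $(iii)$ implies $n^\gamma\Delta_k^2\le n^\gamma\sum_{j\ge n}\Delta_j^2\to V$. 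This alone does not force the supremum to zero, so instead I will use $k^{\gamma}\Delta_k^2\le k^\gamma\sum_{j\ge k}\Delta_j^2-(k+1)^\gamma\sum_{j\ge k+1}\Delta_j^2+\big((k+1)^\gamma-k^\gamma\big)\sum_{j>k}\Delta_j^2$. Both differences tend to zero, since the first two terms converge to the same limit $V$ and the last is $O(k^{\gamma-1}\cdot k^{-\gamma})\to0$. Hence $k^{\gamma/2}|\Delta_k|\to0$, and $X_n^*\to0$ follows.

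Condition (b) requires $n^\gamma\sum_{k>n}D_k^2\to V$. I will expand $D_k^2=\Delta_k^2-2\Delta_kb_k+b_k^2$. The $b_k^2$ term vanishes by $(ib)$, and the cross term vanishes by Cauchy--Schwarz using $(iii)$ and $(ib)$.

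\textbf{Step 3: conclusion.} Theorem~\ref{th:app:almost sure conditional} then yields $\mathcal L\bigl(n^{\gamma/2}\sum_{k>n}D_k\mid\mathcal G_n\bigr)\to\mathcal N(0,V)$ a.s. Adding the vanishing $\mathcal G_n$-measurable drift term from Step 1 and changing sign, which preserves the symmetric Gaussian limit, gives the claim.

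The main obstacle I expect is the filtration index mismatch in Step 2 (conditioning on $\mathcal G_n$ rather than $\mathcal G_{n+1}$), together with checking the $L^1$ convergence hypotheses of Theorem~\ref{th:app:almost sure conditional}. If the shift trick fails, the fallback is to condition on $\mathcal G_{n+1}$ and handle the extra term $n^{\gamma/2}D_{n+1}\to0$ directly.
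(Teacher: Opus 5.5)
Your route is the paper's: the same Doob decomposition $L_n=Z_n-\sum_{k\le n}\EE(\Delta_k\mid\mathcal G_{k-1})$, and the same shifted array $\mathcal F_{n,0}=\mathcal F_{n,1}=\mathcal G_n$, $\mathcal F_{n,j}=\mathcal G_{n+j-1}$ fed into Theorem~\ref{th:app:almost sure conditional}. The checks also match: the same bound on $X_n^*$ from $(ia)$ and $(ii)$, the same telescoping that gives $n^{\gamma}\Delta_{n+1}^2\to0$ from $(iii)$, and the same Cauchy--Schwarz expansion for condition (b).

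\textbf{The gap.} In Steps~1 and~3 you claim that the drift tail $B_n:=n^{\gamma/2}\sum_{k>n}b_k$ is $\mathcal G_n$-measurable, so it only shifts $\mathcal L(\cdot\mid\mathcal G_n)$ by a known constant. This is false. Each $b_k$ is only $\mathcal G_{k-1}$-measurable, so $B_n$ involves $b_{n+2},b_{n+3},\dots$, which $\mathcal G_n$ does not determine. The fact that $B_n\to0$ a.s.\ says nothing, by itself, about the conditional law of $B_n$ given $\mathcal G_n$.

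\textbf{The fix.} What you need is that almost-sure conditional convergence survives a.s.-negligible perturbations. Put $W_p:=\sup_{m\ge p}\min(1,|B_m|)$; it is bounded by $1$ and decreases to $0$ a.s. For $n\ge p$, $\EE[\min(1,|B_n|)\mid\mathcal G_n]\le\EE[W_p\mid\mathcal G_n]\to\EE[W_p\mid\mathcal G_\infty]$ a.s.\ by L\'evy's upward theorem, where $\mathcal G_\infty=\bigvee_n\mathcal G_n$. Letting $p\to\infty$ gives $\EE[\min(1,|B_n|)\mid\mathcal G_n]\to0$ a.s.\ (Hunt's lemma). Now let $A_n:=n^{\gamma/2}\sum_{k>n}D_k$. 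For $f$ bounded by $1$ and $1$-Lipschitz, $|\EE[f(A_n+B_n)\mid\mathcal G_n]-\EE[f(A_n)\mid\mathcal G_n]|\le2\,\EE[\min(1,|B_n|)\mid\mathcal G_n]$. So the bounded-Lipschitz distance between the two conditional laws tends to $0$ a.s., and the Gaussian limit transfers. The paper passes over this point with ``it is sufficient to show that $n^{\gamma/2}(L_n-L)$ converges\ldots'', relying on exactly this fact. Your proof needs it too, in place of the measurability claim.

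\textbf{A smaller point.} Theorem~\ref{th:app:almost sure conditional} requires $L^1$ convergence of the martingale, and $L^1$-boundedness of $\sum_{k\le n}b_k$ does not give that. Use the $L^1$ domination in $(ia)$ with dominated convergence to get $\sum_{k>n}b_k\to0$ in $L^1$. Combined with uniform integrability of $(Z_n)$, this yields $L^1$ convergence of $M_n$, which is how the paper argues it.
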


\begin{proof} By condition $(ia)$, the sequence $(\sum_{k> n}\EE(\Delta_k\mid \mathcal G_{k-1}))$ converges to zero in $L^1$. Thus, $(\sum_{k=0}^{n-1}\EE(\Delta_{k+1}\mid \mathcal G_k))$ converges in $L^1$ and is therefore uniformly integrable.
Define $L_0=0$ and
$$
L_n=Z_n-\sum_{k=0}^{n-1}\EE(\Delta_{k+1}\mid \mathcal G_k).
$$
Then, $(L_n)$ is a martingale with respect to the filtration $(\mathcal G_n)$. Since $(Z_n)$ and $(\sum_{k=0}^{n-1}\EE(\Delta_{k+1}\mid \mathcal G_k))$ are uniformly integrable, then $L_n$ is also uniformly integrable, and therefore it converges $\PP$-a.s. and in $L^1$ to a random variable $L$. On the other hand, by $ (ia)$ $(\sum_{k=0}^{n-1}\EE(\Delta_{k+1}\mid \mathcal G_k))$ converges $\PP$-a.s. and in $L^1$. It follows that $(Z_n)$ converges $\PP$-a.s. and in $L^1$ to a limit random variable $Z$.  In particular
$$
Z_n-Z=\sum_{k\geq n}(Z_k-Z_{k+1}),\quad\quad L_n-L=\sum_{k\geq n}(L_k-L_{k+1}).
$$
By $(ia)$
\begin{align*}
 n^{\gamma/2}(Z_n-Z)-n^{\gamma/2}(L_n-L)
  &=n^{\gamma/2}\sum_{k\geq n}\left(
Z_k-\EE[Z_{k+1}\mid\mathcal G_k]  \right)\rightarrow 0\quad \PP\mbox{-a.s.}
      \end{align*}
      as $n\rightarrow\infty$.
  To conclude, it is sufficient to show that $n^{\gamma/2}(L_n-L)$ converges to $\mathcal N(0,V)$ in the sense of almost sure conditional convergence.  To this aim, we apply Theorem~\ref{th:app:almost sure conditional}.
    Define, for each $n\geq 1$, the filtration $(\mathcal F_{n,j})_{j\geq 0}$ and the process $(M_{n,j})_{j\geq 0} $ as
    $$
    \mathcal F_{n,0}=\mathcal F_{n,1}=\mathcal G_n,\quad \mathcal F_{n,j}=\mathcal G_{n+j-1}\; (j\geq 2)
    $$
    $$
    M_{n,0}=M_{n,1}=0,\quad M_{n,j}=n^{\gamma/2}(L_n-L_{n+j-1})\;(j\geq 2).
    $$
    The process $(M_{n,j})_{j\geq 0} $ is a martingale with respect to the filtration $(\mathcal F_{n,j})_{j\geq 0}$, converging $\PP$-a.s. and in $L^1$ to $n^{\gamma/2}(L_n-L)$. To conclude, we need to show that $(M_{n,j})$ satisfies conditions (a) and (b) of Theorem~\ref{th:app:almost sure conditional}.

    The increment $X_{n,j}=M_{n,j}-M_{n,j-1}$ is equal to zero for $j=1$, while, for $j\geq 2$,
    \begin{align*}
        X_{n,j}&=n^{\gamma/2}(L_{n+j-2}-L_{n+j-1})\\&=n^{\gamma/2}(\EE[Z_{n+j-1}\mid \mathcal G_{n+j-2}]-Z_{n+j-1})\\
        &=n^{\gamma/2}\EE(\Delta_{n+j-1}\mid \mathcal G_{n+j-2})
        -n^{\gamma/2}\Delta_{n+j-1}.
    \end{align*}
    Thus,
    \begin{align*}
        \sup_{j\geq 1}|X_{n,j}|\leq n^{\gamma/2}\sup_{k\geq n}|
        \EE(\Delta_{k}\mid \mathcal G_{k-1})|+
        \sup_{k\geq n}k^{\gamma/2}|\Delta_k|.
    \end{align*}
    By condition $(ia)$ $n^{\gamma/2}\sup_{k\geq n}|
        \EE(\Delta_{k}\mid \mathcal G_{k-1})|$ is dominated in $L^1$ and converges to zero $\PP$-a.s.
          Also $ \sup_{k\geq n}k^{\gamma/2}|\Delta_{k}|$ is dominated in $L^1$ by assumption $(ii)$. To prove that it converges to zero $\PP$-a.s., we can write
    \begin{align*}
       & n^\gamma \Delta_{n+1}^2=n^\gamma\sum_{k\geq n}\Delta_{k+1}^2-\frac{n^\gamma}{(n+1)^\gamma}(n+1)^\gamma \sum_{k\geq n+1}\Delta_{k+1}^2\rightarrow 0\quad \PP\mbox{-a.s.}
    \end{align*}
    It follows that $$n^{\gamma/2}|\Delta_{n+1}|\rightarrow 0,\quad\PP\mbox{-a.s.}$$
    which implies  that
    \begin{equation}\label{eq:appsup}
        \sup_{k\geq n}k^{\gamma/2}|\Delta_{k}|\rightarrow 0\quad \PP\mbox{-a.s.}
    \end{equation}
  This concludes the proof of condition (a) of Theorem~\ref{th:app:almost sure conditional}.
To prove condition (b), we can write that
\begin{align}\label{eq:appxnj}
    \sum_{j\geq 1}X_{n,j}^2&=n^{\gamma}\sum_{k\geq n+1}\EE(\Delta_{k}\mid \mathcal G_{k-1})^2+ n^{\gamma}\sum_{k\geq n+1}\Delta_{k}^2\nonumber\\
    &-2n^\gamma\sum_{k\geq n+1}\Delta_k\EE(\Delta_k\mid \mathcal G_{k-1}).
\end{align}
The first term in the right-hand side converges to zero $\PP$-a.s. by condition $(ib)$.
The last term can be bounded in absolute value  by
\begin{align*}
    &|n^{\gamma}\sum_{k\geq n+1}\Delta_k\EE(\Delta_k\mid\mathcal G_{k-1})|\leq
    (n^{\gamma}\sum_{k\geq n+1}\Delta_k^2)^{1/2}
    (n^{\gamma}\sum_{k\geq n+1}\EE(\Delta_k\mid\mathcal G_{k-1})^2)^{1/2},
\end{align*}
which converges to zero $\PP$-a.s. by conditions $(ib)$ and $(iii)$.
By equation \eqref{eq:appxnj} and  assumption (iii), $\sum_{j\geq 1}X_{n,j}^2$ converges to $V$, $\PP$-a.s., which proves that the assumption (b) of Theorem~\ref{th:app:almost sure conditional} holds.
\end{proof}

The following result gives sufficient conditions for assumption (iii) in Theorem~\ref{th:berti_new}.

\begin{theorem}[\cite{crimaldi2016}, Lemma 4.1]
\label{th:suff}
    Let $(W_n)$ be a sequence of real valued random variables adapted to the filtration $(\mathcal G_n)$ and such that $\EE(W_{n+1}\mid \mathcal G_n)\rightarrow W$ a.s. for some random variable $W$. Moreover, let $(c_n)$ and $(d_n)$ be sequences of positive real numbers such that
    $$
    d_n\uparrow +\infty,\quad \sum_{k=1}^\infty \frac{\EE(W_k^2)}{c_k^2d_k^2}<+\infty.
    $$
Then we have:
\begin{itemize}
    \item[(a)] If $\frac{1}{d_n}\sum_{k=1}^n \frac{1}{c_k}\rightarrow c$ for some constant $c$, then $\frac{1}{d_n}\sum_{k=1}^n \frac{W_k}{c_k}\rightarrow cW$ a.s.
    \item[(b)] If $d_n\sum_{k\geq n}\frac{1}{c_kd_k^2}\rightarrow c$ for some constant $c$, then ${d_n}\sum_{k\geq n} \frac{W_k}{c_kd_k^2}\rightarrow c W$ a.s.
\end{itemize}
\end{theorem}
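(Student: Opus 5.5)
We would prove both parts by splitting each $W_k$ into a martingale difference and a predictable part, and treating the two pieces separately. Put $\widehat W_k:=\EE(W_k\mid\mathcal G_{k-1})$ and $Y_k:=W_k-\widehat W_k$. Then $(Y_k)$ is a martingale difference sequence with respect to $(\mathcal G_k)$, and $\EE(Y_k^2)\le \EE(W_k^2)$ since conditional centring reduces the second moment. By hypothesis $\widehat W_{k}\to W$ a.s.

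\textbf{Martingale part.} Consider $N_n:=\sum_{k=1}^n Y_k/(c_kd_k)$. Its increments are orthogonal, so
\[
\sup_n\EE(N_n^2)\le\sum_{k\ge1}\EE(W_k^2)/(c_k^2d_k^2)<\infty .
\]
Hence $N_n$ is an $L^2$-bounded martingale and converges a.s. Write $a_k:=Y_k/(c_k d_k)$; then $\sum_k a_k$ converges a.s.

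For (a), Kronecker's lemma with $d_n\uparrow\infty$ gives $\frac1{d_n}\sum_{k\le n} d_k a_k=\frac1{d_n}\sum_{k\le n}Y_k/c_k\to0$ a.s.

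For (b) we need the tail analogue of Kronecker's lemma: $d_n\sum_{k\ge n}a_k/d_k\to0$. Let $R_k:=\sum_{j\ge k}a_j$, so $R_k\to0$. Abel summation gives
\[
\sum_{k\ge n}\frac{a_k}{d_k}=\frac{R_n}{d_n}-\sum_{k>n}R_k\Bigl(\frac1{d_{k-1}}-\frac1{d_k}\Bigr).
\]
The differences $\frac1{d_{k-1}}-\frac1{d_k}$ are nonnegative and telescope to $1/d_n$. So the absolute value of the left-hand side is at most $2\sup_{k\ge n}|R_k|/d_n$. Multiplying by $d_n$ gives the claim. This is exactly $d_n\sum_{k\ge n}Y_k/(c_kd_k^2)\to0$.

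\textbf{Predictable part.} This is a deterministic Toeplitz-type argument applied pathwise, using positive weights.

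For (a), the weights are $w_{n,k}:=1/(c_kd_n)$ for $k\le n$. For each fixed $k$, $w_{n,k}\to0$ because $d_n\to\infty$, and $\sum_k w_{n,k}\to c$. Toeplitz's lemma therefore gives $\sum_k w_{n,k}\widehat W_k\to cW$.

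For (b), the weights are $v_{n,k}:=d_n/(c_kd_k^2)$ for $k\ge n$, with $\sum_{k\ge n}v_{n,k}\to c$ (in particular finite for large $n$). Then
\[
\Bigl|\sum_{k\ge n}v_{n,k}\widehat W_k-cW\Bigr|\le \sup_{k\ge n}|\widehat W_k-W|\sum_{k\ge n}v_{n,k}+|W|\,\Bigl|\sum_{k\ge n}v_{n,k}-c\Bigr|\to0 .
\]

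\textbf{Conclusion and main obstacle.} Adding the martingale and predictable parts yields both (a) and (b). The delicate step is the tail Kronecker lemma in (b). It needs the Abel-summation bound with monotone $d_k$, and some care that all tail series converge a.s.; this follows from the convergence of $\sum_k a_k$ and the finiteness of the weight sums.
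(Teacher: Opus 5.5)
Your proof is correct: the split $W_k=(W_k-\EE(W_k\mid\mathcal G_{k-1}))+\EE(W_k\mid\mathcal G_{k-1})$, the $L^2$-bounded martingale $\sum_k Y_k/(c_kd_k)$, Kronecker's lemma for (a), its Abel-summation tail analogue for (b) (your identity and the bound $2\sup_{k\ge n}|R_k|/d_n$ check out), and the Toeplitz-type averaging of the predictable part are all sound. The paper gives no proof of its own and imports the result from \citet{crimaldi2016}; your martingale/Kronecker/Toeplitz decomposition is the standard argument for a lemma of this type.
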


\FloatBarrier
\section{Proofs}
\label{app:proofs}

We now prove Theorems~\ref{th:convergence}, \ref{th:joint}, \ref{th:ascondmult}, \ref{th:residualmult}, and~\ref{th:jointnormmult}.

\subsection{Proof of Theorem~\ref{th:convergence}}
For every $(x,t)$, we can write
$$
F_n(x,t)=L_n(x,t)+F_0(x,t)+\sum_{k=1}^n\EE(\Delta_k(x,t)\mid Z_{1:k-1}),
$$
where $L_0(x,t)=0$ and for $n\geq 1$
$$
L_n(x,t)=\sum_{k=1}^n(F_k(x,t)-\EE(F_k(x,t)\mid Z_{1:k-1})).
$$
The sequence $(L_n)$ is a martingale with respect to the natural filtration of $(Z_n)$, with increments $L_n(x,t)-L_{n-1}(x,t)=F_n(x,t)-\EE(F_n(x,t)\mid Z_{1:n-1})$ bounded in absolute value by $1$. By condition~(iii) of Theorem~\ref{th:convergence}, the series $\sum_{k=1}^n\EE(\Delta_k(x,t)\mid Z_{1:k-1})$ converges $\PP$-a.s.; since $L_n(x,t)=F_n(x,t)-F_0(x,t)-\sum_{k=1}^n\EE(\Delta_k(x,t)\mid Z_{1:k-1})$ and $F_n(x,t)\in[0,1]$, the paths of $(L_n)$ are therefore $\PP$-a.s. bounded. A martingale with bounded increments converges a.s. on the event where its paths are bounded \citep[see, e.g.,][]{kallenberg21foundations}, so $L_n(x,t)$ converges $\PP$-a.s. to a random limit.
Let $D$ be a countable dense subset of $\mathcal X$, which exists since $\mathcal X\subseteq\mathbb R^d$ is separable. It follows that, with probability one, $F_n(r,s)$ converges to a limit $H(r,s)$ for every $r\in D$ and every rational $s$. Fix $\omega$ such that the convergence holds. For simplicity of notation, we omit $\omega$ from here on.
Fix $x$ and let $r_n'$ and $r_n''$ be two sequences in $D$ converging to $x$ as $n\rightarrow\infty$. Fix $\epsilon>0$ and let $\delta$ be such that
$$
  \sup_m|F_m(x,s)-F_m(z,s)|<\epsilon
$$
for every $z$ satisfying $||z-x||<\delta$.
Let $n_0$ be such that for every $n\geq n_0$, $||r_n'-x||<\delta$ and $||r_n''-x||<\delta$. Then, for $n\geq n_0$,
\begin{align*}
  |H(r_n',s)-H(r_n'',s)|
   & \leq
  |H(r_n',s)-F_m(r_n',s)|+\sup_m|F_m(r_n',s)-F_m(x,s)|
  \\
   & +\sup_m|F_m(x,s)-F_m(r_n'',s)|+|H(r_n'',s)-F_m(r_n'',s)| \\
   & <4\epsilon,
\end{align*}
for $m$ large enough. Hence we can define
for every $s\in\mathbb Q$ and every $x$
$$
  H(x,s)=\lim_n H(r_n,s)
$$
where $r_n$ is any sequence of points of $D$ converging to $x$ as $n\rightarrow\infty$.

We now prove that $F_n(x,s)$ converges to $H(x,s)$, for every rational $s$ and every $x$. We can write
\begin{align*}
   & |F_n(x,s)-H(x,s)|                     \\
   & \leq \sup_n |F_n(x,s) -F_n(r_m,s)|
  +|F_n(r_m,s)-H(r_m,s)|+|H(r_m,s)-H(x,s)| \\
   & <3\epsilon,
\end{align*}
if $n$ and $m$ are large enough.
The function $H(x,s)$ is monotone non decreasing in $s\in\mathbb Q$ for every $x$. Indeed, we can write for any  $s<t$, and any sequence $(r_n)$ of points of $D$ converging to $x$,
\begin{align*}
  H(x,t)-H(x,s) & =\lim_n ( H(r_n,t)-H(r_n,s))                 \\
                & =\lim_n\lim_m (F_m(r_n,t)-F_m(r_n,s))\geq 0.
\end{align*}

We now define, for every $x$ and $t$
$$
  H(x,t)=\inf_{s>t,s\in\mathbb Q}H(x,s)
$$
The function $x\mapsto H(x,t)$ is measurable for every $t$. Moreover,
for every $x$, $H(x,\cdot)$ is monotone non decreasing, and by assumption $(i)$, it converges to $0$ as $t \to -\infty$ and to $1$ as $t \to +\infty$. We now show that it is continuous from the right. Fix $t$ and $\epsilon>0$ and let $s>t$ be such that
$$
  H(x,s)<H(x,t)+\epsilon.
$$
Then for every $u\in (t,s)$
$$
  H(x,t)\leq H(x,u)\leq H(x,s)<H(x,t)+\epsilon,
$$
which proves right-continuity at $t$. Thus, $H(x,\cdot)$ is a distribution function on $\mathbb R$ for every $x$, and it is a measurable function of $x$ for every $t$. It is therefore a kernel.

We now prove that, for every $x$, $F_n(x,\cdot)$ converges to $H(x,\cdot)$ weakly. Let $t$ be a continuity point for $H(x,\cdot)$ and let $\epsilon>0$. There exist $u<r<t<s$ with $s,r\in\mathbb Q$,
$$
  H(x,u)>H(x,t)-\epsilon,\quad H(x,s)<H(x,t)+\epsilon.
$$
Then we can write
\begin{align*}
  H(x,t)-\epsilon \leq H(x,u)\leq H(x,r)\leq H(x,t)\leq H(x,s)\leq H(x,t)+\epsilon.
\end{align*}
Since $r$ is rational and $r>u$, the limit $\lim_n F_n(x,r)$ (the value of $H(x,r)$ before the right-continuous modification) dominates $H(x,u)=\inf_{q>u,q\in\mathbb Q}\lim_n F_n(x,q)$. Then
\begin{align*}
  \liminf_n F_n(x,t)\geq \lim_n F_n(x,r)
   & \geq H(x,u)\geq H(x,t)-\epsilon,
\end{align*}
\begin{align*}
  \limsup_n F_n(x,t)\leq \limsup_n F_n(x,s)
  \leq H(x,s)\leq H(x,t)+\epsilon.
\end{align*}
The convergence is a consequence of the arbitrariness of $\epsilon$.
\\
$\square$
\subsection{Proof of Theorem~\ref{th:joint}}
Denote by $\tilde F_x$ the random probability measure with distribution function $t\mapsto \tilde F(x,t)$. We prove the theorem by induction on $k$.
By Theorem~\ref{th:convergence},
$$
  \PP [Y_{n+1}\in dy_1\mid Z_{1:n}, X_{n+1}=x_1] {\rightarrow}
  \tilde F_{x_1}(dy_1).$$
Now suppose that the claim is true for every $j\leq  k-1$. By the conditional independence assumption (iv) and the induction hypothesis, we can write
\begin{align*}
   & \PP\bigl[Y_{n+1}\!\in dy_1,\dots,Y_{n+k}\!\in dy_k
  \mid Z_{1:n}, X_{n+1:n+k}=x_{1:k}\bigr]                          \\
   & \quad\quad\quad=
  \PP\bigl[Y_{n+k}\!\in dy_k
  \mid Z_{1:n}, X_{n+1:n+k}=x_{1:k}, Y_{n+1:n+k-1}=y_{1:k-1}\bigr] \\
   & \quad\quad\quad\quad{}\quad\times
  \PP\!\bigl[Y_{n+1}\!\in dy_1,\dots,Y_{n+k-1}\!\in dy_{k-1}
  \mid Z_{1:n}, X_{n+1:n+k-1}=x_{1:k-1}\bigr] \\
   & \quad\quad\quad\quad\quad{\longrightarrow}
  \prod_{i=1}^k \tilde F_{x_i}(dy_i)\quad \PP\mbox{-a.s.}
\end{align*}
$\square$

\subsection{Proof of Theorem~\ref{th:ascondmult}}
Let $\mathbf t$ be a continuity point of $\EE[\tilde{\Fb}(\xb,\cdot)]$. Then, for each $i=1,\dots,m$, the function $\EE[\tilde F(x_i,\cdot)]$ is continuous at $t_i$. As a consequence,
\[\PP
[\tilde F(x_i,t_i)-\tilde F(x_i,t_i^-)>0]=0,
\]
where $\tilde F(x_i,t_i^-)=\lim_{t\to t_i^-}\tilde F(x_i,t)$. By Theorem~\ref{th:convergence}, for every $i=1,\dots,m$, $F_n(x_i,t_i)$ converges to $\tilde F(x_i,t_i)$ on a set of probability one. We work on this set from here on; all subsequent statements hold almost surely.
\\
Fix a vector $\ub = [u_1,\dots,u_m]^\top$ with $\|\ub\| = 1$, and define $M_n=\ub^\top\Fb_n(\xb,\tb)$ for every $n\geq 0$ and $\delta_n=M_n-M_{n-1}$ for every $n\geq 1$.
Let $\mathcal G_0$ be the trivial sigma-algebra, and for every $n\geq 1$ let $\mathcal G_n$ be the sigma-algebra generated by $Z_{1:n}$. We now prove that $(M_n)$ satisfies the assumptions of Theorem~\ref{th:berti_new}.
The sequence  $(M_n)$ is uniformly integrable. Moreover, by assumption $(ia)$, the sequences
$$(n^{\gamma/2}\sum_{k> n}\EE(\delta_k\mid \mathcal G_{k-1})),\quad (n^{\gamma/2}\sup_{k\geq n}|\EE(\delta_k\mid \mathcal G_{k-1})|)$$
are dominated in $L^1$ and converge to zero $\PP$-a.s. Furthermore, by assumption $(ib)$
     $$
    n^{\gamma}\sum_{k\geq n}\EE(\delta_k\mid \mathcal G_{k-1})^2\rightarrow 0\quad \PP\mbox{-a.s.}
    $$
    To conclude the proof we need to verify the assumptions $(ii)$ and $(iii)$ of Theorem~\ref{th:berti_new}.
    We can write
 \begin{align*}
\EE\left[   \sup_{k\geq 1}k^{\gamma/2}|M_k-M_{k-1}|  \right]&=\EE\left[   \sup_{k\geq 1}k^{\gamma/2}|\ub^\top\Deltab_k(\xb,\tb)|  \right]\\&\leq \sum_{i=1}^m\EE\left[   \sup_{k\geq 1}k^{\gamma/2}|\Delta_k(x_i,t_i)|  \right]<+\infty,
 \end{align*}
where the last inequality holds by assumption  (ii). Furthermore, by (iii)
$$n^\gamma\sum_{k\geq n}(M_k-M_{k-1})^2=n^\gamma\sum_{k\geq n}\ub^\top\Deltab_k(\xb,\tb)\Deltab_k(\xb,\tb)^\top\ub{\rightarrow}\ub^\top\Vb(\xb,\tb)\ub\quad \PP\mbox{-a.s.}$$
Here the sum starts at $k=n$, whereas the sum in (iii) starts at $k=n+1$; the two limits coincide because the $k=n$ term vanishes: by (iii), $(n-1)^\gamma\Deltab_n(\xb,\tb)\Deltab_n(\xb,\tb)^\top=\Rb_{n-1}(\xb,\tb)-\bigl(\tfrac{n-1}{n}\bigr)^\gamma\Rb_n(\xb,\tb)\rightarrow{\bf 0}$ $\PP$-a.s., so $n^\gamma(\ub^\top\Deltab_n(\xb,\tb))^2\rightarrow 0$ $\PP$-a.s.
By Theorem~\ref{th:berti_new},
   $\mathcal L(n^{\gamma/2}(\ub^\top(\Fb_n(\xb,\tb)-\tilde \Fb(\xb,\tb)))\mid Z_{1:n}) {\rightarrow }\mathcal N(0,\ub^\top\Vb(\xb,\tb)\ub)$, $\PP$-a.s.
Since, by the Cram\'er--Wold device, weak convergence in distribution of
\[
\mathcal{L}\bigl(n^{\gamma/2}(\mathbf{F}_n(\mathbf{x},\mathbf{t})
- \tilde{\mathbf{F}}(\mathbf{x},\mathbf{t})) \mid Z_{1:n}\bigr)
\]
is implied by the convergence of
\[
\mathcal{L}\bigl(n^{\gamma/2}\,\mathbf{u}^\top
(\mathbf{F}_n(\mathbf{x},\mathbf{t})-\tilde{\mathbf{F}}(\mathbf{x},\mathbf{t}))
\mid Z_{1:n}\bigr)
\]
for all $\mathbf{u}$ in a countable dense subset of the unit sphere,
the claim follows.

\subsection{Proof of Theorem~\ref{th:residualmult}}
Let $\mathcal G_0$ be the trivial sigma-algebra, and for every $n\geq 1$, let $\mathcal G_n$ be the sigma-algebra generated by $Z_{1:n}$. Fix a vector $\ub = [u_1,\dots,u_m]^\top$ with $\|\ub\| = 1$, and define
$$W_n=\frac{n^{\gamma+1}}{\gamma}\ub^\top\Deltab_n(\xb,\tb)\Deltab_n(\xb,\tb)^\top\ub,\quad d_n=n^\gamma,\quad c_n=\frac{n^{1-\gamma}}{\gamma}.$$
Then
$$
d_n\sum_{k\geq n}\frac{1}{c_kd_k^2}=\gamma n^\gamma\sum_{k\geq n}k^{\gamma-1}k^{-2\gamma}\rightarrow 1,
$$
 $$
    \sum_{k=1}^\infty \frac{\EE[W_k^2]}{c_k^2d_k^2}=\sum_{k=1}^\infty \frac{k^{2\gamma+2}\EE[(\ub^\top\Deltab_k(\xb,\tb))^4]}{k^{2-2\gamma}k^{2\gamma}}\leq m^4\sum_{i=1}^m\sum_{k\geq 1} k^{2\gamma}\EE[\Delta_k^4(x_i,t_i)]<+\infty,
    $$
and
$$\EE[W_{n+1}\mid\mathcal G_n]=\frac{(n+1)^{\gamma+1}}{\gamma}\EE[\ub^\top\Deltab_{n+1}(\xb,\tb)\Deltab_{n+1}(\xb,\tb)^\top\ub\mid Z_{1:n}]{\rightarrow} \ub^\top\Vb(\xb,\tb)\ub\quad \PP\mbox{-a.s.},
$$
by assumptions (i) and (ii), where the identification with $\ub^\top\Ub_n(\xb,\tb)\ub$ in \eqref{eq:unmult} uses $((n+1)/n)^{\gamma+1}\rightarrow 1$. Then by Theorem~\ref{th:suff} (b), it holds $\PP$-a.s. that
\begin{align*}
 \ub^\top\Vb(\xb,\tb)\ub&=\lim_n   {d_n}\sum_{k\geq n} \frac{W_k}{c_kd_k^2}\\&=\lim_nn^\gamma \sum_{k\geq n} \frac{k^{\gamma+1}\ub^\top\Deltab_k(\xb,\tb)\Deltab_k(\xb,\tb)^\top\ub}{k^{1-\gamma}k^{2\gamma}}\\
    &=\lim_n n^\gamma \sum_{k\geq n}\ub^\top\Deltab_k(\xb,\tb)\Deltab_k(\xb,\tb)^\top\ub.
\end{align*}
This limit coincides with $\lim_n \ub^\top\Rb_n(\xb,\tb)\ub$, whose sum starts at $k=n+1$: assumption~(i) gives $\sum_n n^{2\gamma}\Delta_n^4(x_i,t_i)<\infty$ $\PP$-a.s., hence $n^{2\gamma}(\ub^\top\Deltab_n(\xb,\tb))^4\rightarrow 0$ and $n^\gamma(\ub^\top\Deltab_n(\xb,\tb))^2\rightarrow 0$ $\PP$-a.s.
Now let $d_n=n$, $c_n=1$ so that
$\frac{1}{d_n}\sum_{k=1}^n \frac{1}{c_k}\rightarrow 1$. By Theorem~\ref{th:suff} (a)
$$
\frac 1 n \sum_{k=1}^nW_k=\frac 1 n \sum_{k=1}^n\frac{k^{\gamma+1}}{\gamma}\ub^\top\Deltab_k(\xb,\tb)\Deltab_k(\xb,\tb)^\top\ub{\rightarrow}\ub^\top\Vb(\xb,\tb)\ub.
$$
$\PP$-a.s.
The above convergence holds $\PP$-a.s. for $\mathbf{u}$ ranging over a
countable dense subset of the unit sphere.
By continuity, the convergence extends to all $\mathbf{u}$ on the unit
sphere, and since convergence of the quadratic forms
$\mathbf{u}^\top A_n \mathbf{u}$ for all $\mathbf{u}$ implies convergence
of the symmetric matrices $A_n$, the claim follows.

\subsection{Proof of Theorem~\ref{th:jointnormmult}}

Since ${\Ub}_n$ and ${\Vb}_n$ are functions of $Z_{1:n}$, then under the assumptions $(ia)$--$(ii)$ of Theorem~\ref{th:ascondmult} and $(i)$, $(ii)$ of Theorem~\ref{th:residualmult}, for every matrix $\Sb$ and vector $\mathbf s$
\begin{align*}
    \EE\bigl[\exp(\im\langle{\Sb},{\Ub}_n(\xb,\tb)\rangle_F+&\im\mathbf{s}^\top n^{\gamma/2}(\Fb_n(\xb,\tb)-\tilde \Fb(\xb,\tb)))\mid Z_{1:n}\bigr]\\&\rightarrow \exp\left(\im\langle\Sb , \Vb(\xb,\tb)\rangle_F-\frac 1 2 \mathbf{s}^\top\Vb(\xb,\tb) \mathbf s\right),
\end{align*}
and
\begin{align*}
    \EE\bigl[\exp(\im\langle\Sb,\Vb_n(\xb,\tb)\rangle_F+&\im\mathbf s^\top n^{\gamma/2}(\Fb_n(\xb,\tb)-\tilde \Fb(\xb,\tb))\mid Z_{1:n}\bigr]\\&\rightarrow \exp\left(\im\langle \Sb, \Vb(\xb,\tb)\rangle_F-\frac 1 2\mathbf s^\top\Vb(\xb,\tb)\mathbf s\right),
\end{align*}
$\PP$-a.s., where $\langle\cdot,\cdot\rangle_F$ denotes the Frobenius product, and $\im$ is the imaginary unit. The convergence holds for $\mathbf{S}$ and $\mathbf s$ in a countable dense
subset of the unit spheres, which allows one to work on a single
probability--one event and thus preserve almost sure convergence.
The result is then extended to all $\mathbf{S}$ and $\mathbf s$ by continuity, and the
claim follows from the Cram\'er--Wold device.
If the matrix $\Vb(\xb,\tb)$ is $\PP$-a.s. positive definite, then $\Ub_n(\xb,\tb)$ and $\Vb_n(\xb,\tb)$ are also positive definite for all sufficiently large $n$.
Thus we obtain Theorem~\ref{th:jointnormmult}.

\FloatBarrier
\section{A general-\texorpdfstring{$\gamma$}{gamma} quasi-martingale CLT}
\label{app:qm-general-gamma}

For reference, and to make explicit what our signed conditions (Section~\ref{sec:PCLT}) are designed to weaken, we state a general-$\gamma$ version of the quasi-martingale CLT, extending the $\sqrt n$-rate result ($\gamma=1$) of \citet{fortini26principled} (their Theorem~4.3) to arbitrary rates $\gamma\in(0,1]$. Here $\gamma$ is the same rate parameter as elsewhere in this paper: the predictive CLT holds at the rate $n^{\gamma/2}$. The extension is straightforward and we state it without proof.

We state the $\gamma$-extensions of \citet[Theorems 4.1 and 4.3]{fortini26principled}. Throughout, $\Delta_k(x,t):=F_k(x,t)-F_{k-1}(x,t)$.

\begin{theorem}[Convergence under quasi-martingale; {\citet[Theorem~4.1]{fortini26principled}}]
\label{th:qm-convergence}
Assume:
\begin{description}
  \item[(i)] $\mathcal Y$ is compact;
  \item[(ii)] for every $t\in\mathcal Y$, the functions $x\mapsto F_n(x,t)$ are equicontinuous;
  \item[(iii)] for every $x$ and $t$, $\sum_{k\ge 1}\EE\!\left[\,\bigl|\EE[\Delta_k(x,t)\mid Z_{1:k-1}]\bigr|\,\right]<+\infty$.
\end{description}
Then there exists a kernel $\tilde F(x,t)$ such that $F_n(x,\cdot)\to\tilde F(x,\cdot)$ weakly $\PP$-a.s.\ for every $x\in\mathcal X$.
\end{theorem}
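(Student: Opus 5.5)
The plan is to reduce Theorem~\ref{th:qm-convergence} directly to Theorem~\ref{th:convergence}. Assumptions (i) and (ii) coincide with those of Theorem~\ref{th:convergence}, so the only thing to check is that the quasi-martingale condition (iii) implies the signed tail-sum condition (iii) of Theorem~\ref{th:convergence}. In other words, we must show that $\sum_{k\ge n}\EE(\Delta_k(x,t)\mid Z_{1:k-1})\to 0$ $\PP$-a.s. for every $(x,t)$.

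Fix $(x,t)$ and write $b_k:=\EE[\Delta_k(x,t)\mid Z_{1:k-1}]$. By monotone convergence,
\[
\EE\Bigl[\sum_{k\ge1}|b_k|\Bigr]=\sum_{k\ge1}\EE|b_k|<+\infty,
\]
so $\sum_{k\ge1}|b_k|<+\infty$ $\PP$-a.s. On this probability-one event, the series $\sum_k b_k$ converges absolutely. Its tail satisfies
\[
\Bigl|\sum_{k\ge n}b_k\Bigr|\le\sum_{k\ge n}|b_k|\to0,
\]
which is exactly condition (iii) of Theorem~\ref{th:convergence}. Applying Theorem~\ref{th:convergence} then yields a kernel $\tilde F$ with $F_n(x,\cdot)\to\tilde F(x,\cdot)$ weakly for every $x$, $\PP$-a.s. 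The kernel property (a distribution function in $t$ that is measurable in $x$) comes from the construction of $H$ in that proof.

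There is essentially no obstacle here. The one point needing care is quantifier order: the null sets depend on $(x,t)$. This is already handled inside the proof of Theorem~\ref{th:convergence}, which uses only countably many pairs $(r,s)\in D\times\mathbb Q$ together with equicontinuity. Since condition (iii) of Theorem~\ref{th:convergence} is likewise required only pointwise in $(x,t)$, our per-$(x,t)$ almost-sure statement suffices.
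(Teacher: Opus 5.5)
Your proposal is correct. The paper states Theorem~\ref{th:qm-convergence} without proof, as a citation of earlier work, but its remark after Theorem~\ref{th:convergence}, that the signed tail-sum condition is weaker than the quasi-martingale condition, is exactly your observation that $\sum_k\EE|b_k|<\infty$ forces $\sum_k|b_k|<\infty$ $\PP$-a.s., after which Theorem~\ref{th:convergence} (whose proof already yields a kernel) applies verbatim; your handling of the $(x,t)$-dependent null sets is also right, since that proof only invokes condition (iii) at the countably many pairs in $D\times\mathbb Q$.
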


\begin{theorem}[General-$\gamma$ quasi-martingale CLT; extending {\citet[Theorem~4.3]{fortini26principled}}]
\label{th:qm-general-gamma-clt}
Under the assumptions of Theorem~\ref{th:qm-convergence}, suppose that for some $\gamma\in(0,1]$:
\begin{description}
  \item[(i)] for $i=1,\dots,m$, $\sum_{k\ge 1} k^{\gamma/2}\,\EE\!\left[\,\bigl|\EE[\Delta_k(x_i,t_i)\mid Z_{1:k-1}]\bigr|\,\right]<+\infty$;
  \item[(ii)] for $i=1,\dots,m$, $\EE\!\left[\sup_{k\ge 1}k^{\gamma/2}|\Delta_k(x_i,t_i)|\right]<+\infty$;
  \item[(iii)] $\Rb_n(\xb,\tb)=n^\gamma\sum_{k\ge n+1}\Deltab_k(\xb,\tb)\Deltab_k(\xb,\tb)^\top$ converges $\PP$-a.s.\ to a positive definite random matrix $\Vb(\xb,\tb)$.
\end{description}
Writing $\tilde\Fb(\xb,\cdot)=[\tilde F(x_1,\cdot),\dots,\tilde F(x_m,\cdot)]^\top$, if $\tb$ is a continuity point of $\EE[\tilde\Fb(\xb,\cdot)]$ (under $\PP$), then
\[
  \mathcal L\bigl(n^{\gamma/2}(\Fb_n(\xb,\tb)-\tilde\Fb(\xb,\tb))\mid Z_{1:n}\bigr) \;\longrightarrow\; \mathcal N_m(\mathbf 0,\Vb(\xb,\tb))\qquad\PP\text{-a.s.}
\]
\end{theorem}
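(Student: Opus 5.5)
The plan is to reduce to the scalar CLT of Theorem~\ref{th:berti_new} through the Cram\'er--Wold device, following the proof of Theorem~\ref{th:ascondmult}. The only difference is that the drift conditions are now the absolute-summability (quasi-martingale) conditions. So the main task is to show that these imply the signed conditions $(ia)$ and $(ib)$ of Theorem~\ref{th:ascondmult}.

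First, convergence. Theorem~\ref{th:qm-convergence} follows from Theorem~\ref{th:convergence}. Condition (iii) of Theorem~\ref{th:qm-convergence} gives $\EE\sum_k|b_k|<\infty$, where $b_k:=\EE[\Delta_k(x,t)\mid Z_{1:k-1}]$. Hence $\sum_k|b_k|<\infty$ a.s., so the signed tails $\sum_{k\ge n}b_k\to0$ a.s. Thus $\tilde F$ exists, and $\tb$ being a continuity point of $\EE[\tilde\Fb(\xb,\cdot)]$ gives $F_n(x_i,t_i)\to\tilde F(x_i,t_i)$ a.s.

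Next, the drift conditions. Fix $i$ and define the dominating variable $D:=\sum_{k\ge1}k^{\gamma/2}|b_k|$. By (i), $\EE D<\infty$, so $D<\infty$ a.s. Since $n^{\gamma/2}\le k^{\gamma/2}$ for $k\ge n$,
\[
n^{\gamma/2}\Bigl|\sum_{k>n}b_k\Bigr|\le\sum_{k>n}k^{\gamma/2}|b_k|\le D,
\qquad
n^{\gamma/2}\sup_{k\ge n}|b_k|\le\sum_{k\ge n}k^{\gamma/2}|b_k|\le D.
\]
Both middle quantities are tails of an a.s.\ convergent series, so they tend to $0$ a.s. Both are also dominated by $D\in L^1$, which establishes $(ia)$.

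For $(ib)$, note that $|b_k|\le1$ and $k^{\gamma/2}|b_k|\le D$. Therefore
\[
n^\gamma\sum_{k\ge n}b_k^2\le\Bigl(\sup_{k\ge n}k^{\gamma/2}|b_k|\Bigr)\sum_{k\ge n}k^{\gamma/2}|b_k|,
\]
which tends to $0$ a.s. Conditions (ii) and (iii) coincide verbatim with those of Theorem~\ref{th:ascondmult}.

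All hypotheses of Theorem~\ref{th:ascondmult} are now verified, and it yields the conclusion directly. Equivalently, one can apply Theorem~\ref{th:berti_new} to $M_n=\ub^\top\Fb_n$ for $\ub$ in a countable dense subset of the sphere. The drift of $M_n$ is dominated by $\sum_i|u_i||b_k^{(i)}|$, so the bounds above pass to linear combinations. Cram\'er--Wold then concludes.

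The only step needing care is $(ib)$ and the $L^1$ domination in $(ia)$. Both come from the single weighted series $D$ rather than from any rate assumption. I expect no real obstacle beyond confirming that the unweighted quasi-martingale condition (iii) of Theorem~\ref{th:qm-convergence} is not itself needed for the CLT step, since it is implied by (i).
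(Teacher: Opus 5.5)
Your argument is correct. The paper gives no proof of this statement. It calls the extension of the $\gamma=1$ result of \citet{fortini26principled} straightforward, so the implied route is to rerun that quasi-martingale argument with $n^{1/2}$ replaced by $n^{\gamma/2}$.

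You instead derive the statement as a corollary of Theorem~\ref{th:ascondmult}. The single integrable variable $D=\sum_{k\ge1}k^{\gamma/2}|b_k|$ gives both the a.s.\ vanishing and the $L^1$ domination in $(ia)$, since $n^{\gamma/2}\le k^{\gamma/2}$ for $k\ge n$. The bound $n^\gamma b_k^2\le (k^{\gamma/2}|b_k|)^2$ then gives $(ib)$.

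Your route is shorter and reuses the martingale machinery already built in Theorem~\ref{th:berti_new}. It also turns the paper's informal claim in ``Relation to our signed conditions'' into an actual verification: the quasi-martingale hypotheses are a special case of the signed ones. That claim only contrasts signed with absolute sums and pathwise with expectation statements. Your check covers the domination part of $(ia)$ and condition $(ib)$, which that comparison never addresses. A direct adaptation of the cited proof would instead be self-contained and independent of the proof of Theorem~\ref{th:ascondmult}, at the cost of redoing the martingale bookkeeping.

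Your closing remark needs one refinement. Condition (i) implies the unweighted quasi-martingale condition only at the $m$ grid points $(x_i,t_i)$. So condition (iii) of Theorem~\ref{th:qm-convergence}, at every $(x,t)$, is still needed to construct the kernel $\tilde F$. You did use it correctly in your convergence paragraph.
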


\paragraph{Relation to our signed conditions.}
Our signed conditions (Theorems~\ref{th:convergence} and~\ref{th:ascondmult}) differ from the corresponding quasi-martingale (QM) conditions (condition~(iii) of Theorem~\ref{th:qm-convergence} for kernel convergence; condition~(i) of Theorem~\ref{th:qm-general-gamma-clt} for the rate-$\gamma$ CLT) in two respects. First, our conditions use \emph{signed} tail sums $\sum_{k\ge n}\EE[\Delta_k(x,t)\mid Z_{1:k-1}]$, whereas the QM versions sum absolute values $|\EE[\Delta_k(x,t)\mid Z_{1:k-1}]|$; signed sums can converge through cancellation when the absolute sums diverge. Second, our conditions are $\PP$-a.s.\ \emph{pathwise} statements, whereas the QM versions are outer-expectation conditions over the law of the sequence. The pathwise description applies to the limit parts of our conditions: the $L^1$-domination part of $(ia)$ and the moment condition $(ii)$ are expectation conditions, and the diagnostics of Appendix~\ref{app:theorycheck} test the pathwise limit parts while these domination and moment parts are assumed. Both differences make the signed conditions strictly weaker. Under the power-law ansatz $\EE|\EE[\Delta_n(x,t)\mid Z_{1:n-1}]|\asymp n^{-\beta}$ with $\beta>1$, a direct calculation shows that condition~(i) of Theorem~\ref{th:qm-general-gamma-clt} holds for $\gamma< 2\beta-2$, while random-sign cancellation in the signed partial sums extends the range to $\gamma<2\beta-1$, a structural one-unit gap.

\FloatBarrier
\section{Diagnostics for the predictive CLT sufficient conditions on a Beta-Bernoulli BFT}
\label{app:theorycheck}

We empirically assess whether the theoretical conditions required for the predictive CLT (Section~\ref{sec:PCLT}) are plausibly satisfied by a trained Beta-Bernoulli BFT. The assumptions in our theorems are asymptotic and involve expectations taken over the path of the predictive rule. The transformer's predictive rule is not given in closed form, so these conditions cannot be formally verified; instead, we perform diagnostics to detect \emph{gross violations} in a controlled setting.

In this Beta-Bernoulli specialisation the observations are binary and covariate-free, so the data tokens reduce to $Z_k = Y_k \in \{0,1\}$, and the transformer's one-step predictive is the scalar success probability
\begin{equation}
  g_k \;=\; \PP_\phi(Y_{k+1}=1 \mid Y_{1:k}),
\end{equation}
where $\phi$ denotes the trained network weights. Let $\Delta_k = g_k - g_{k-1}$ denote the realised one-step increment of this predictive probability, and let
\begin{equation}
  b_k \;=\; \EE[\Delta_k \mid Y_{1:k-1}]
  \label{eq:bb-bn}
\end{equation}
denote the conditional drift of the predictive rule (computed in closed form below; the binary covariate-free setting eliminates all Monte Carlo variance in the inner conditional expectation). The diagnostic checks two sets of drift conditions on $b_k$ (our signed conditions and the quasi-martingale conditions) together with a residual condition on $R_n$ that is shared between them.

\textbf{Our signed conditions} on the conditional drift $b_k$ are
\begin{align*}
    \textbf{(C1)} &\quad \sum_{k\ge n} b_k \to 0 \;\PP\text{-a.s.} && \text{(condition (iii) of Theorem~\ref{th:convergence})}, \\
    \textbf{(C2)} &\quad n^{\gamma/2}\sum_{k> n} b_k\to 0 \;\PP\text{-a.s.} && \text{(condition $(ia)$, first part of Theorem~\ref{th:ascondmult})}, \\
    \textbf{(C3)} &\quad n^{\gamma/2}\lvert b_n\rvert\to 0 \;\PP\text{-a.s.}\footnotemark && \text{(condition $(ia)$, second part of Theorem~\ref{th:ascondmult})}, \\
    \textbf{(C4)} &\quad n^{\gamma}\sum_{k\ge n} b_k^2\to 0 \;\PP\text{-a.s.} && \text{(condition $(ib)$ of Theorem~\ref{th:ascondmult})}.
\end{align*}
\footnotetext{The original condition $(ia)$ second part requires $n^{\gamma/2}\sup_{k\ge n}\lvert b_k\rvert\to 0$; we test the pointwise $k{=}n$ form, which is a necessary implication of the sup version.}

\textbf{The quasi-martingale conditions} of \citet{fortini26principled} on the conditional drift $b_k$ (stated formally in Appendix~\ref{app:qm-general-gamma}) come in two forms:
\begin{align*}
    \textbf{(Q1)} &\quad \sum_{k\ge 1} \EE\lvert b_k\rvert<\infty && \text{(condition (iii) of Theorem~\ref{th:qm-convergence})}, \\
    \textbf{(Q2)} &\quad \sum_{k\ge 1} k^{\gamma/2}\EE\lvert b_k\rvert<\infty && \text{(condition (i) of Theorem~\ref{th:qm-general-gamma-clt})}.
\end{align*}

\textbf{The residual condition} on the inflated outer product of residuals $R_n$ is shared between the two sets:
\begin{align*}
    \textbf{(R)} &\quad n^{\gamma}\sum_{k> n}\Delta_k^2\to V \;\PP\text{-a.s., some finite } V>0 && \text{(condition $(iii)$ of Theorems~\ref{th:ascondmult} and~\ref{th:qm-general-gamma-clt})}.
\end{align*}

The sampling arm of BPI, predictive Monte Carlo (PMC), in which one draws rollouts from the predictive rule and averages functionals along them, is justified as soon as the predictive rule itself converges, which is delivered by~(C1) among the signed conditions or by~(Q1) among the quasi-martingale conditions. The predictive CLT, which is what licenses the asymptotic credible bands of Theorem~\ref{th:ascondmult}, requires substantially more: at the supported rate $\gamma$, the drift conditions (C2)--(C4) on the signed side or (Q2) on the quasi-martingale side, together with~(R) on the inflated outer product of residuals, which is common to both sets.

(C1)--(C4) require the conditional drift $b_k$ to decay to zero; (R) requires the inflated outer product of residuals $R_n$ to converge to a positive finite limit. The role of (R) is asymmetric in $\gamma$: it pins $\gamma$ at the unique value (if any) at which $n^\gamma\sum_{k>n}\Delta_k^2$ stabilises, since $\gamma$ smaller than that value drives the limit to zero and $\gamma$ larger drives it to infinity. (C2)--(C4), by contrast, give upper bounds: each fails as $\gamma$ becomes too large. The supported rate is the largest $\gamma$ on the grid at which all five conditions visibly hold on every rollout.

The drift halves of the two sets, (C1)--(C4) on the signed side and (Q2) on the quasi-martingale side, differ in two respects ((R) is common to both, so it is excluded from the comparison). First, (C1)--(C4) are \emph{pathwise}: they are stated as $\PP$-almost-sure limits along a single sampled sequence $Z_1, Z_2, \ldots$ (a \emph{rollout}), so each rollout can be inspected on its own. (Q2), by contrast, takes an outer expectation $\EE$ over the law of the sequence, so it constrains the average of $\lvert b_k\rvert$ across rollouts and says nothing about any individual rollout. Second, (C1) and (C2) use signed sums of $b_k$, whereas~(Q2) sums $\lvert b_k\rvert$; signed sums are weaker because cancellation between positive and negative drifts can produce convergence even when absolute summability fails. These differences make (C1)--(C4) strictly weaker than~(Q2).

The size of the gap can be made explicit under a single magnitude assumption: that $\lvert b_k\rvert$ has typical scale $k^{-\beta}$, $\beta>1$, both on average ($\EE\lvert b_k\rvert\asymp k^{-\beta}$) and pathwise along a typical rollout ($\lvert b_k\rvert\asymp k^{-\beta}$). Both rates follow from this one ansatz, but the two sides apply it through different summation operations. (Q2) sums absolute values, and a direct calculation gives $\gamma_Q^* := \sup\{\gamma: (Q2)\text{ holds}\} = 2\beta-2$. (C2) sums signed $b_k$ instead, and if the signs along a rollout are approximately random its partial sums enjoy a $\sqrt n$ cancellation, so (C2) holds for $\gamma<2\beta-1$ where without cancellation the threshold would be $2\beta-2$. (C3) and (C4) involve no cancellation and, under the same ansatz, hold for $\gamma<2\beta$ and $\gamma<2\beta-1$ respectively. The binding pathwise rate across (C2)--(C4) is therefore $2\beta-1$, a $1$-unit improvement over (Q2) attributable to the cancellation available when the drifts in (C2) are summed with their signs.

\subsection{Setup}
\label{app:theorycheck-setup}

\paragraph{Meta-learning.}
The model is a minimal PFN \citep{muller22pfns} consisting of a standard \texttt{nn.TransformerEncoder} with $d_{\mathrm{model}}=64$, $L=2$ layers, $H=4$ heads, and FFN width $128$. No positional encoding is used, so the network is exactly permutation-equivariant in the prefix. Each training step samples a random prefix length $K$, feeds the prefix $Y_{1:K}$ as context, and applies BCE loss on the query tokens at positions $>K$, following \citet{muller22pfns}. Meta-learning sequences are drawn from the prior predictive of a fixed Beta-Bernoulli model:
\[
    \tilde\theta\sim\mathrm{Beta}(1,1),
    \qquad
    Y_1,\dots,Y_{1024}\mid\tilde\theta\stackrel{\mathrm{iid}}{\sim}\mathrm{Bernoulli}(\tilde\theta),
\]
with the meta-learning sequence length set to $T=1024$. There are no covariates ($\mathcal X$ is a singleton), and $\mathcal Y=\{0,1\}$ is trivially compact, so conditions~(i) and~(ii) of Theorem~\ref{th:convergence} are satisfied automatically for the trained BFT regardless of how rollouts are generated; only the signed tail-sum condition~(iii) of Theorem~\ref{th:convergence}, the rate-weighted conditions on the conditional drift $b_k$ of Theorem~\ref{th:ascondmult}, and condition~(iii) of Theorem~\ref{th:ascondmult} on the inflated outer product of residuals $R_n$ are non-trivial to check. We compare two meta-learning budgets that differ only in the number of training steps: a \textbf{600-step run} ($9{,}600$ tasks seen, warmup $50$) and a \textbf{50k-step run} ($800{,}000$ tasks seen, warmup $1{,}000$); both use batch size $16$ and AdamW with a cosine schedule.

\paragraph{Predictive rules evaluated.}
We run the same diagnostic pipeline on \emph{ten} predictive rules, grouped into three families.
\begin{itemize}
  \item \textbf{Bayes oracle} (1 predictive rule): the exact posterior predictive
  \[
    g_k^{\mathrm{oracle}} \;=\; \frac{1+\sum_{i=1}^k Y_i}{2+k}.
  \]
  $g_k^{\mathrm{oracle}}$ is an exact martingale, so $b_k\equiv 0$ analytically and every condition we test holds trivially. The oracle curves in the figures trace the floating-point roundoff floor amplified by rate weighting.
  \item \textbf{Corrupted oracles} (8 predictive rules), a calibration family with known asymptotics:
  \[
    g_k^{\mathrm{corr}} \;=\; \sigma\!\Big(\mathrm{logit}\,g_k^{\mathrm{oracle}} \;+\; \eta_k\Big),
    \qquad
    \eta_k \;=\; c_k\,\xi_k,
  \]
  where $\xi_k=\xi(Y_{1:k},k)$ is a deterministic standard-normal hash of the prefix bits and the step index. Determinism in $(Y_{1:k},k)$ preserves the closed-form two-point evaluation of $b_k$ developed in Appendix~\ref{app:theorycheck-probes} (the inner $\EE$ over $Y_k$ still reduces to two evaluations of $g_k^{\mathrm{corr}}$). We probe two corruption schedules:
  \begin{itemize}
    \item \emph{Constant noise}, $c_k=\varepsilon$, at $\varepsilon\in\{10^{-3},10^{-2},10^{-1}\}$ (3 predictive rules): asymptotically a non-vanishing logit-scale perturbation, so $\lvert b_n\rvert=O(1)$.
    \item \emph{Power-law decay}, $c_k=\varepsilon\,k^{-p}$, at $\varepsilon=0.5$ and $p\in\{0.25,0.5,1.0,1.5,2.0\}$ (5 predictive rules): injects a step of magnitude $O(n^{-p})$, so $\lvert b_n\rvert=O(n^{-p})$.
  \end{itemize}
  Since the logit perturbation is of the same order as the probability-scale step it induces when $g_k^{\mathrm{oracle}}$ is bounded away from $0$ and $1$, the predicted QM decay exponent is $0$ (noise) or $p$ (decay). This family serves as a pipeline check: the fitted $\widehat\beta$ on $\widehat{\EE}\lvert b_n\rvert$ should recover the injected rate.
  \item \textbf{Meta-learned BFTs} (2 predictive rules), the 600-step and 50k-step checkpoints described above.
\end{itemize}

\subsection{Diagnostic computations}
\label{app:theorycheck-probes}

\paragraph{Computing $b_k$ in closed form.}
\label{app:theorycheck-compute}
Because the setting is binary and covariate-free, the conditional drift can be computed exactly without Monte Carlo. Write
\[
  g_k^{(y)} \;:=\; \PP(Y_{k+1}=1\mid Y_{1:k-1},\, Y_k = y),
  \qquad y\in\{0,1\},
\]
for the prediction when the $k$-th observation is \emph{set} to $y$ (i.e.\ the BFT is queried on the augmented prefix $Y_{1:k-1},y$). Because $Y_k\in\{0,1\}$ and $\PP(Y_k=1\mid Y_{1:k-1})=g_{k-1}$, the conditional drift reduces to a two-point average:
\begin{equation}
  b_k
  \;=\; g_{k-1}\,g_k^{(1)} \;+\; (1-g_{k-1})\,g_k^{(0)} \;-\; g_{k-1}.
  \label{eq:bb-two-point}
\end{equation}
A single evaluation of $b_k$ therefore costs exactly three forward passes of the predictive rule (one for $g_{k-1}$ and one each for $g_k^{(0)}$ and $g_k^{(1)}$), with no Monte Carlo average over $Y_k$. The resulting quantity is a deterministic, pathwise function of the prefix $Y_{1:k-1}$.

\paragraph{Rollouts.}
Conditions~(C1)--(C4) and (R) are pathwise $\PP$-a.s.\ statements with respect to the law $\PP$ of the sequence the predictive rule is applied to, so they must be checked against each predictive rule's \emph{own} induced law; the prior predictive used for BFT meta-learning plays no role here. We therefore run the same diagnostic for every one of the ten predictive rules in Appendix~\ref{app:theorycheck-setup}. For each predictive rule $g_k$ we generate $16$ independent rollouts, each of length $N=10{,}001$. Each rollout $Y_{1:N}$ is produced iteratively by sampling
\[
    Y_n \;\sim\; \mathrm{Bernoulli}(g_{n-1}),\qquad n=2,3,\dots,N.
\]
The first token $Y_1$ is drawn from $\mathrm{Bernoulli}(g_0)$, where $g_0$ is the predictive rule's step-$0$ output ($g_0^{\mathrm{oracle}}=g_0^{\mathrm{corr}}=\tfrac12$ for the oracle and corrupted oracles; a uniform $\mathrm{Bernoulli}(\tfrac12)$ seed for the BFTs, since the BFT's forward pass requires at least one context token). For the oracle, the induced law \emph{is} the Beta-Bernoulli prior predictive by construction; for the corrupted oracles and the BFTs, the induced law is a proper mixture that is not available in closed form. Since the BFTs are meta-learned at $T=1024$, indices $n>1024$ are out-of-distribution for the BFTs; we still take $N=10{,}001$ because the conditions are statements as $n\to\infty$.

\paragraph{Evaluating (C1)--(C4).}
These conditions are pathwise ($\PP$-a.s.) statements about \emph{individual} rollouts, so we inspect the per-rollout quantities directly. Probe positions run over $n\in\{2,\dots,N-1\}$. We evaluate~\eqref{eq:bb-two-point} at every probe position on each rollout, and the realised increment $\Delta_n^{(r)} = g_n^{(r)} - g_{n-1}^{(r)}$ is read off the same forward passes. The tail sums in (C1), (C2), and (C4) are truncated at the largest available index $N-1$:
\[
    \sum_{k\ge n} b_k^{(r)} \;\approx\; \sum_{k=n}^{N-1} b_k^{(r)},
    \qquad
    \sum_{k\ge n} (b_k^{(r)})^2 \;\approx\; \sum_{k=n}^{N-1} (b_k^{(r)})^2.
\]

\paragraph{Evaluating (Q1)--(Q2).}
Both conditions involve an outer expectation over rollouts, so we evaluate them via the cross-rollout mean $\widehat{\EE}\lvert b_n\rvert := \tfrac{1}{16}\sum_r \lvert b_n^{(r)}\rvert$ and fit a power law $\widehat{\EE}\lvert b_n\rvert\approx C n^{-\widehat\beta}$ by OLS on log--log axes over $n\in[10,2000]$. Under the ansatz $\EE\lvert b_n\rvert\asymp n^{-\beta}$, (Q1) requires $\beta>1$ and (Q2) at rate $\gamma$ requires $\beta>1+\gamma/2$, so
\[
    \gamma_Q^* \;\approx\; \min\!\bigl(1,\; \max(0,\, 2(\widehat\beta - 1))\bigr)
\]
is the largest $\gamma$ at which (Q2) holds, and (Q1) corresponds to $\widehat\beta>1$. The OLS confidence intervals are narrow ($\pm 0.002$ and $\pm 0.008$ for the 600-step and 50k-step BFTs) because the OLS standard errors ignore the autocorrelation of the log--log residuals; the point estimates should be read as accurate to $\sim\pm 0.03$, an estimate we calibrate on the corrupted-oracle family in Appendix~\ref{app:theorycheck-findings}. As an auxiliary visual check we also compute the cumulative overlays
\[
    U_\gamma(n) \;=\; \sum_{m\le n} m^{\gamma/2}\,\widehat{\EE}\lvert b_m\rvert,
    \qquad \gamma\in\{0, 0.25, 0.5, 0.75, 1.0\},
\]
which flatten iff~(Q2) holds at that $\gamma$.

\paragraph{Evaluating (R).}
For Beta-Bernoulli the oracle increment satisfies $\Delta_k = (Y_k-g_{k-1})/(k+2)$ exactly, so at the supported rate $\gamma=1$
\begin{equation*}
  n\!\!\sum_{k=n+1}^{N-1}\!\!\Delta_k^2 \;\approx\; \tilde\theta(1-\tilde\theta)\,\bigl(1 - n/(N-1)\bigr),
\end{equation*}
where $\tilde\theta$ is the de Finetti limit of $g_k$ on that rollout; $\tilde\theta(1-\tilde\theta)$ is the limit $V$ in condition~(iii) of Theorem~\ref{th:ascondmult} at the supported rate. The relation $V=\tilde\theta(1-\tilde\theta)$ is specific to $\gamma=1$: at $\gamma<1$ the truncated rate-weighted sum decays to zero and at $\gamma>1$ it diverges. We evaluate the rate-weighted truncated sum at a single probe index $n_*=2000$ (well inside the regime $1\ll n_*\ll N$) and divide by the $\gamma=1$ truncation factor $\bar C_1 = 1 - n_*/(N-1)\approx 0.8$, applied uniformly at every $\gamma$:
\begin{equation*}
  \hat V_r(\gamma) \;:=\; \bar C_1^{-1} \cdot n_*^\gamma \sum_{k=n_*+1}^{N-1}(\Delta_k^{(r)})^2.
\end{equation*}
Define $\hat\theta_r=N^{-1}\sum_{i=1}^N Y_i^{(r)}$. We display one mini-scatter of $(\hat\theta_r(1-\hat\theta_r),\,\hat V_r(\gamma))$ per $\gamma$ row, with the diagonal $y=x$ as the $\gamma=1$ oracle reference. Under the oracle, the $\gamma=1$ row sits on $y=x$; $\gamma<1$ rows show $\hat V_r(\gamma)$ collapsing towards $0$; $\gamma>1$ rows would shoot above the diagonal (not in our test grid). The supported rate is the row in which the dots align with the diagonal; systematic departures at that row (over- or under-spread) diagnose calibration failure on individual rollouts.

\paragraph{Figure layout.}
The diagnostic is run on all ten predictive rules of Appendix~\ref{app:theorycheck-setup}, with figures grouped by family of predictive rules: the two meta-learned BFTs (Figures~\ref{fig:bb-600} and~\ref{fig:bb-50k}), the Bayes-oracle precision floor (Figure~\ref{fig:bb-oracle}), and the calibration family of eight rules (Figures~\ref{fig:bb-corrupt-noise1e-3}--\ref{fig:bb-corrupt-decay-p20}). Each figure has two subpanels:
\begin{itemize}
    \item the \emph{signed-condition panel} comprises (i) a top row for (C1) (no $\gamma$, full width); and (ii) a $10\times 4$ grid sweeping $\gamma\in\{0.1, 0.2, \dots, 1.0\}$ across (C2)--(C4) on log axes and the (R) scatter with truncation correction on linear axes. The (R) cell in each row plots the $16$ per-rollout pairs $(\hat\theta_r(1-\hat\theta_r),\,\hat V_r(\gamma))$ against the diagonal $y=x$.
    \item the \emph{(Q1)+(Q2) panel} shows $\widehat{\EE}\lvert b_n\rvert$ with the OLS power-law fit over $n\in[10,2000]$ (read (Q1) off $\widehat\beta>1$, $\gamma_Q^*$ for (Q2) off $\min\!\bigl(1, \max(0, 2(\widehat\beta-1))\bigr)$), and the $U_\gamma(n)$ overlay (each $U_\gamma$ curve solid or dashed according to whether $\widehat\beta>1+\gamma/2$).
\end{itemize}
We deliberately do \emph{not} automate a pass/fail test on the signed-condition panels: any threshold-based rule interacts badly with the early-$n$ transient of (C2), which is non-monotone and whose peak can occur at $n\ll 100$ on a subset of rollouts. We instead read the drift columns by eye, as the largest row at which (C2)--(C4) visibly decay to zero on every rollout. (C2)--(C4) amplify harder as $\gamma$ grows, so failures appear first at the bottom rows. The (R) column is read row-by-row: the dots should cluster along $y=x$ in every row when the predictive rule mimics the prior predictive, with rate-invariant departures flagging calibration failures.

\textbf{Finite-$N$ caveat for the drift columns.} (C1), (C2), and~(C4) all involve tail sums truncated at index $N-1$; as $n\to N$ these truncated sums collapse to zero by construction, so those panels are only trustworthy for $n\ll N$. We therefore restrict the $x$-axis on these three panels to $n\le 5\times 10^3$. Condition~(C3) is not affected and is shown over the full probe range $n\in[2,N-1]$. The (R) scatter cells use the explicit truncation correction $1/\bar C_1$ described above.

\paragraph{Computational cost.}
Each diagnostic run costs $16\,(N-1)\approx 1.6\times 10^5$ forward passes for rollout generation, plus $3\cdot 16\,(N-2)\approx 4.8\times 10^5$ forward passes for probing: three forward passes per evaluation of~\eqref{eq:bb-two-point} (one for $g_{k-1}$, one each for $g_k^{(0)}$ and $g_k^{(1)}$), evaluated at every $n\in\{2,\dots,N-1\}$ on each of the $16$ rollouts. Both BFT runs took approximately $25$~minutes each on a single NVIDIA H100 SXM GPU; the 50k-step meta-learning took a further $\sim 5$~minutes on the same device.

\subsection{Findings}
\label{app:theorycheck-findings}

\paragraph{Meta-learned BFTs.}
We discuss each condition in turn, referring to the 600-step BFT (Figures~\ref{fig:bb-600-signed},~\ref{fig:bb-600-qm}) and the 50k-step BFT (Figures~\ref{fig:bb-50k-signed},~\ref{fig:bb-50k-qm}) side-by-side.

\emph{(C1)}, top panel, no $\gamma$. On both BFTs the tail sum $\bigl\lvert\sum_{k=n}^{N-1} b_k^{(r)}\bigr\rvert$ decays to zero on every rollout.

\emph{(C2)}. Rate-weighted tail residuals decay over the displayed window ($n\in[2000,5000]$) at every $\gamma\le 1$ on both BFTs; the per-rollout band is tighter on the 50k-step BFT, and in every row the values at the start of the displayed window lie within an order of magnitude of those at the largest displayed $n$.

\emph{(C3)}. The pointwise $n^{\gamma/2}\lvert b_n^{(r)}\rvert$ visibly decays at every $\gamma\le 1$ on both BFTs. This is the only column unaffected by the finite-$N$ truncation.

\emph{(C4)}. Rate-weighted tail sums of squares decay at every $\gamma\le 1$ on both BFTs.

\emph{(Q1) and (Q2)}. The OLS fits give $\widehat\beta = \sixhundredbeta$ on the 600-step BFT (mean $\lvert b_n\rvert \approx \sixhundredmeanbn$) and $\widehat\beta = \fiftykbeta$ on the 50k-step BFT (mean $\lvert b_n\rvert \approx \fiftykmeanbn$), giving
\[
    \gamma_Q^*(\text{600-step}) \;\approx\; \sixhundredgq,
    \qquad
    \gamma_Q^*(\text{50k-step}) \;\approx\; \fiftykgq.
\]
The fitted exponent did not improve with training ($\widehat\beta=\fiftykbeta$ at 50k steps against $\sixhundredbeta$ at 600 steps), so the improvement from training is visible in the (R) calibration below and in the $\gamma$-free diagnostic of Appendix~\ref{app:variance-only}; the QM fit alone does not show it.
Both BFTs clear $\widehat\beta>1$, so (Q1) holds on each and the predictive rule converges $\PP$-a.s., a result of independent interest, since (Q1) suffices for the sampling arm of BPI (predictive Monte Carlo). Neither reaches $\widehat\beta>3/2$, so (Q2) at $\gamma=1$ fails on each and the QM route alone does not support a $\sqrt n$-rate predictive CLT.

\emph{(R)}. The 50k-step BFT scatter sweeps cleanly across the $\gamma$ rows: dots cluster near $y=0$ at $\gamma\le 0.5$, gradually lift towards the diagonal between $\gamma=0.6$ and $\gamma=0.9$, and sit on the diagonal at $\gamma=1$, indicating $\hat V_r(1)\approx \hat\theta_r(1-\hat\theta_r)$ on every rollout and confirming that the BFT's increments mimic the Beta-Bernoulli oracle's at the supported rate $\gamma=1$. The 600-step BFT shows the same rate-sweep pattern but with the $\gamma=1$ dots sitting systematically \emph{below} the diagonal on boundary rollouts (those with $\hat\theta_r$ near $0$ or $1$): $\hat V_r(1)\approx 0.3$--$0.5$ times the theoretical $\hat\theta_r(1-\hat\theta_r)$; in the centre of $\theta$-space ($\hat\theta_r\in[0.4, 0.7]$) the points recover the diagonal. The under-trained BFT is over-confident on boundary rollouts: its $g_k$ converges faster than the oracle's, shrinking $\Delta_k$ below the PPD-predicted scale. $\hat V_r(1)$ still remains strictly positive on every rollout, so (R) holds at $\gamma=1$ even when the BFT quantitatively under-spreads.

\smallskip
Taken together, the signed conditions (C1)--(C4) and~(R) are consistent with Theorem~\ref{th:ascondmult} at $\gamma=1$ on both BFTs, though the $\gamma$-free diagnostic (Appendix~\ref{app:variance-only}) shows residual drift bias on the 600-step BFT at accessible $n$. The QM route, which would require (Q2) and~(R) at the \emph{same} $\gamma$, supports the CLT at no $\gamma$: the supported ranges for (Q2), $\gamma\le \sixhundredgq$ on the 600-step BFT and $\gamma\le \fiftykgq$ on the 50k-step BFT, are disjoint from the range $\gamma=1$ for (R). This is the structural $1$-unit gap (signed rate $2\beta-1$ vs.\ QM rate $2\beta-2$ under the magnitude ansatz, derived above) realised empirically. The signed conditions are \emph{sufficient}; our diagnostics test for \emph{gross} violations and do not constitute formal certification. A diagnostic curve that does not visibly decay over the probe range $n\in[2, N-1]$ is ambiguous: the underlying condition may fail asymptotically, or it may decay only at scales beyond our compute budget. We cannot distinguish these two without extending the rollouts indefinitely.

\paragraph{Bayes oracle.}
On Figure~\ref{fig:bb-oracle}, the analytic predictive is an exact martingale, so $b_k\equiv 0$ pathwise.

\emph{(C1)--(C4)}. Sit at the floating-point random-walk floor of order $\epsilon\sqrt n\sim 10^{-15}$ at every $\gamma$.

\emph{(Q1) and (Q2)}. Hold vacuously at every $\gamma$.

\emph{(R)}. The oracle's $\Delta_k=(Y_k-g_{k-1})/(k+2)$ identity gives $n\sum_{k>n}\Delta_k^2\to\tilde\theta(1-\tilde\theta)$ exactly, and the truncation correction makes the $\gamma=1$ scatter sit on $y=x$ to within numerical precision (per-rollout ratios $\hat V_r(1)/\hat\theta_r(1-\hat\theta_r)\in[0.97,1.03]$). The $\gamma<1$ rows show $\hat V_r(\gamma)$ collapsing towards zero as $\gamma$ decreases, in line with the rate analysis. This calibrates the diagnostic end-to-end and serves as the reference shape against which the BFT scatters are read.

\smallskip
The oracle's role is dual: it sets the noise floor below which any signal is numerical artefact, and it gives the reference (R) shape against which the BFT panels are read.

\paragraph{Corrupted oracles.}
The corruption schedule injects a known $\lvert b_k\rvert$ decay (Figures~\ref{fig:bb-corrupt-noise1e-3}--\ref{fig:bb-corrupt-decay-p20}), which calibrates both the (Q2) fit and the per-panel reading.

\emph{(C1)--(C4)}. The rate-weighted panels track the injected decay: constant-noise fails at every $\gamma>0$; $p=0.5$ is marginal at $\gamma=1$ on (C3) and fails the tail-residual columns; $p\ge 1.5$ passes every rate-weighted panel at every tested $\gamma$. The progression is monotone in the injected exponent.

\emph{(Q1) and (Q2)}. Fitted exponents on $\widehat{\EE}\lvert b_n\rvert$ are $\widehat\beta=-0.018,-0.018,-0.021$ at $\varepsilon=10^{-3},10^{-2},10^{-1}$ (predicted $0$) and $\widehat\beta=0.228, 0.473, 0.993, 1.490, 1.989$ at $p=0.25, 0.5, 1.0, 1.5, 2.0$ (predicted $p$); $\widehat\beta$ matches its prediction to within $0.03$ across all eight, so the (Q2) pipeline is trustworthy. (Fitted OLS CIs of $\pm 0.022$ are narrow because the OLS standard errors ignore the autocorrelation of the log--log residuals; the $\pm 0.03$ spread is the more honest accuracy estimate.) (Q1) requires $\widehat\beta>1$, which holds only on the decay schedule with $p>1$ ($p=1$ sits on the boundary and its fit $\widehat\beta=0.993$ fails the criterion); the three constant-noise corruptions and the $p\le 1$ decays violate (Q1).

\emph{(R)}. The $\gamma=1$ scatter cell tracks the injected corruption monotonically. Constant noise blows up: per-rollout median $\hat V_r(1)/\hat\theta_r(1-\hat\theta_r)\approx 10$ at $\varepsilon=10^{-3}$, $\approx 10^{3}$ at $\varepsilon=10^{-2}$, and $\approx 10^{5}$ at $\varepsilon=10^{-1}$, with the dots sitting far above the diagonal (the $y$-axis auto-scales to accommodate them). Power-law decay $p<1$ also overshoots ($\approx 4\times 10^{4}$ at $p=0.25$, $\approx 5\times 10^{2}$ at $p=0.5$); $p=1.0$ is borderline (median ratio $1.11$); $p\ge 1.5$ lies on the diagonal within the same numerical-precision band as the oracle. Lower-$\gamma$ rows show progressively smaller $\hat V_r(\gamma)$ in cases where the increments do decay (decay $p\ge 1$); for the constant-noise and slow-decay cases the increments do not decay, so $\hat V_r(\gamma)$ is large at every $\gamma$ row. Both failure directions and successful recovery are exhibited, so the (R) scatter is calibrated end-to-end.

\begin{figure}[p]
  \centering
  \begin{subfigure}[t]{\linewidth}
    \centering
    \includegraphics[width=\linewidth,height=0.70\textheight,keepaspectratio]{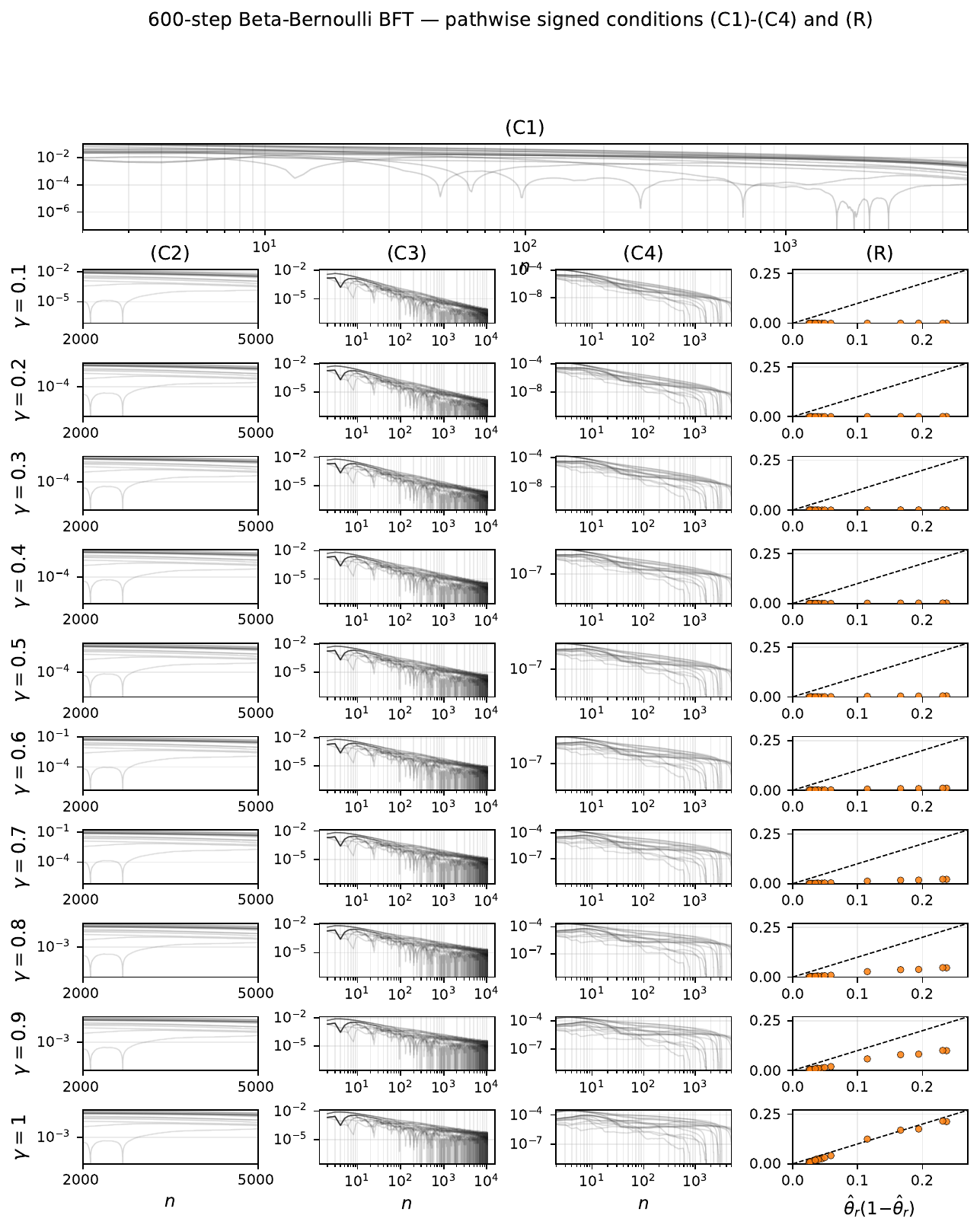}
    \caption{\textbf{Signed conditions, 600-step BFT.} Per-rollout trajectories under the BFT-induced law: (C1) on top, then a row per $\gamma\in\{0.1,\dots,1.0\}$ across (C2), (C3), (C4), and the (R) scatter (formulas in panel titles; estimator $\hat V_r(\gamma)$ defined in ``Evaluating (R)'' above). (C1), (C4) restricted to $n\le 5\times 10^3$; (C2) shown for $n\in[2\times 10^3, 5\times 10^3]$; (C3) over the full probe range. The supported rate is the largest row at which (C2)--(C4) visibly decay; the (R) column should cluster on the diagonal at $\gamma=1$ and collapse towards zero at $\gamma<1$.}
    \label{fig:bb-600-signed}
  \end{subfigure}\\[1.5ex]
  \begin{subfigure}[t]{\linewidth}
    \centering
    \includegraphics[width=\linewidth]{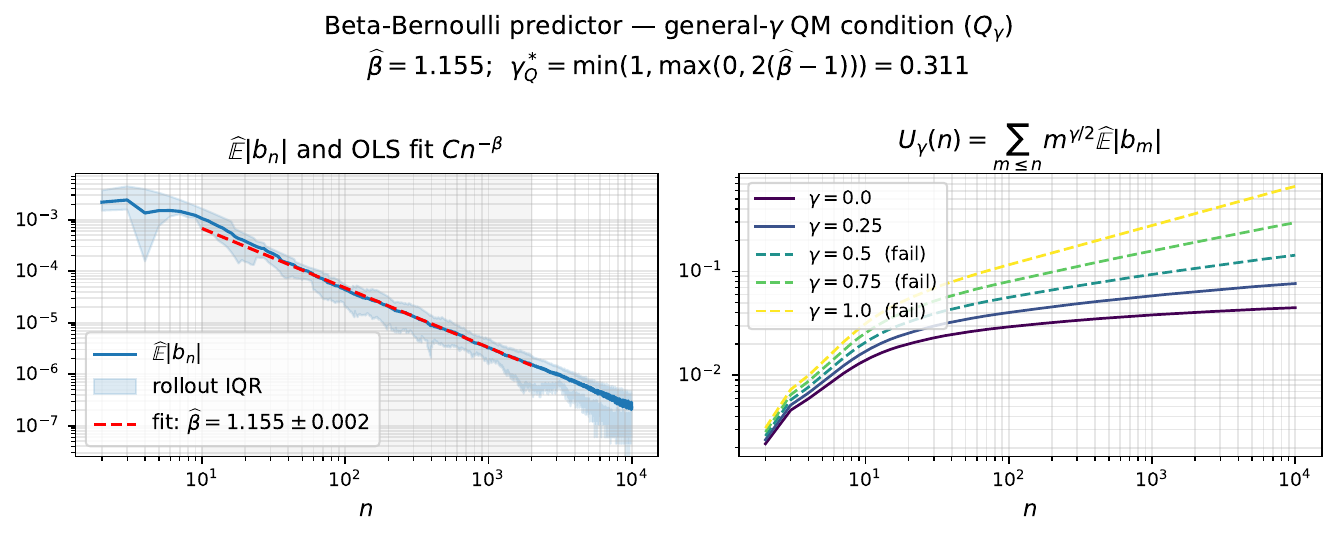}
    \caption{\textbf{$(Q1){+}(Q2)$, 600-step BFT.} Left: cross-rollout mean $\widehat{\EE}\lvert b_n\rvert$ with OLS power-law fit giving $\widehat\beta=\sixhundredbeta$, hence $\gamma_Q^*=\min(1, \max(0, 2(\widehat\beta-1)))\approx\sixhundredgq$. Right: cumulative overlay $U_\gamma(n)$ for $\gamma\in\{0, 0.25, 0.5, 0.75, 1.0\}$; solid $=$ bounded under the fit, dashed $=$ diverges.}
    \label{fig:bb-600-qm}
  \end{subfigure}
  \caption{\textbf{Diagnostics on the 600-step Beta-Bernoulli BFT; rollouts sampled from the BFT's own predictive rule.}}
  \label{fig:bb-600}
\end{figure}

\begin{figure}[p]
  \centering
  \begin{subfigure}[t]{\linewidth}
    \centering
    \includegraphics[width=\linewidth,height=0.70\textheight,keepaspectratio]{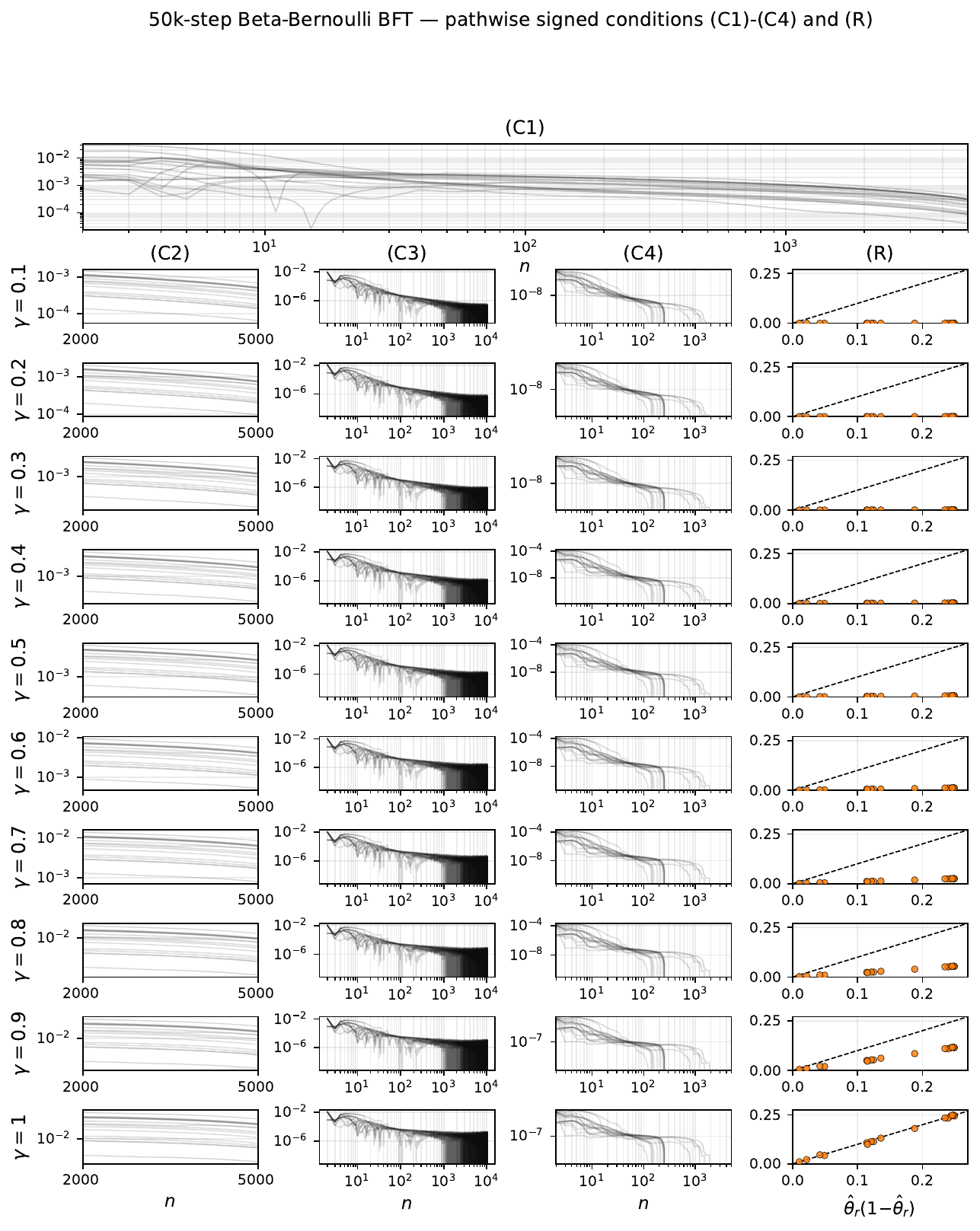}
    \caption{\textbf{Signed conditions, 50k-step BFT.} Same panel layout as Figure~\ref{fig:bb-600-signed}. Trajectories are visibly tighter than the 600-step checkpoint, consistent with the longer meta-learning.}
    \label{fig:bb-50k-signed}
  \end{subfigure}\\[1.5ex]
  \begin{subfigure}[t]{\linewidth}
    \centering
    \includegraphics[width=\linewidth]{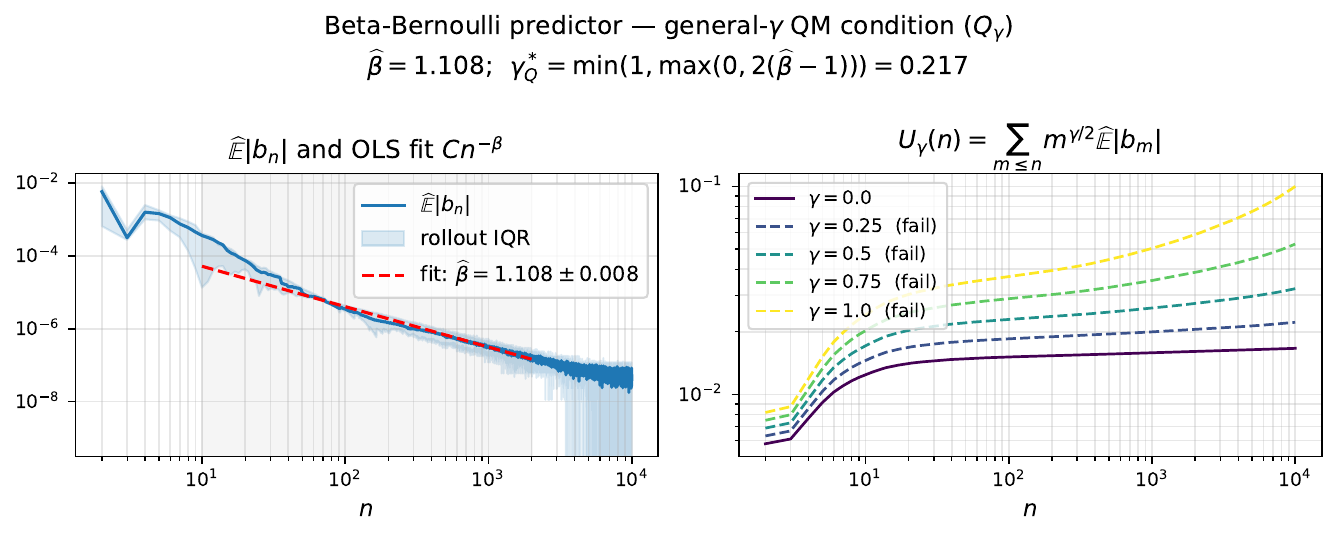}
    \caption{\textbf{$(Q1){+}(Q2)$, 50k-step BFT.} Same panel layout as Figure~\ref{fig:bb-600-qm}. The fitted decay is $\widehat\beta\approx\fiftykbeta$ (OLS on $n\in[10,2000]$), so $\gamma_Q^*\approx\fiftykgq$. $\widehat\beta>1$ clears the $\gamma=0$ special case but misses $\gamma=1$, reflected in the dashed $U_\gamma$ curves at higher $\gamma$.}
    \label{fig:bb-50k-qm}
  \end{subfigure}
  \caption{\textbf{Diagnostics on the 50k-step Beta-Bernoulli BFT; rollouts sampled from the BFT's own predictive rule.}}
  \label{fig:bb-50k}
\end{figure}

\begin{figure}[p]
  \centering
  \begin{subfigure}[t]{\linewidth}
    \centering
    \includegraphics[width=\linewidth,height=0.70\textheight,keepaspectratio]{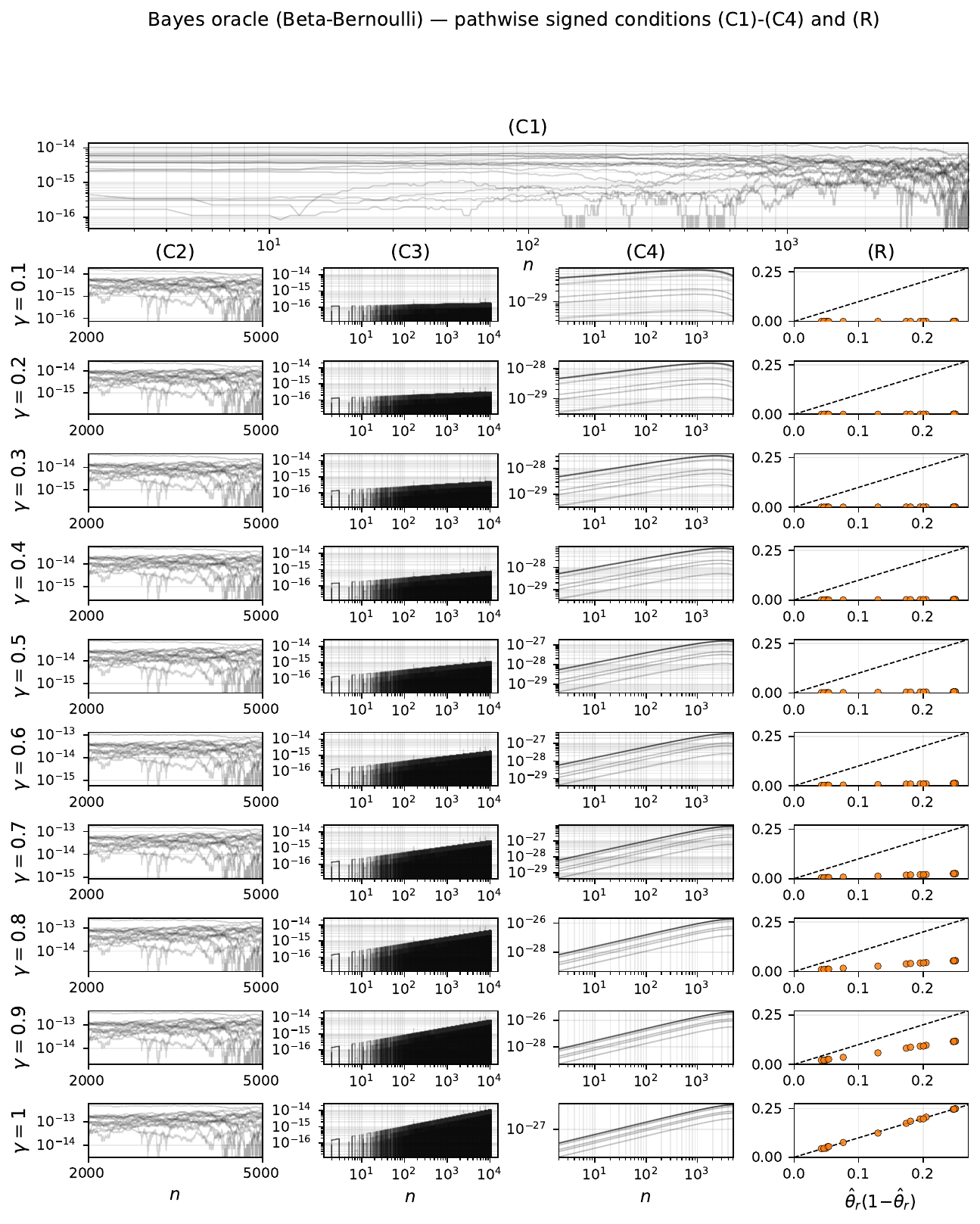}
    \caption{\textbf{Signed conditions, Bayes oracle.} Same panel layout as Figure~\ref{fig:bb-600-signed}, under the oracle-induced law. Since $b_n\equiv 0$ analytically, the (C1)--(C4) curves trace floating-point roundoff.}
    \label{fig:bb-oracle-signed}
  \end{subfigure}\\[1.5ex]
  \begin{subfigure}[t]{\linewidth}
    \centering
    \includegraphics[width=\linewidth]{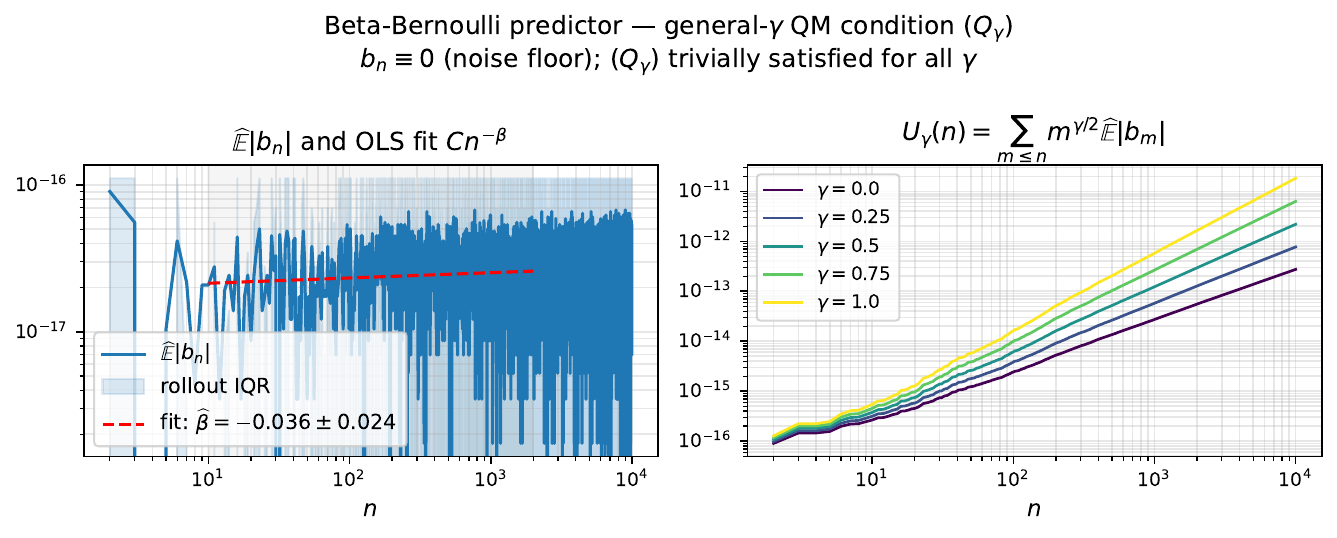}
    \caption{\textbf{$(Q1){+}(Q2)$, Bayes oracle.} Same panel layout as Figure~\ref{fig:bb-600-qm}. The oracle satisfies~(Q2) at every $\gamma$ trivially since $b_n\equiv 0$ analytically; the fitted log--log slope only reflects the power-law regression trying to fit the floating-point noise floor and is not a meaningful decay-rate estimate. Every $U_\gamma(n)$ curve sits many orders of magnitude below the BFT curves.}
    \label{fig:bb-oracle-qm}
  \end{subfigure}
  \caption{\textbf{Diagnostics on the Bayes oracle; rollouts sampled from the oracle's own predictive rule.}}
  \label{fig:bb-oracle}
\end{figure}

\begin{figure}[p]
  \centering
  \begin{subfigure}[t]{\linewidth}
    \centering
    \includegraphics[width=\linewidth,height=0.70\textheight,keepaspectratio]{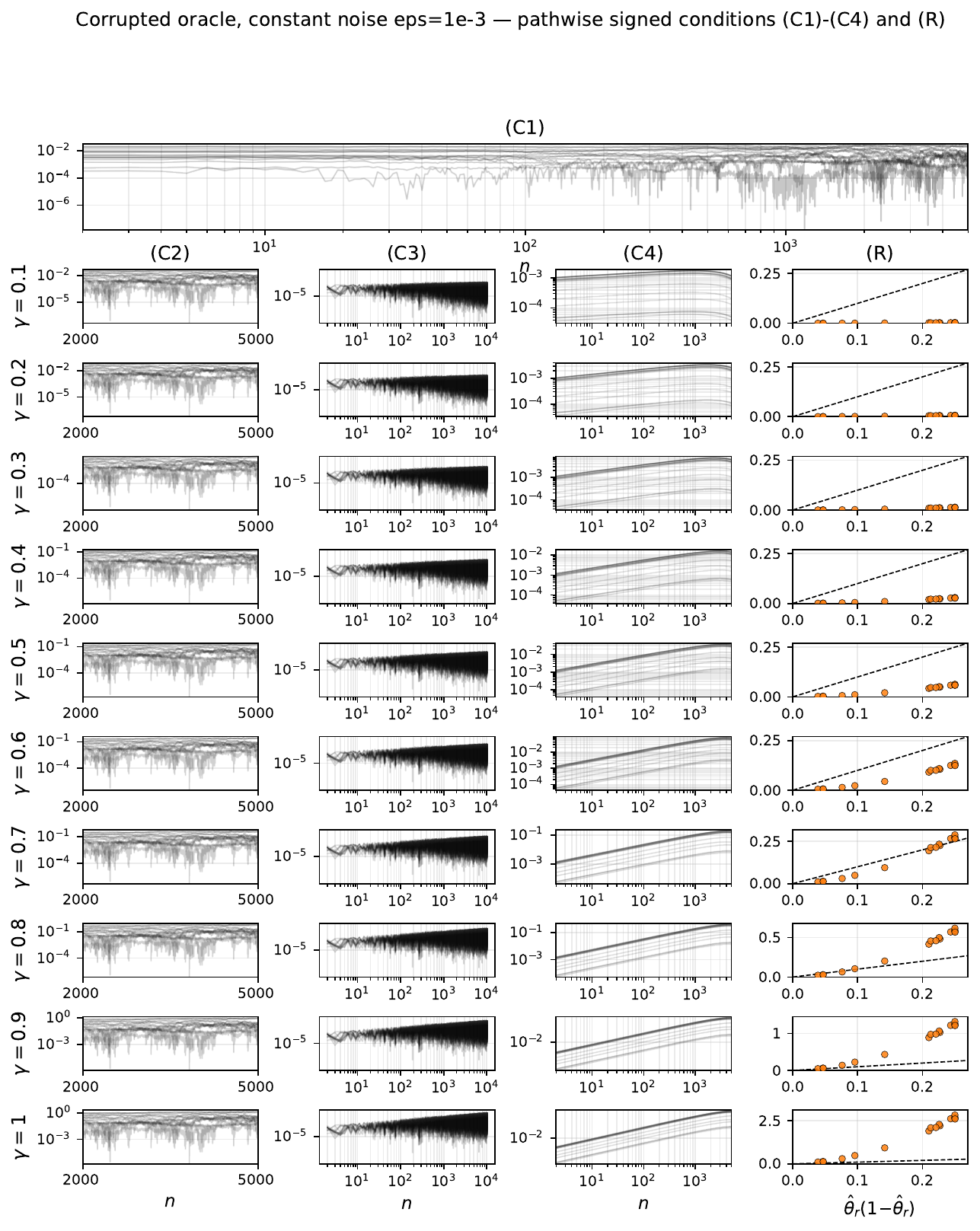}
    \caption{\textbf{Signed conditions, corrupted oracle (constant noise, $\varepsilon=10^{-3}$).} Same panel layout as Figure~\ref{fig:bb-600-signed}, under this predictive rule's own induced law. As expected for $\lvert b_n\rvert=O(1)$, no rate-weighted panel decays at any tested $\gamma$.}
    \label{fig:bb-corrupt-noise1e-3-signed}
  \end{subfigure}\\[1.5ex]
  \begin{subfigure}[t]{\linewidth}
    \centering
    \includegraphics[width=\linewidth]{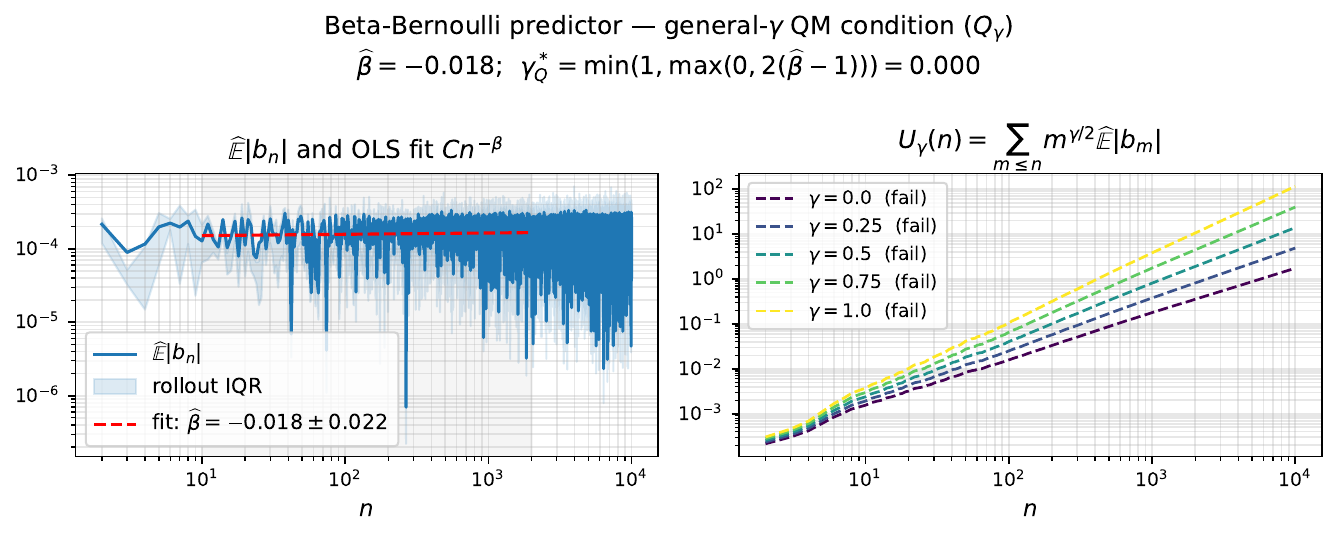}
    \caption{\textbf{$(Q1){+}(Q2)$, corrupted oracle (constant noise, $\varepsilon=10^{-3}$).} Same panel layout as Figure~\ref{fig:bb-600-qm}. Fitted $\widehat\beta=-0.018\pm 0.022$ (predicted $0$); $\gamma_Q^*=0$ (all $U_\gamma$ curves diverge).}
    \label{fig:bb-corrupt-noise1e-3-qm}
  \end{subfigure}
  \caption{\textbf{Corrupted oracle, constant noise $\varepsilon=10^{-3}$; rollouts sampled from this corrupted oracle's own predictive rule.}}
  \label{fig:bb-corrupt-noise1e-3}
\end{figure}

\begin{figure}[p]
  \centering
  \begin{subfigure}[t]{\linewidth}
    \centering
    \includegraphics[width=\linewidth,height=0.70\textheight,keepaspectratio]{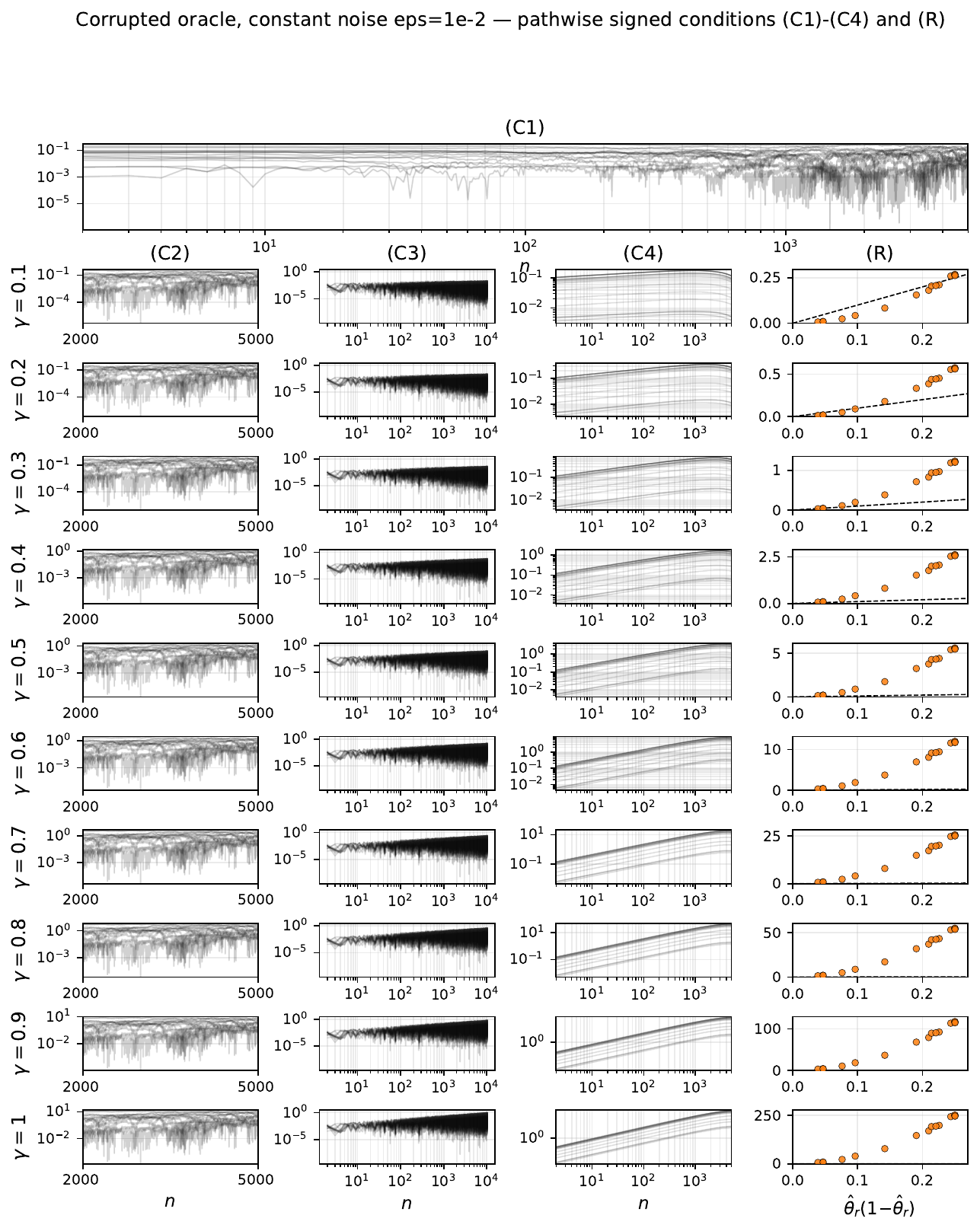}
    \caption{\textbf{Signed conditions, corrupted oracle (constant noise, $\varepsilon=10^{-2}$).} Same panel layout as Figure~\ref{fig:bb-600-signed}, under this predictive rule's own induced law. No rate-weighted panel decays at any tested $\gamma$.}
    \label{fig:bb-corrupt-noise1e-2-signed}
  \end{subfigure}\\[1.5ex]
  \begin{subfigure}[t]{\linewidth}
    \centering
    \includegraphics[width=\linewidth]{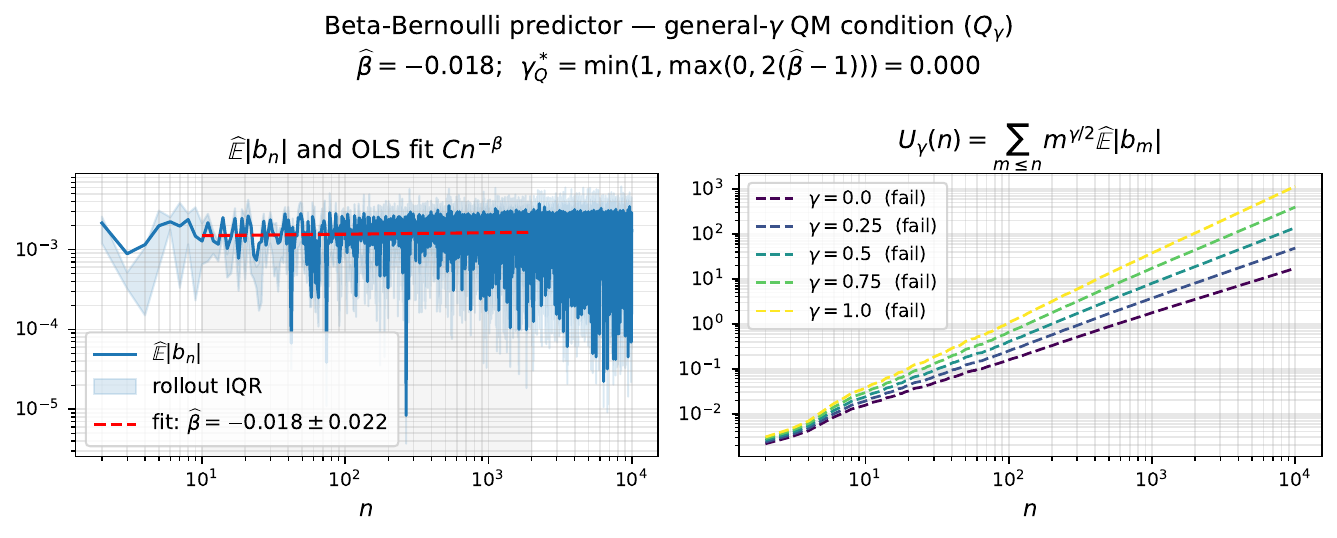}
    \caption{\textbf{$(Q1){+}(Q2)$, corrupted oracle (constant noise, $\varepsilon=10^{-2}$).} Same panel layout as Figure~\ref{fig:bb-600-qm}. Fitted $\widehat\beta=-0.018\pm 0.022$ (predicted $0$); $\gamma_Q^*=0$.}
    \label{fig:bb-corrupt-noise1e-2-qm}
  \end{subfigure}
  \caption{\textbf{Corrupted oracle, constant noise $\varepsilon=10^{-2}$; rollouts sampled from this corrupted oracle's own predictive rule.}}
  \label{fig:bb-corrupt-noise1e-2}
\end{figure}

\begin{figure}[p]
  \centering
  \begin{subfigure}[t]{\linewidth}
    \centering
    \includegraphics[width=\linewidth,height=0.70\textheight,keepaspectratio]{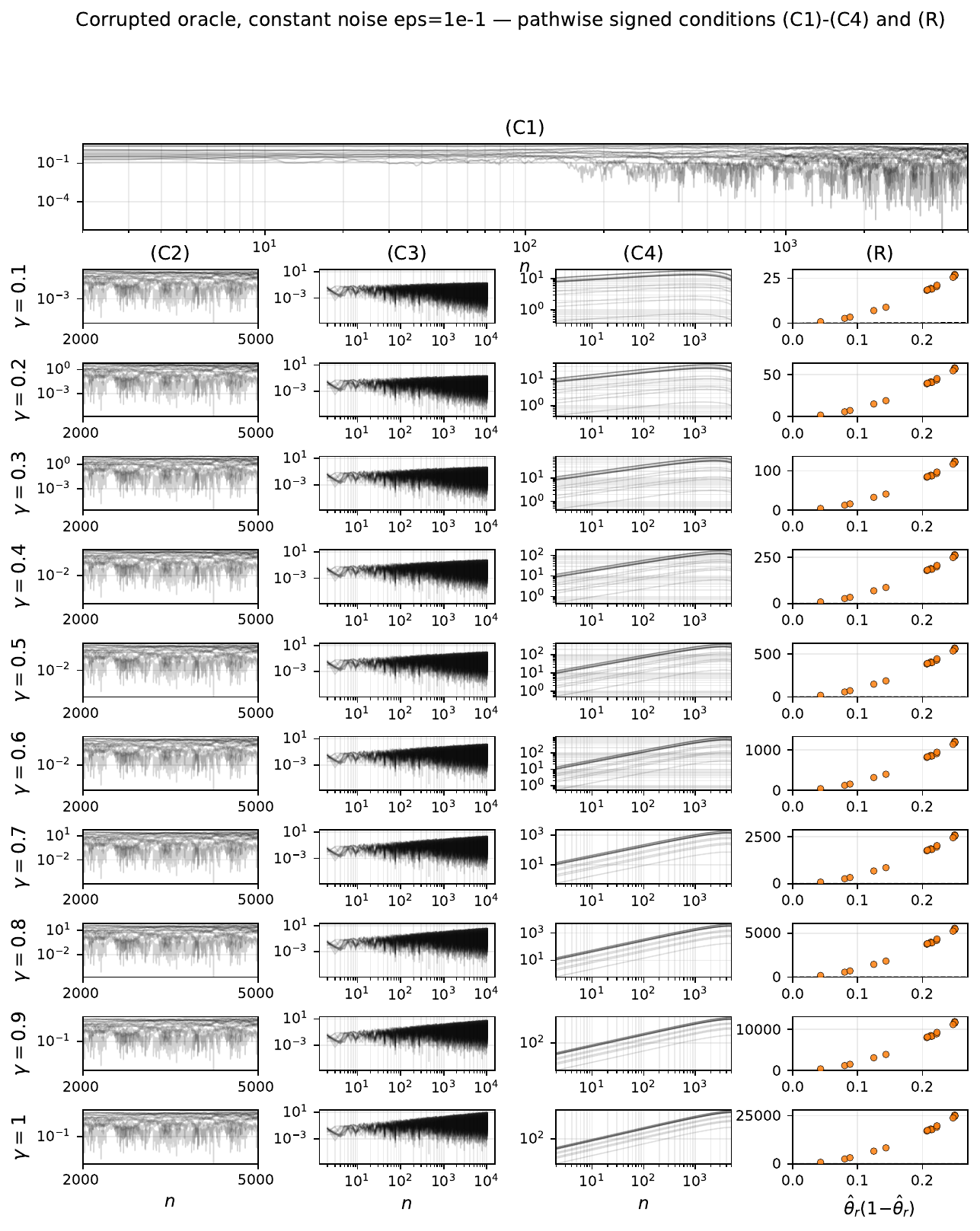}
    \caption{\textbf{Signed conditions, corrupted oracle (constant noise, $\varepsilon=10^{-1}$).} Same panel layout as Figure~\ref{fig:bb-600-signed}, under this predictive rule's own induced law. No rate-weighted panel decays at any tested $\gamma$; the larger perturbation amplitude is visible as elevated overall magnitude.}
    \label{fig:bb-corrupt-noise1e-1-signed}
  \end{subfigure}\\[1.5ex]
  \begin{subfigure}[t]{\linewidth}
    \centering
    \includegraphics[width=\linewidth]{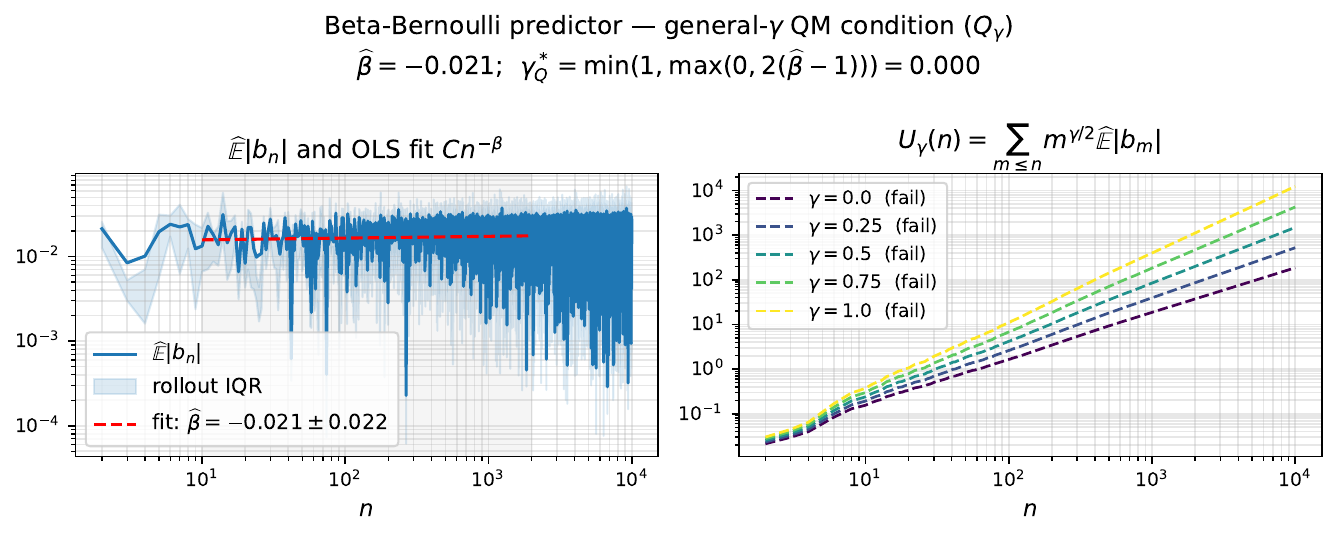}
    \caption{\textbf{$(Q1){+}(Q2)$, corrupted oracle (constant noise, $\varepsilon=10^{-1}$).} Same panel layout as Figure~\ref{fig:bb-600-qm}. Fitted $\widehat\beta=-0.021\pm 0.022$ (predicted $0$); $\gamma_Q^*=0$.}
    \label{fig:bb-corrupt-noise1e-1-qm}
  \end{subfigure}
  \caption{\textbf{Corrupted oracle, constant noise $\varepsilon=10^{-1}$; rollouts sampled from this corrupted oracle's own predictive rule.}}
  \label{fig:bb-corrupt-noise1e-1}
\end{figure}

\begin{figure}[p]
  \centering
  \begin{subfigure}[t]{\linewidth}
    \centering
    \includegraphics[width=\linewidth,height=0.70\textheight,keepaspectratio]{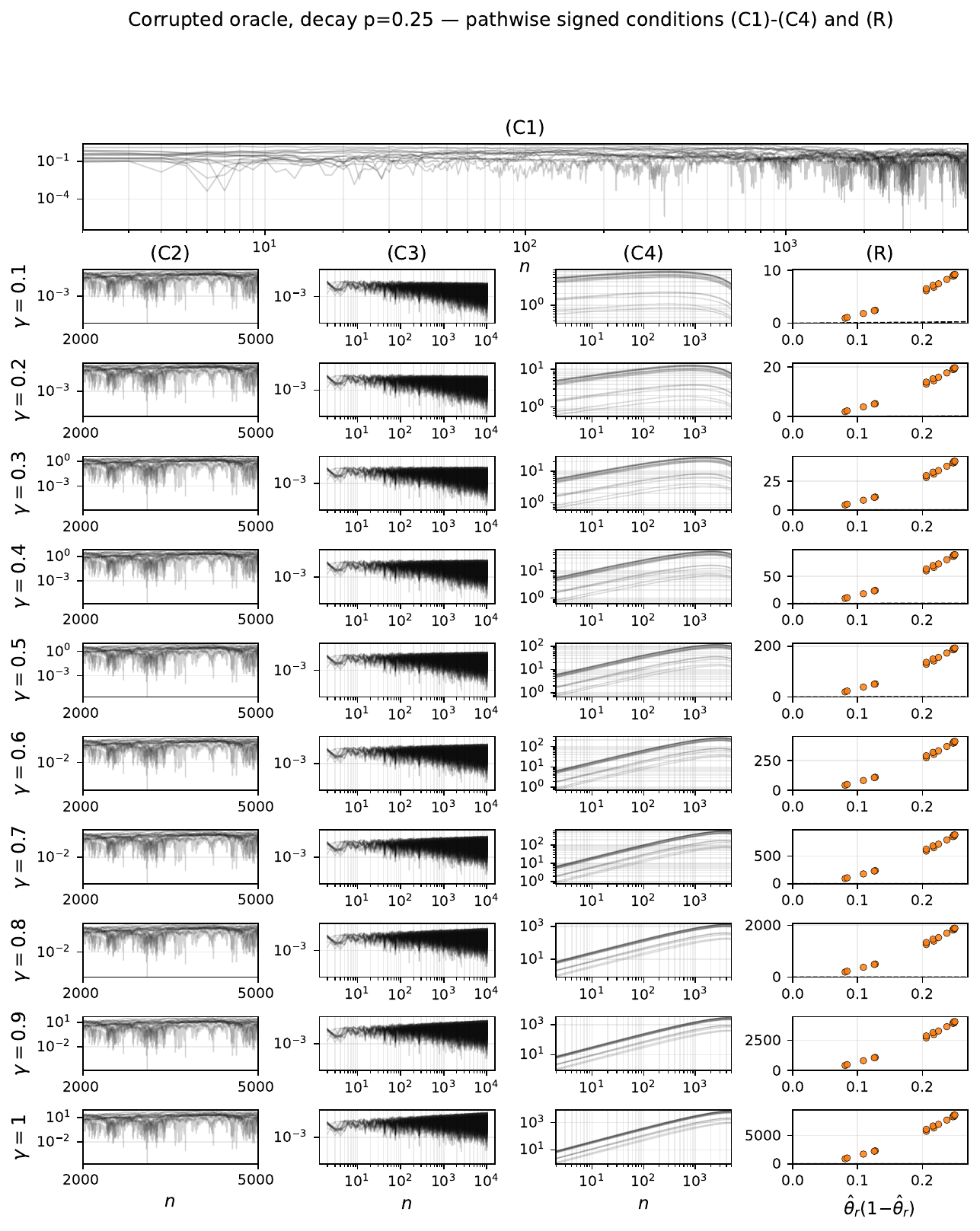}
    \caption{\textbf{Signed conditions, corrupted oracle (decay $\varepsilon n^{-p}$, $\varepsilon=0.5$, $p=0.25$).} Same panel layout as Figure~\ref{fig:bb-600-signed}, under this predictive rule's own induced law. $\lvert b_n\rvert=O(n^{-0.25})$ is too slow to survive $n^{\gamma/2}$ amplification: the pointwise panel (C3) decays only at $\gamma\le 0.5$; the tail-sum panels (C2), (C4) and the (R) scatter fail at every tested $\gamma$.}
    \label{fig:bb-corrupt-decay-p025-signed}
  \end{subfigure}\\[1.5ex]
  \begin{subfigure}[t]{\linewidth}
    \centering
    \includegraphics[width=\linewidth]{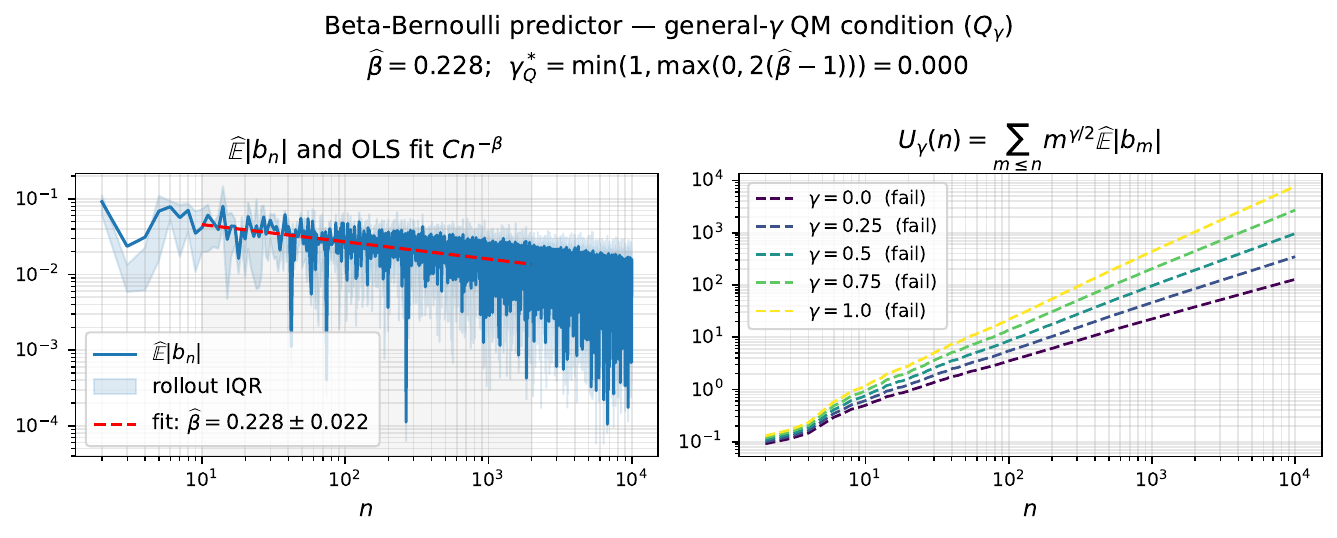}
    \caption{\textbf{$(Q1){+}(Q2)$, corrupted oracle (decay $\varepsilon n^{-p}$, $\varepsilon=0.5$, $p=0.25$).} Same panel layout as Figure~\ref{fig:bb-600-qm}. Fitted $\widehat\beta=0.228\pm 0.022$ (predicted $p=0.25$); $\gamma_Q^*=0$.}
    \label{fig:bb-corrupt-decay-p025-qm}
  \end{subfigure}
  \caption{\textbf{Corrupted oracle, decay $\varepsilon n^{-p}$ with $\varepsilon=0.5$, $p=0.25$; rollouts sampled from this corrupted oracle's own predictive rule.}}
  \label{fig:bb-corrupt-decay-p025}
\end{figure}

\begin{figure}[p]
  \centering
  \begin{subfigure}[t]{\linewidth}
    \centering
    \includegraphics[width=\linewidth,height=0.70\textheight,keepaspectratio]{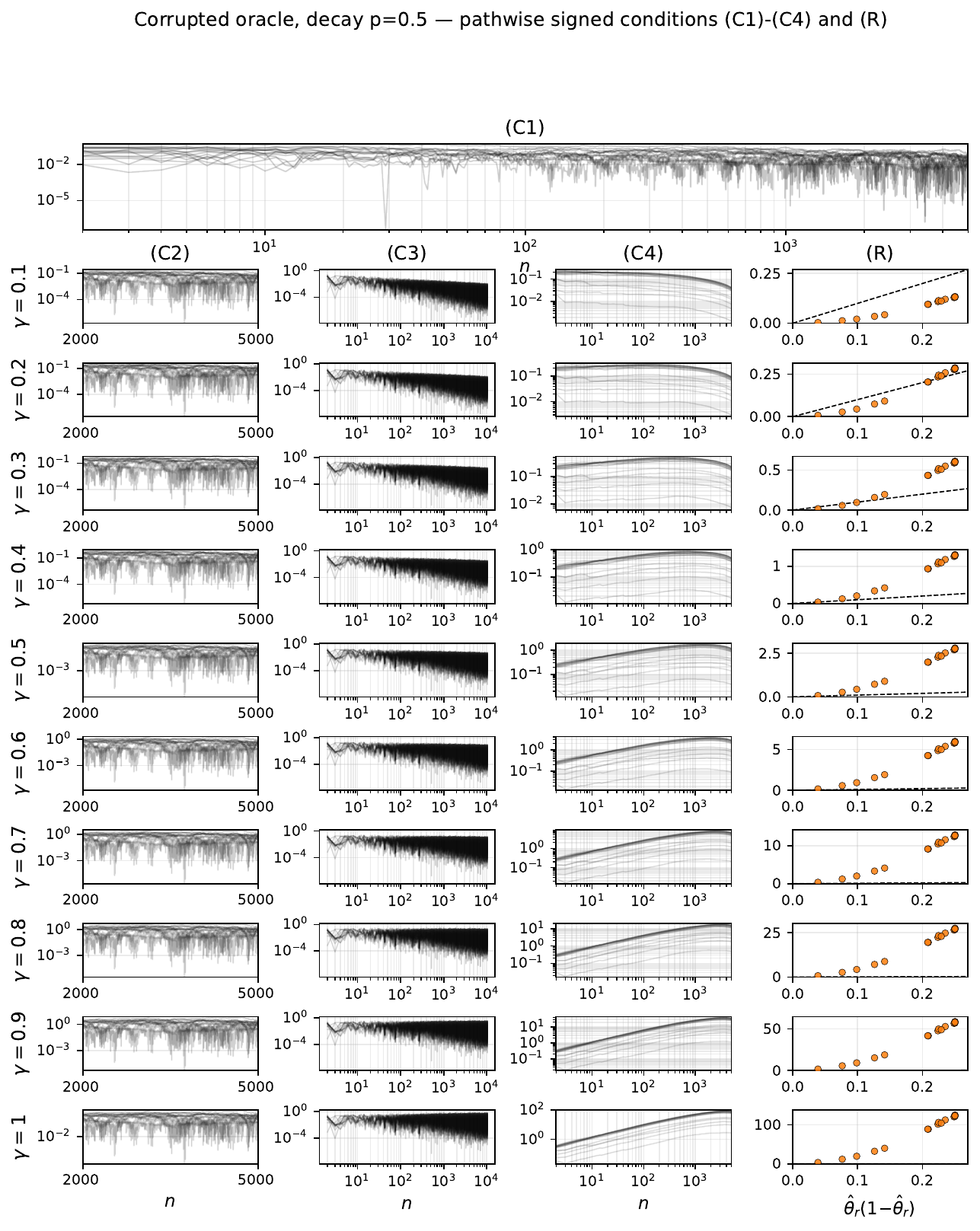}
    \caption{\textbf{Signed conditions, corrupted oracle (decay $\varepsilon n^{-p}$, $\varepsilon=0.5$, $p=0.5$).} Same panel layout as Figure~\ref{fig:bb-600-signed}, under this predictive rule's own induced law. $\lvert b_n\rvert=O(n^{-0.5})$: the pointwise panel (C3) is marginal at $\gamma=1$; the tail-residual panels (C2), (C4) fail at every tested $\gamma$ (log-divergent tail).}
    \label{fig:bb-corrupt-decay-p05-signed}
  \end{subfigure}\\[1.5ex]
  \begin{subfigure}[t]{\linewidth}
    \centering
    \includegraphics[width=\linewidth]{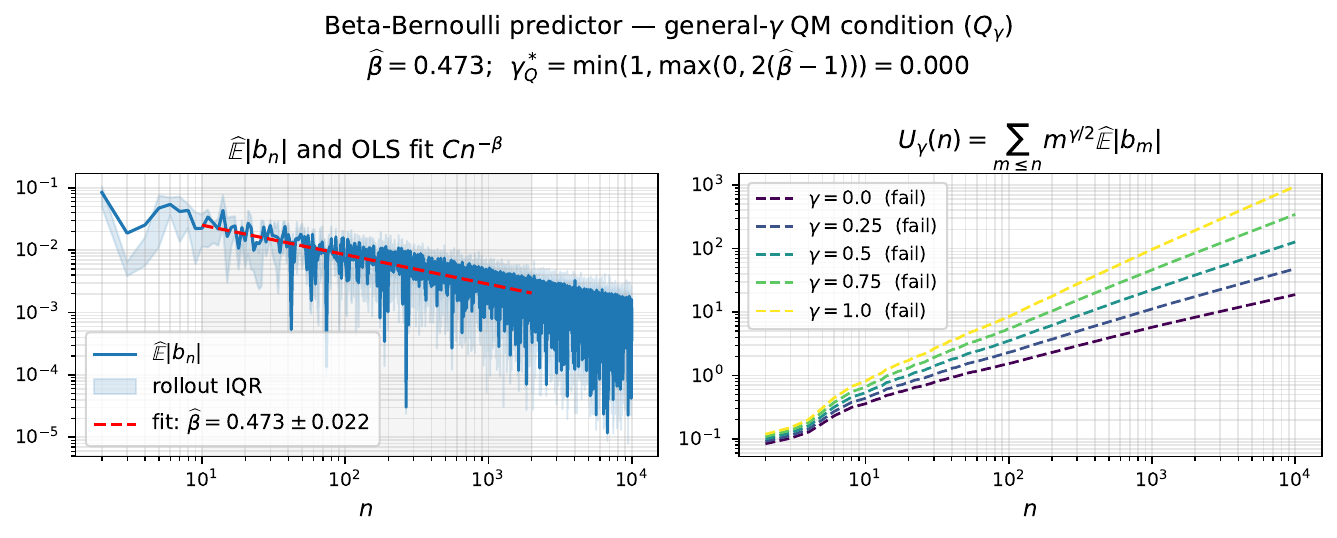}
    \caption{\textbf{$(Q1){+}(Q2)$, corrupted oracle (decay $\varepsilon n^{-p}$, $\varepsilon=0.5$, $p=0.5$).} Same panel layout as Figure~\ref{fig:bb-600-qm}. Fitted $\widehat\beta=0.473\pm 0.022$ (predicted $p=0.5$); $\gamma_Q^*=0$.}
    \label{fig:bb-corrupt-decay-p05-qm}
  \end{subfigure}
  \caption{\textbf{Corrupted oracle, decay $\varepsilon n^{-p}$ with $\varepsilon=0.5$, $p=0.5$; rollouts sampled from this corrupted oracle's own predictive rule.}}
  \label{fig:bb-corrupt-decay-p05}
\end{figure}

\begin{figure}[p]
  \centering
  \begin{subfigure}[t]{\linewidth}
    \centering
    \includegraphics[width=\linewidth,height=0.70\textheight,keepaspectratio]{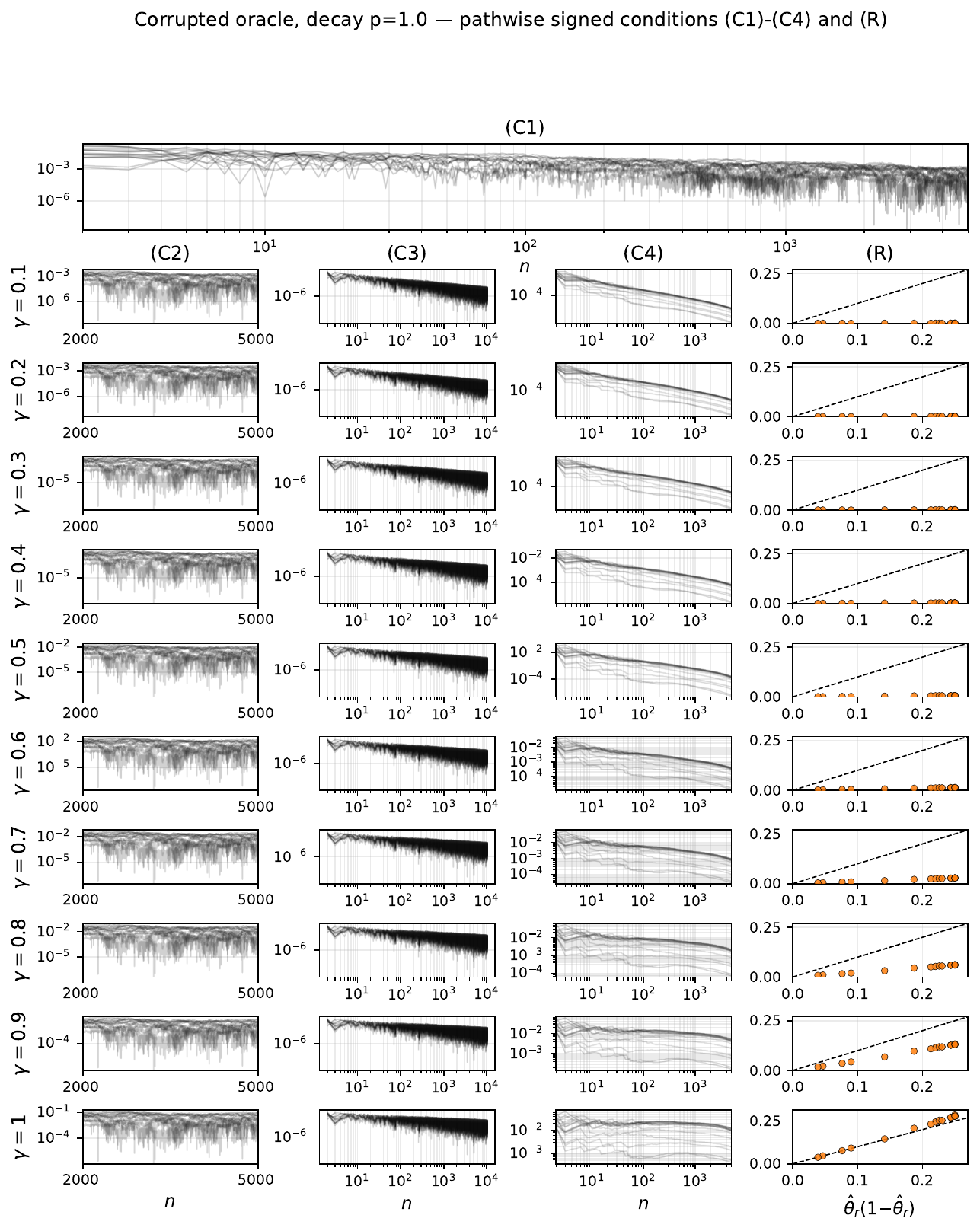}
    \caption{\textbf{Signed conditions, corrupted oracle (decay $\varepsilon n^{-p}$, $\varepsilon=0.5$, $p=1.0$).} Same panel layout as Figure~\ref{fig:bb-600-signed}, under this predictive rule's own induced law. $\lvert b_n\rvert=O(n^{-1})$: the pointwise panel (C3) decays at every tested $\gamma$; the tail-residual panels are on the boundary at $\gamma=1$.}
    \label{fig:bb-corrupt-decay-p10-signed}
  \end{subfigure}\\[1.5ex]
  \begin{subfigure}[t]{\linewidth}
    \centering
    \includegraphics[width=\linewidth]{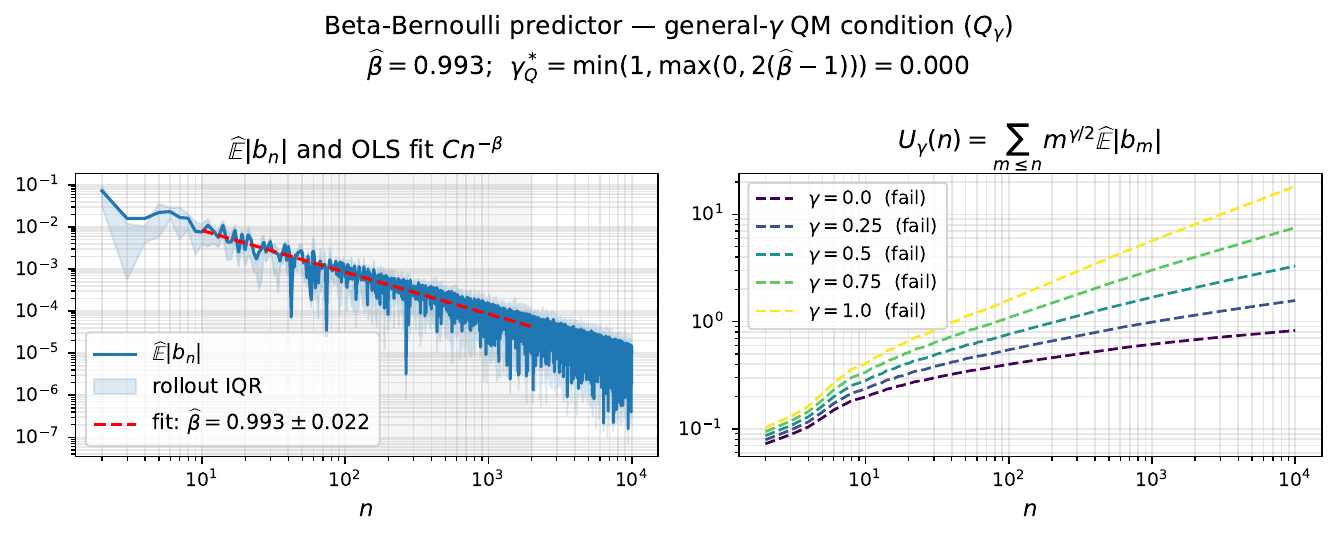}
    \caption{\textbf{$(Q1){+}(Q2)$, corrupted oracle (decay $\varepsilon n^{-p}$, $\varepsilon=0.5$, $p=1.0$).} Same panel layout as Figure~\ref{fig:bb-600-qm}. Fitted $\widehat\beta=0.993\pm 0.022$ (predicted $p=1.0$); $\gamma_Q^*=0$ (on the boundary of the $\gamma=0$ special case).}
    \label{fig:bb-corrupt-decay-p10-qm}
  \end{subfigure}
  \caption{\textbf{Corrupted oracle, decay $\varepsilon n^{-p}$ with $\varepsilon=0.5$, $p=1.0$; rollouts sampled from this corrupted oracle's own predictive rule.}}
  \label{fig:bb-corrupt-decay-p10}
\end{figure}

\begin{figure}[p]
  \centering
  \begin{subfigure}[t]{\linewidth}
    \centering
    \includegraphics[width=\linewidth,height=0.70\textheight,keepaspectratio]{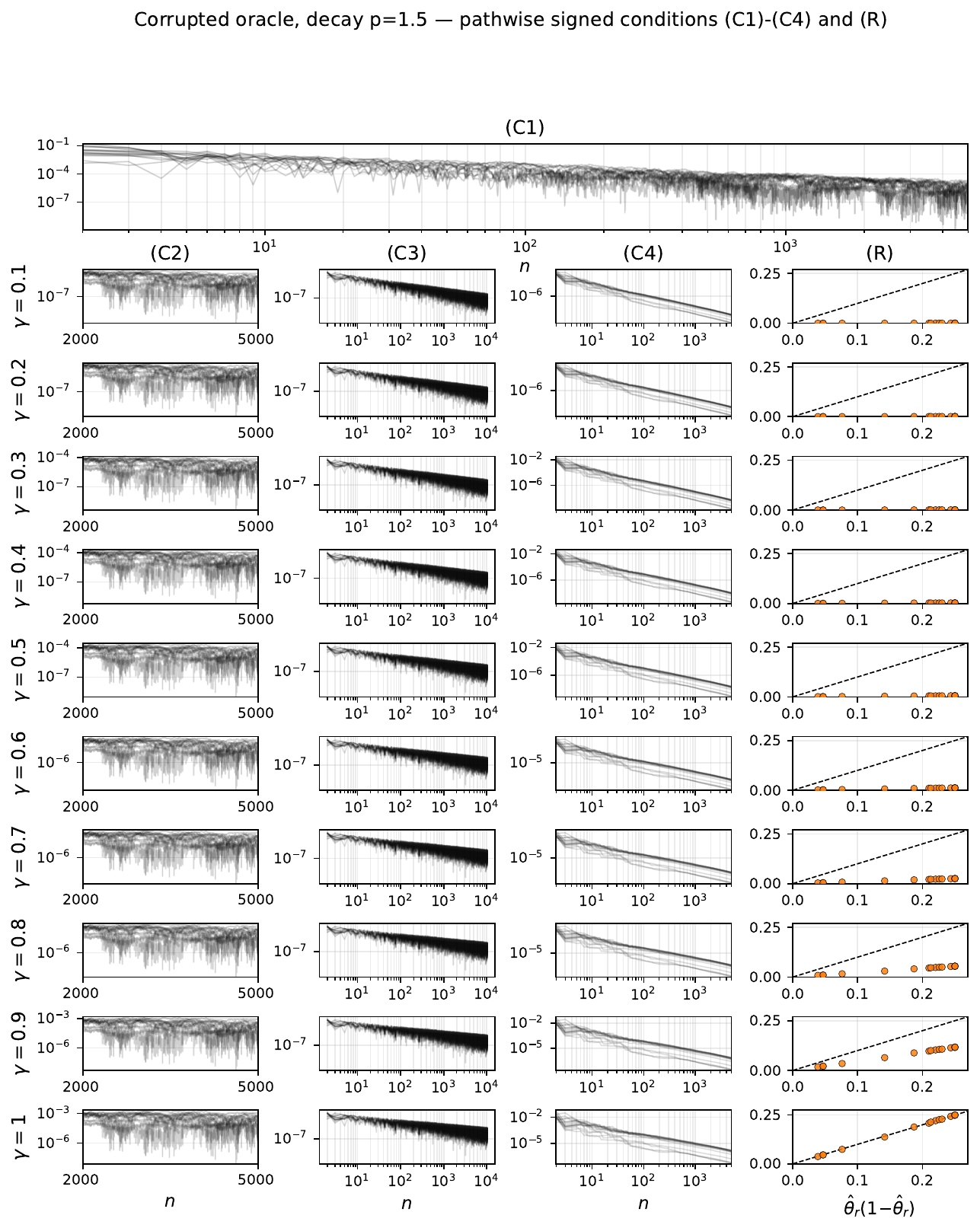}
    \caption{\textbf{Signed conditions, corrupted oracle (decay $\varepsilon n^{-p}$, $\varepsilon=0.5$, $p=1.5$).} Same panel layout as Figure~\ref{fig:bb-600-signed}, under this predictive rule's own induced law. $\lvert b_n\rvert=O(n^{-1.5})$ is fast enough that every rate-weighted panel decays at every tested $\gamma$.}
    \label{fig:bb-corrupt-decay-p15-signed}
  \end{subfigure}\\[1.5ex]
  \begin{subfigure}[t]{\linewidth}
    \centering
    \includegraphics[width=\linewidth]{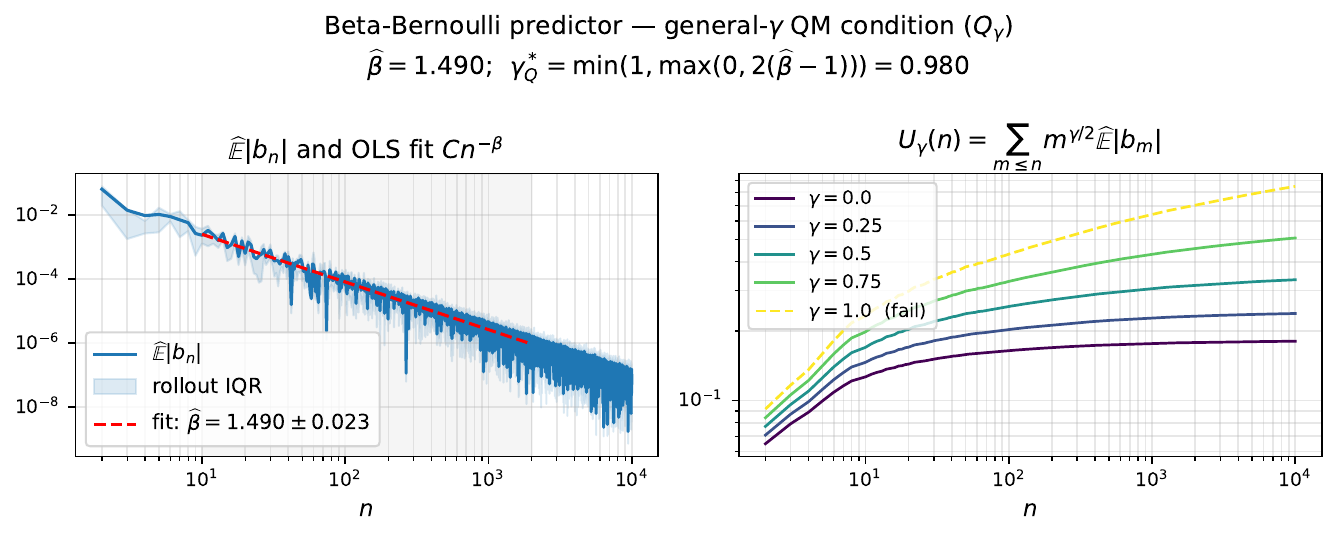}
    \caption{\textbf{$(Q1){+}(Q2)$, corrupted oracle (decay $\varepsilon n^{-p}$, $\varepsilon=0.5$, $p=1.5$).} Same panel layout as Figure~\ref{fig:bb-600-qm}. Fitted $\widehat\beta=1.490\pm 0.023$ (predicted $p=1.5$); $\gamma_Q^*\approx 0.98$. The $\gamma=1$ boundary is just missed.}
    \label{fig:bb-corrupt-decay-p15-qm}
  \end{subfigure}
  \caption{\textbf{Corrupted oracle, decay $\varepsilon n^{-p}$ with $\varepsilon=0.5$, $p=1.5$; rollouts sampled from this corrupted oracle's own predictive rule.}}
  \label{fig:bb-corrupt-decay-p15}
\end{figure}

\begin{figure}[p]
  \centering
  \begin{subfigure}[t]{\linewidth}
    \centering
    \includegraphics[width=\linewidth,height=0.70\textheight,keepaspectratio]{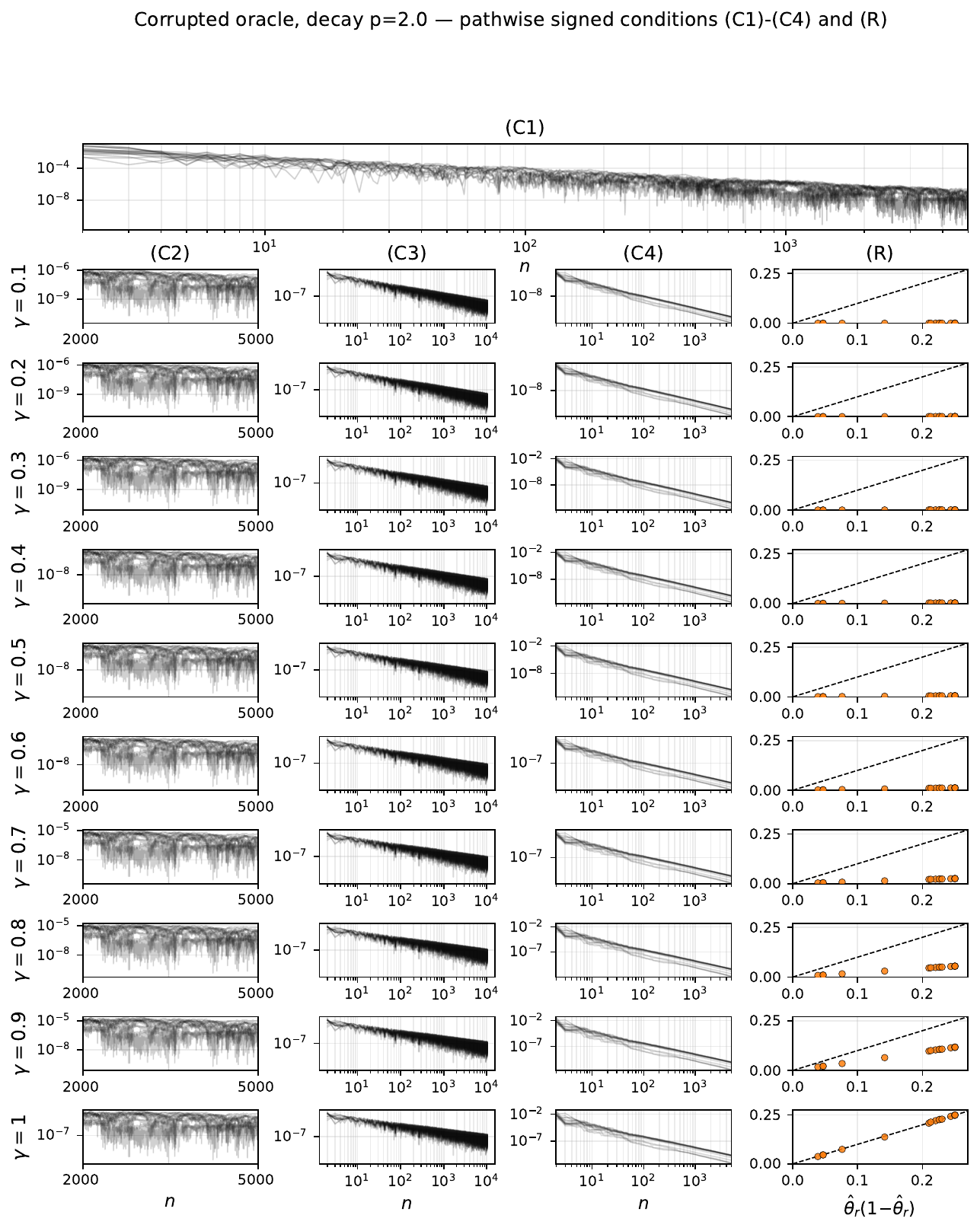}
    \caption{\textbf{Signed conditions, corrupted oracle (decay $\varepsilon n^{-p}$, $\varepsilon=0.5$, $p=2.0$).} Same panel layout as Figure~\ref{fig:bb-600-signed}, under this predictive rule's own induced law. $\lvert b_n\rvert=O(n^{-2})$ is fast enough that every rate-weighted panel decays cleanly at every tested $\gamma$.}
    \label{fig:bb-corrupt-decay-p20-signed}
  \end{subfigure}\\[1.5ex]
  \begin{subfigure}[t]{\linewidth}
    \centering
    \includegraphics[width=\linewidth]{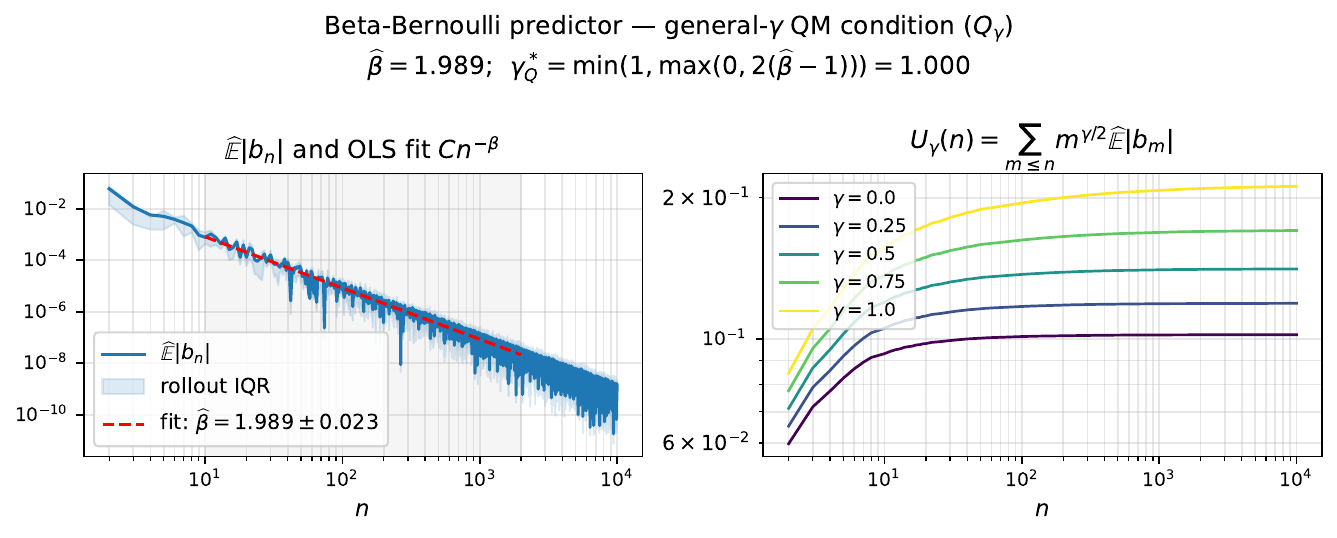}
    \caption{\textbf{$(Q1){+}(Q2)$, corrupted oracle (decay $\varepsilon n^{-p}$, $\varepsilon=0.5$, $p=2.0$).} Same panel layout as Figure~\ref{fig:bb-600-qm}. Fitted $\widehat\beta=1.989\pm 0.023$ (predicted $p=2.0$); $\gamma_Q^*\approx 1.00$. All tested $U_\gamma$ curves are bounded (solid).}
    \label{fig:bb-corrupt-decay-p20-qm}
  \end{subfigure}
  \caption{\textbf{Corrupted oracle, decay $\varepsilon n^{-p}$ with $\varepsilon=0.5$, $p=2.0$; rollouts sampled from this corrupted oracle's own predictive rule.}}
  \label{fig:bb-corrupt-decay-p20}
\end{figure}

\FloatBarrier
\subsection{A \texorpdfstring{$\gamma$}{gamma}-free check of the Gaussian shape}
\label{app:variance-only}

The (C1)--(C4) + (R) grid is organised around the supported rate $\gamma$ at which the variance estimator $\Vb_n$ contracts. The asymptotic Gaussian shape itself, however, does not require knowledge of $\gamma$. Inspecting the proof of Theorem~\ref{th:berti_new}, the predictive residual decomposes into a martingale plus a cumulative-bias term,
\begin{equation*}
  \tilde F(x,t) - F_n(x,t) \;=\; \sum_{k\geq n+1}\Delta_k(x,t)
  \;=\; \underbrace{\sum_{k\geq n+1}\bigl(\Delta_k - \EE[\Delta_k\mid Z_{1:k-1}]\bigr)}_{\text{martingale}} + \underbrace{\sum_{k\geq n+1}\EE[\Delta_k\mid Z_{1:k-1}]}_{\text{cumulative bias}},
\end{equation*}
and the Gaussian approximation in Theorem~\ref{th:ascondmult} requires
\begin{align}
  \sum_{k\geq n}\EE[\Delta_k\mid Z_{1:k-1}] &\;=\; o\!\left(\!\Bigl(\sum_{k\geq n}\Delta_k^2\Bigr)^{1/2}\right) \quad \text{(bias dominated by martingale s.d.)}, \label{eq:vo-bias}\\
  \sum_{k\geq n}\EE[\Delta_k\mid Z_{1:k-1}]^2 &\;=\; o\!\left(\sum_{k\geq n}\Delta_k^2\right) \quad \text{(drift variance dominated by martingale variance)}, \label{eq:vo-var}
\end{align}
which, given condition~(iii) at the same $\gamma$, correspond to the tail-sum limit part of $(ia)$ and to $(ib)$ of Theorem~\ref{th:berti_new} (up to the single term $n^{\gamma/2}b_n$, which is controlled by the sup part; the sup part of $(ia)$, its $L^1$ domination, and the moment condition $(ii)$ are not captured by these displays). The factor $n^\gamma$ does not appear in either display: as in the proof, the $n^\gamma$ on $\Vb=\lim_n n^\gamma\sum_{k\geq n}\Delta_k\Delta_k^\top$ cancels against the $n^{-\gamma}$ in the Gaussian rescaling $\mathcal N(F_n,\Vb/n^\gamma)$, and re-enters only when $\Vb$ is replaced by its rate-weighted estimator~\eqref{eq:Vn_xA}; even there, a misspecified $\gamma$ shifts interval lengths only by a constant factor, leaving the Gaussian shape intact.

We therefore plot, on a single set of log--log axes per predictive rule and one curve per rollout,
\begin{equation*}
  \mathrm{(a)}\;\Bigl(\!\!\sum_{k\geq n} b_k^{(r)}\!\Bigr)^{\!2},\qquad
  \mathrm{(b)}\;\sum_{k\geq n}\bigl(b_k^{(r)}\bigr)^{2},\qquad
  \mathrm{(c)}\;\sum_{k\geq n}\bigl(\Delta_k^{(r)}\bigr)^{2},
\end{equation*}
where (a) is the squared form of \eqref{eq:vo-bias} (so it can be compared to (c) on the same axes) and (b) is exactly \eqref{eq:vo-var}. The two conditions read off as: (a) and (b) sit far below (c). Figures~\ref{fig:bb-variance-only-bfts} and~\ref{fig:bb-variance-only-corrupt} show the result.

\begin{figure}[h]
  \centering
  \includegraphics[width=\linewidth]{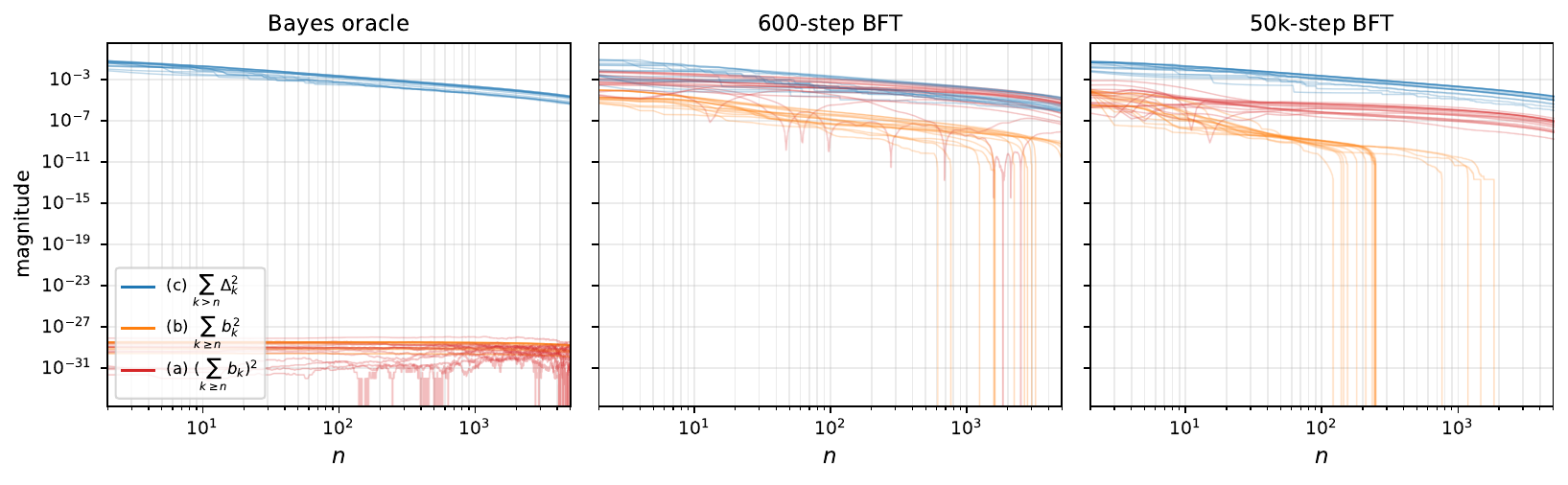}
  \caption{\textbf{$\gamma$-free Gaussian-shape diagnostic on the Bayes oracle and the two BFT checkpoints.} Per-rollout curves of (a)~$\bigl(\sum_{k\geq n} b_k^{(r)}\bigr)^2$ (squared cumulative bias, red), (b)~$\sum_{k\geq n}(b_k^{(r)})^2$ (drift's contribution to variance, orange), and (c)~$\sum_{k\geq n}(\Delta_k^{(r)})^2$ (martingale variance, blue) under each predictive rule's own induced law ($16$ rollouts, $N{-}1=10{,}000$). For an asymptotically Gaussian rule, (a) and (b) sit far below (c).}
  \label{fig:bb-variance-only-bfts}
\end{figure}

\begin{figure}[h]
  \centering
  \includegraphics[width=\linewidth]{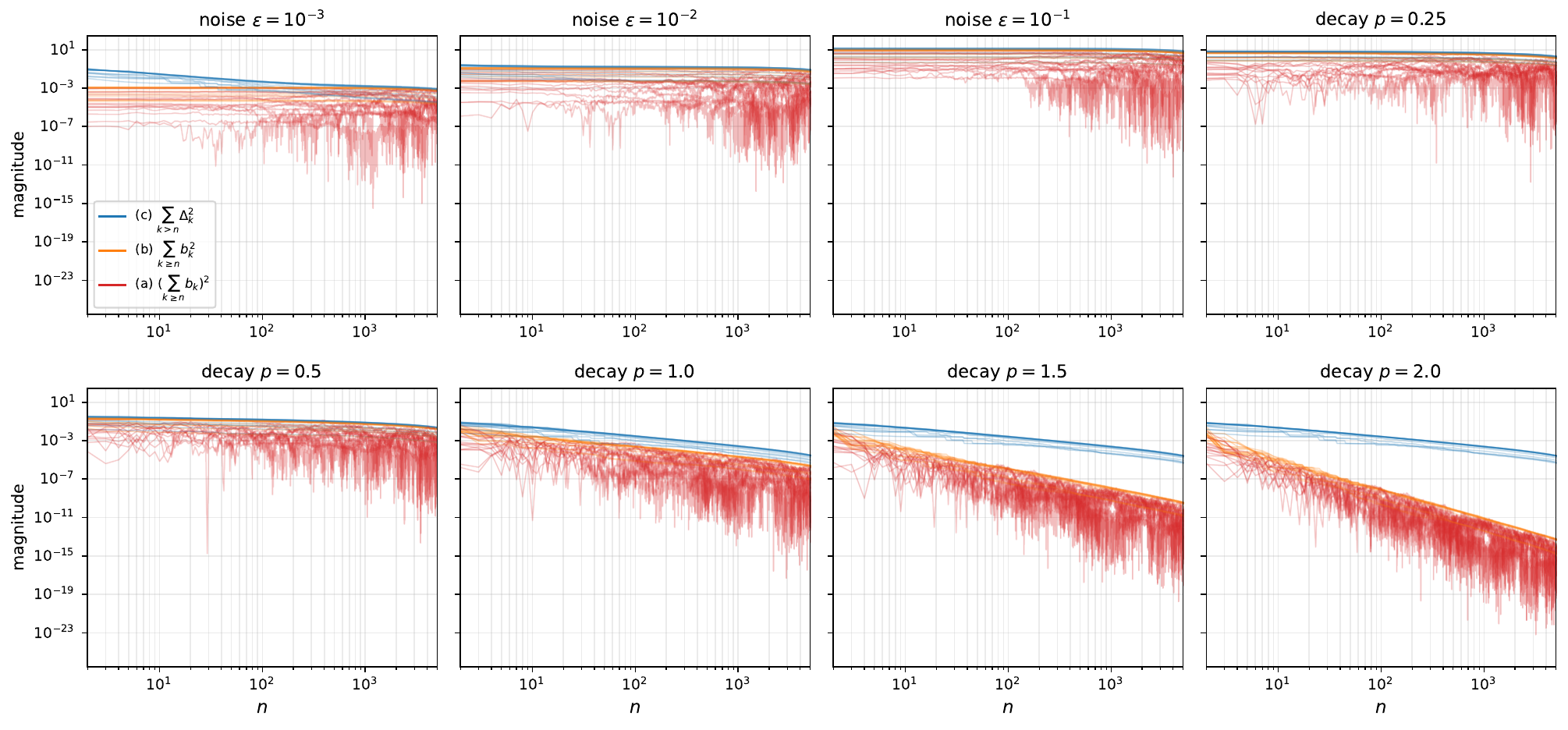}
  \caption{\textbf{$\gamma$-free Gaussian-shape diagnostic on the eight corrupted oracles.} Same layout and colour key as Figure~\ref{fig:bb-variance-only-bfts}; constant-noise corruptions (top row) and decay-$p$ corruptions (bottom row).}
  \label{fig:bb-variance-only-corrupt}
\end{figure}

\paragraph{Findings.}
On the Bayes oracle (Figure~\ref{fig:bb-variance-only-bfts}, left), (a) and (b) sit at the floating-point random-walk floor ($\sim 10^{-29}$) while (c) is $\sim 10^{-4}$ at $n=2000$, so (c) dominates by some $25$ orders of magnitude. The 600-step BFT (centre) has $(\sum b_k)^2/\sum\Delta_k^2 \approx 0.64$ at $n=2000$: the under-trained checkpoint's squared cumulative bias is comparable to its realised variance. This is consistent with the QM panel's $\widehat\beta\approx 1.16$ (Appendix~\ref{app:theorycheck-findings}), which says the drift has not yet decayed far below the noise on this checkpoint; the (C1)--(C4) grid does not surface this quantitatively. The 50k-step BFT (right) gives $(\sum b_k)^2/\sum\Delta_k^2 \approx 5\times 10^{-3}$ and $\sum b_k^2/\sum\Delta_k^2 \le 10^{-4}$ at $n=2000$; the bias and drift variance are well separated from the martingale variance.

On the corrupted oracles (Figure~\ref{fig:bb-variance-only-corrupt}), $\sum b_k^2$ approaches or exceeds $\sum\Delta_k^2$ on every constant-noise corruption ($\sum b_k^2/\sum\Delta_k^2 \approx 0.79$ at $\varepsilon=10^{-1}$) and on the slow-decay corruptions ($\approx 0.77$ at $p=0.5$), so the drift's contribution to the variance is not negligible and the Gaussian approximation is broken. Decay $p\ge 1.5$ recovers the oracle-like separation between (a), (b) and (c).

\paragraph{Relation to the $\gamma$-sweep.}
This diagnostic complements the (C1)--(C4) + (R) grid. The grid supports the \emph{rate} $\gamma$ at which $\Vb_n$ contracts, which is needed to construct the asymptotic credible bands of \eqref{eq:Vn_xA}; the diagnostic above supports the \emph{shape} of the limiting law without committing to a $\gamma$. Both are needed for valid bands.

\FloatBarrier
\section{Figure~\ref{fig:beta_bernoulli_intro} details}
\label{app:headline-figure}

Figure~\ref{fig:beta_bernoulli_intro} in the introduction illustrates our predictive CLT on the 50k-step Beta-Bernoulli BFT; architecture and meta-learning are described in Appendix~\ref{app:theorycheck-setup}.

\paragraph{Generating the figure (seed $7$).} The displayed sequence is drawn at a \emph{fixed} $\theta^\star=0.3$, with $y_1,\dots,y_{200}\stackrel{\mathrm{iid}}{\sim}\mathrm{Bernoulli}(\theta^\star)$. The Beta(1,1) prior used in meta-learning, updated under these observations, gives the analytic Beta posterior shown in panel~(a). The step-$0$ predictive probability is set analytically to $g_0=1/2$, since the BFT's forward pass requires at least one context token. For $k\geq 1$, $g_k=\PP_\phi(Y_{k+1}=1\mid y_{1:k})$ is computed by a single forward pass conditioned on $y_{1:k}$. Forward passes run in \texttt{float32}; the increments $\Delta_k=g_k-g_{k-1}$ and the variance estimator are computed in \texttt{float64}.

\paragraph{Panel~(a).} Two distributions over $\tilde\theta\in[0,1]$ at $n=200$:
\begin{itemize}[nosep,leftmargin=*]
  \item the analytic posterior $\tilde\theta\mid y_{1:n}\sim\mathrm{Beta}\bigl(\alpha+\sum_{i=1}^n y_i,\;\beta+n-\sum_{i=1}^n y_i\bigr)$ (orange);
  \item the predictive-CLT Gaussian approximation $\mathcal N(g_n,\,V_n/n)$ with $V_n=\tfrac{1}{n}\sum_{k=1}^n k^2\Delta_k^2$ (blue dashed).
\end{itemize}

\paragraph{Panel~(b).} \emph{Top:} the predictive trajectory $(g_k)_{k=0}^{200}$. \emph{Bottom:} the per-step contribution $k^2\Delta_k^2/n$ to $V_n$ as a stem plot, illustrating how $V_n$ accumulates from the volatility of the predictive rule.

\FloatBarrier
\section{Frequentist coverage experiment details and extra results}
\label{app:coverage}

\subsection{Implementation of predictive CLT}
Fix a grid $\xb=[x_1,\dots,x_m]^\top$. In binary classification we work with the
success-probability vector
\[
  \gb_n(\xb)
  :=
  \bigl(g_n(x_1),\dots,g_n(x_m)\bigr)^\top.
\]
In regression, for a fixed threshold $t\in\mathbb R$, we work with the
predictive CDF vector
\[
  \Fb_n(\xb,t)
  :=
  \bigl(F_n(x_1,t),\dots,F_n(x_m,t)\bigr)^\top.
\]

Let $\widehat\Sigma_n$ denote the chosen plug-in covariance on the grid (the $\gamma=1$ case of Appendix~\ref{app:credible_bands}, as assumed throughout):
\[
  \widehat\Sigma_n \in \left\{\frac{\Vb_n}{n},\ \frac{\Ub_n}{n}\right\},
\]
where the dependence on $(\xb,t)$ (regression) or on the classification setting
is suppressed for readability. We approximate the expectation in $\Ub_n$ using 1000 Monte Carlo samples of $Z_{n+1}=(X_{n+1},Y_{n+1})$, with $X_{n+1}$ drawn from the empirical measure of $x_{1:n}$ and $Y_{n+1}\mid X_{n+1},z_{1:n}$ drawn from TabPFN's predictive distribution $P_n(\cdot\mid X_{n+1},z_{1:n})$.

\subsection{Baselines}
\paragraph{Bootstrap.}
For each bootstrap sample $b = 1, \dots, B$, we sample $n$ data points uniformly
with replacement from $z_{1:n}$, refit \texttt{TabPFN} on this bootstrap
dataset, and evaluate the resulting predictive class-1 probability $g_n^{(b)}(x^\ast)$ or
CDF value $F_n^{(b)}(x^\ast, t)$ at each grid point $x^\ast \in \mathcal{X}$. We draw
$B=200$ bootstrap samples.

\paragraph{\citet{nagler25uncertainty} (NR).}
An alternative to the asymptotic approximation of Section~\ref{sec:PCLT} is PMC \citep{fortini20quasibayes}, described here in the unsupervised case for an event $A$. Starting from the observed data $z_{1:n}$, one simulates $z_{k+1} \sim P_k$ for $k = n, \dots, N-1$. The terminal predictive probability $P_N(A)$ is then one approximate draw from the posterior of $\tilde{P}(A) \mid z_{1:n}$; repeating this yields many posterior samples. The same idea underpins the martingale posterior framework of \citet{fong23martingale}, where it is termed \emph{predictive resampling}.

In supervised settings, PMC must simulate \emph{future covariates} from the predictive rule. TabPFN does not model covariates; it only provides $\PP(Y_{n+1} \in \cdot \mid X_{n+1}=x, z_{1:n})$. Modelling high-dimensional covariates separately would be burdensome, and the Bayesian-bootstrap workaround suggested by \citet{fong23martingale} is, at best, an approximation.

\citet{nagler25uncertainty} (NR) circumvent this difficulty by using TabPFN only to \textbf{initialise} the predictive distribution~$P_0$ at a fixed query covariate~$x^\ast$; the subsequent forward simulation proceeds via a Gaussian copula update rule constructed to satisfy the martingale property by design. TabPFN is not used in the forward simulation at all. The resulting UD is therefore for a hybrid rule (TabPFN initialisation + copula dynamics), and the hybrid rule's epistemic component conflates TabPFN's uncertainty with the copula's. Additionally, their method targets a single fixed query covariate~$x^\ast$ (e.g., quantiles $\tilde{F}^{-1}(\alpha \mid x^\ast)$).

\textbf{Implementation.} We use the Type 1 scheduler for the learning rate in the Gaussian copula update, with the bandwidth tuned by the prequential log score. We set the rollout length to $N = n + 1000$ and draw $B = 200$ samples from the martingale posterior.

\textbf{Computational cost.} For a single query~$x^\ast$, the NR procedure performs one TabPFN forward pass to initialise $P_0$ at cost $O(n^2)$ (encoder attention over the context), then draws $B$ rollouts, each extending the sequence from length $n$ to terminal length $N$ via copula updates at $O(1)$ per step. The per-query cost is therefore $O(n^2 + B(N-n))$, and $J$ query points cost $O(Jn^2 + JB(N-n))$ since TabPFN must be re-invoked at each $x^\ast$ and each rollout is query-specific. Our predictive-CLT estimator, by contrast, costs $O(n^3 + n^2 J + n J^2)$ total (Appendix~\ref{app:ud4tabpfn}; dominated by $n^3$ when $J \ll n$): each of the $n$ prefix forward passes simultaneously evaluates all $J$ test covariates inside the same attention call, and no Monte Carlo sampling is required. The two methods thus have different scaling in $J$ and $B$: NR avoids the cubic prefix cost but pays linearly in the number of query points and posterior samples.

\subsection{Credible bands for sampling-based baselines}
For the predictive CLT we use the bands of Appendix~\ref{app:credible_bands}. For the sampling-based baselines (bootstrap and NR), pointwise intervals are the empirical $\alpha/2$ and $1-\alpha/2$ quantiles of the $B$ posterior samples at each grid point, clipped to $[0,1]$. Simultaneous bands replace the Gaussian draws $W^{(\ell)}$ of that recipe with the deviations of the $B$ posterior samples from the band centre on the grid, where the centre is the predictive CDF computed on the observed data for the bootstrap and the mean of the $B$ posterior samples for NR; the non-studentised sup-norm $T^{(\ell)}=\max_j|W^{(\ell)}_j|$ then gives a uniform half-width $c_{1-\alpha}$ about the centre, clipped to $[0,1]$. For NR, the grid values within each posterior sample are generated with shared uniform variates (one uniform sequence per rollout, with each variate updating the predictive CDF over the whole evaluation grid), so each posterior sample is a joint draw of the predictive CDF over the grid and the sup-norm $T^{(\ell)}$ is well defined.

\subsection{Evaluation metrics}
\label{app:eval-metrics}
We assess the quality of the credible bands via their frequentist coverage
rates. The target of the credible bands is the CDF or mass function of the
true data-generating distribution. Specifically, for regression, this target is
$f_{0}(x^\ast) = \PP(Y \leq t \mid X = x^\ast)$ for a fixed $t$. For
classification, it is $f_{0}(x^\ast) = \PP(Y = 1 \mid X = x^\ast)$.

Let $C(x^{\ast}, z_{1:n})$ denote the credible set constructed from the dataset
$z_{1:n}$ at a test covariate $x^{\ast}$. The credible bands are constructed
over a grid $\mathcal{X}$ of $m = 100$ points drawn from a scrambled Sobol
sequence in $[-1, 1]^{10}$. For the Gaussian regression data-generating processes (DGPs), we evaluate at $t=0$, which ensures that the target values $f_0(x)=\PP(Y\le 0\mid X=x)$ span essentially all of $[0,1]$ as $x$ ranges over $\mathcal{X}$, so the evaluation is not concentrated in trivially covered tail regions of the CDF; for the Poisson DGP we evaluate at $t=2$ on the standardised scale of the DGP defined below.
The coverage calculation depends on the type of credible band. For
pointwise credible bands, we compute the coverage frequency at each grid point
$x^{\ast}$ and then average over all points in $\mathcal{X}$:
\begin{equation*}
  \frac{1}{|\mathcal{X}|} \sum_{x^{\ast} \in \mathcal{X}} \frac{1}{50} \sum_{r=1}^{50} \mathds{1}{\left\{f_{0}(x^{\ast}) \in C(x^{\ast}, z^{(r)}_{1:n}) \right\}},
\end{equation*}
where coverage is evaluated over $50$ independent draws of $z_{1:n}$ from the
true data-generating process. For simultaneous credible bands, we compute
the frequency with which the band covers the truth at all grid points in
$\mathcal{X}$:
\begin{equation*}
  \frac{1}{50} \sum_{r=1}^{50} \mathds{1}\left\{\textstyle \bigcap_{x^{\ast} \in \mathcal{X}} \left\{f_{0}(x^{\ast}) \in C(x^{\ast}, z^{(r)}_{1:n}) \right\} \right\}.
\end{equation*}

In addition, we report the average width of the credible band, averaged over
both repetitions and grid points:
\begin{equation*}
  \frac{1}{50} \sum^{50}_{r=1} \frac{1}{|\mathcal{X}|}\sum_{x^{\ast} \in \mathcal{X}} \lvert C(x^{\ast}, z^{(r)}_{1:n}) \rvert.
\end{equation*}

\subsection{Data-generating process}
\label{app:data-details}
This section describes the DGP of $Y$ given a
$d = 10$-dimensional covariate $X \in [-1,1]^{10}$.  Training covariates are
drawn from a scrambled Sobol sequence in $[-1,1]^{10}$.

All multivariate DGPs share a pair of normalised weight vectors
\begin{equation*}
  \mathbf{w}^{(0)} = \frac{1}{\sqrt{385}}(-1,2,-3,4,-5,6,-7,8,-9,10)^\top,
\end{equation*}
and
\begin{equation*}
  \mathbf{w}^{(1)} = \frac{1}{\sqrt{385}}(10,-1,2,-3,4,-5,6,-7,8,-9)^\top,
\end{equation*}
where $\sqrt{385} = \bigl(\sum_{j=1}^{10} j^2\bigr)^{1/2}$ and $\mathbf{w}^{(1)}$
is a cyclic shift of $\mathbf{w}^{(0)}$.

\paragraph{Linear (linear regression)} We sample from
\begin{equation*}
  Y = \sqrt{1.5} \mathbf{w}^{(0)\top} X + \varepsilon, \quad \varepsilon \sim \mathcal{N}\bigl(0, (\sqrt{0.5})^2\bigr).
\end{equation*}

\paragraph{Dependent (linear regression with dependent error)} We
sample from
\begin{equation*}
  Y \mid X \sim \mathcal{N} \left(\sqrt{0.75} \mathbf{w}^{(0)\top} X, \sigma^2(X) \right),
  \quad \sigma(x) = 0.75 + 0.25 \lvert \mathbf{w}^{(1)\top} x \rvert.
\end{equation*}

\paragraph{Poisson (Poisson regression, standardised)}
Let $\lambda(x) = \exp\left(0.5 \mathbf{w}^{(0)\top} x\right)$. Draw
$C \mid X \sim \mathrm{Poisson}(\lambda(X))$ and set
\begin{equation*}
  Y = \frac{C - \lambda(X)}{\sqrt{\lambda(X)}}.
\end{equation*}
\paragraph{Probit (probit mixture)}
Let
\begin{equation*}
    p(x) = 0.5 \Phi \left(1.4 \mathbf{w}^{(0)\top} x \right)
    + 0.5 \Phi \left(1.4 \mathbf{w}^{(1)\top} x \right).
\end{equation*}
The target follows
\begin{equation*}
  Y \mid X \sim
  \begin{cases}
    +1 & \text{with probability } p(X), \\
    -1 & \text{with probability } 1 - p(X).
  \end{cases}
\end{equation*}

\paragraph{Categorical (multinomial logistic regression)}
Let $s_1(x) = \mathbf{w}^{(0)\top} x$ and $s_2(x) = \mathbf{w}^{(1)\top} x$. Then,
\begin{equation*}
  Y \mid X \sim \mathrm{Categorical}\left( \mathrm{softmax}(1.2 s_1, -1.2 s_1, 1.2 s_2, -1.2 s_2) \right).
\end{equation*}

\subsection{Predictive rule}
We use \tabclf{} with Probit and Categorical DGPs, and
\tabreg{} for the remaining DGPs. Coverage is reported at $t = 0$
for Gaussian DGPs and $t = 2$ for Poisson; for Probit and Categorical, coverage
is reported for the class-1 probability $\PP(Y = 1 \mid x)$.
The \nest{} parameter is set to 16 for the coverage experiment.

\subsection{Coverage results at \texorpdfstring{$\alpha = 0.2$}{alpha = 0.2} and additional DGPs}
In addition to Table~\ref{tab:coverage}, we report the coverage for two
classification DGPs (Probit and Categorical) in
Table~\ref{tab:coverage-complete-05} (the numbers from Table~\ref{tab:coverage}
are repeated here for convenience). The Gaussian copula update of
\citet{nagler25uncertainty} is not directly applicable to discrete outcomes (Probit and Categorical), so
the corresponding cells are left blank. We also evaluate the methods at
$\alpha=0.2$ and report the results in Table~\ref{tab:coverage-complete-20}.

\begin{table}
  \centering
  \scriptsize
  \caption{Frequentist coverage and width of pointwise and simultaneous
    95\% credible bands for five DGPs at three sample sizes $n$. Four
    methods: our predictive CLT with $\Vb_n$ or $\Ub_n$, the bootstrap
    (Boot.), and \citet{nagler25uncertainty} (NR); NR is not directly
    applicable to Probit or Categorical.}
  \label{tab:coverage-complete-05}
\setlength{\tabcolsep}{3pt}
\begin{tabular}{llcccccccccccccccc}
  \toprule
  & & \multicolumn{2}{c}{$\mathbf{V}_n$ Point.} & \multicolumn{2}{c}{$\mathbf{U}_n$ Point.} & \multicolumn{2}{c}{Boot. Point.} & \multicolumn{2}{c}{NR Point.} & \multicolumn{2}{c}{$\mathbf{V}_n$ Simul.} & \multicolumn{2}{c}{$\mathbf{U}_n$ Simul.} & \multicolumn{2}{c}{Boot. Simul.} & \multicolumn{2}{c}{NR Simul.} \\
  \cmidrule(lr){3-4} \cmidrule(lr){5-6} \cmidrule(lr){7-8} \cmidrule(lr){9-10} \cmidrule(lr){11-12} \cmidrule(lr){13-14} \cmidrule(lr){15-16} \cmidrule(lr){17-18}
  DGP & $n$ & Rate & Width & Rate & Width & Rate & Width & Rate & Width & Rate & Width & Rate & Width & Rate & Width & Rate & Width \\
  \midrule
  \multirow{3}{*}{Linear}      & 200  & 0.97 & 0.39 & 0.98 & 0.47 & 0.93 & 0.42 & 0.98 & 0.85 & 0.88 & 0.67 & 0.98 & 0.81 & 1.00 & 0.82 & 1.00 & 0.98 \\
                               & 500  & 0.99 & 0.30 & 1.00 & 0.62 & 0.93 & 0.34 & 0.99 & 0.70 & 1.00 & 0.53 & 1.00 & 1.04 & 1.00 & 0.76 & 1.00 & 0.98 \\
                               & 1000 & 1.00 & 0.25 & 1.00 & 0.53 & 0.91 & 0.30 & 1.00 & 0.55 & 1.00 & 0.44 & 1.00 & 0.89 & 1.00 & 0.73 & 1.00 & 0.98 \\
  \midrule
  \multirow{3}{*}{Dependent}   & 200  & 0.90 & 0.38 & 0.96 & 0.51 & 0.94 & 0.47 & 0.97 & 0.89 & 0.68 & 0.65 & 0.90 & 0.88 & 1.00 & 0.87 & 0.98 & 0.96 \\
                               & 500  & 0.97 & 0.33 & 1.00 & 0.73 & 0.93 & 0.39 & 0.99 & 0.68 & 0.92 & 0.59 & 1.00 & 1.24 & 1.00 & 0.83 & 1.00 & 0.97 \\
                               & 1000 & 0.99 & 0.29 & 1.00 & 0.64 & 0.92 & 0.35 & 1.00 & 0.50 & 1.00 & 0.50 & 1.00 & 1.08 & 1.00 & 0.80 & 1.00 & 0.95 \\
  \midrule
  \multirow{3}{*}{Poisson}     & 200  & 0.87 & 0.07 & 0.90 & 0.08 & 0.90 & 0.13 & 0.30 & 0.10 & 0.48 & 0.11 & 0.68 & 0.14 & 1.00 & 0.45 & 0.10 & 0.10 \\
                               & 500  & 0.90 & 0.07 & 0.99 & 0.18 & 0.86 & 0.11 & 0.12 & 0.01 & 0.62 & 0.13 & 0.94 & 0.30 & 1.00 & 0.39 & 0.00 & 0.02 \\
                               & 1000 & 0.98 & 0.10 & 1.00 & 0.25 & 0.85 & 0.10 & 0.07 & 0.00 & 0.96 & 0.18 & 1.00 & 0.43 & 1.00 & 0.38 & 0.00 & 0.01 \\
  \midrule
  \multirow{3}{*}{Probit}      & 200  & 0.95 & 0.30 & 0.94 & 0.35 & 0.94 & 0.51 & & & 0.90 & 0.51 & 0.88 & 0.59 & 1.00 & 0.88 & & \\
                               & 500  & 0.95 & 0.25 & 0.99 & 0.39 & 0.93 & 0.47 & & & 0.88 & 0.44 & 0.98 & 0.67 & 1.00 & 0.88 & & \\
                               & 1000 & 0.98 & 0.22 & 0.99 & 0.34 & 0.92 & 0.44 & & & 1.00 & 0.39 & 0.98 & 0.59 & 1.00 & 0.88 & & \\
  \midrule
  \multirow{3}{*}{Categorical} & 200  & 0.87 & 0.30 & 0.95 & 0.40 & 0.84 & 0.32 & & & 0.62 & 0.52 & 0.80 & 0.69 & 1.00 & 0.71 & & \\
                               & 500  & 0.97 & 0.27 & 1.00 & 0.47 & 0.88 & 0.29 & & & 0.94 & 0.47 & 1.00 & 0.81 & 1.00 & 0.70 & & \\
                               & 1000 & 0.99 & 0.25 & 1.00 & 0.41 & 0.90 & 0.28 & & & 1.00 & 0.44 & 1.00 & 0.72 & 1.00 & 0.70 & & \\
  \bottomrule
\end{tabular}
\end{table}

\begin{table}
  \centering
  \scriptsize
  \caption{Frequentist coverage and width of pointwise and simultaneous
    80\% credible bands for five DGPs at three sample sizes $n$. Four
    methods: our predictive CLT with $\Vb_n$ or $\Ub_n$, the bootstrap
    (Boot.), and \citet{nagler25uncertainty} (NR); NR is not directly
    applicable to Probit or Categorical.}
  \label{tab:coverage-complete-20}
\setlength{\tabcolsep}{3pt}
\begin{tabular}{llcccccccccccccccc}
  \toprule
  & & \multicolumn{2}{c}{$\mathbf{V}_n$ Point.} & \multicolumn{2}{c}{$\mathbf{U}_n$ Point.} & \multicolumn{2}{c}{Boot. Point.} & \multicolumn{2}{c}{NR Point.} & \multicolumn{2}{c}{$\mathbf{V}_n$ Simul.} & \multicolumn{2}{c}{$\mathbf{U}_n$ Simul.} & \multicolumn{2}{c}{Boot. Simul.} & \multicolumn{2}{c}{NR Simul.} \\
  \cmidrule(lr){3-4} \cmidrule(lr){5-6} \cmidrule(lr){7-8} \cmidrule(lr){9-10} \cmidrule(lr){11-12} \cmidrule(lr){13-14} \cmidrule(lr){15-16} \cmidrule(lr){17-18}
  DGP & $n$ & Rate & Width & Rate & Width & Rate & Width & Rate & Width & Rate & Width & Rate & Width & Rate & Width & Rate & Width \\
  \midrule
  \multirow{3}{*}{Linear}      & 200  & 0.85 & 0.25 & 0.91 & 0.31 & 0.77 & 0.29 & 0.91 & 0.70 & 0.78 & 0.58 & 0.90 & 0.69 & 1.00 & 0.77 & 1.00 & 0.96 \\
                               & 500  & 0.93 & 0.20 & 0.99 & 0.40 & 0.77 & 0.23 & 0.97 & 0.55 & 0.96 & 0.45 & 1.00 & 0.88 & 1.00 & 0.70 & 1.00 & 0.93 \\
                               & 1000 & 0.96 & 0.16 & 1.00 & 0.35 & 0.74 & 0.20 & 0.99 & 0.42 & 1.00 & 0.38 & 1.00 & 0.76 & 1.00 & 0.67 & 1.00 & 0.88 \\
  \midrule
  \multirow{3}{*}{Dependent}   & 200  & 0.72 & 0.25 & 0.83 & 0.34 & 0.79 & 0.32 & 0.96 & 0.85 & 0.48 & 0.55 & 0.84 & 0.75 & 1.00 & 0.82 & 0.92 & 0.94 \\
                               & 500  & 0.88 & 0.22 & 0.99 & 0.48 & 0.77 & 0.27 & 0.97 & 0.57 & 0.92 & 0.50 & 1.00 & 1.05 & 1.00 & 0.77 & 0.98 & 0.92 \\
                               & 1000 & 0.93 & 0.19 & 1.00 & 0.42 & 0.76 & 0.24 & 0.98 & 0.40 & 0.98 & 0.44 & 1.00 & 0.92 & 1.00 & 0.74 & 1.00 & 0.87 \\
  \midrule
  \multirow{3}{*}{Poisson}     & 200  & 0.69 & 0.05 & 0.75 & 0.05 & 0.74 & 0.09 & 0.15 & 0.01 & 0.20 & 0.09 & 0.44 & 0.11 & 1.00 & 0.36 & 0.04 & 0.03 \\
                               & 500  & 0.70 & 0.05 & 0.94 & 0.11 & 0.70 & 0.08 & 0.08 & 0.00 & 0.52 & 0.11 & 0.86 & 0.25 & 1.00 & 0.32 & 0.00 & 0.01 \\
                               & 1000 & 0.84 & 0.07 & 0.99 & 0.16 & 0.66 & 0.07 & 0.04 & 0.00 & 0.94 & 0.16 & 1.00 & 0.37 & 1.00 & 0.31 & 0.00 & 0.01 \\
  \midrule
  \multirow{3}{*}{Probit}      & 200  & 0.81 & 0.20 & 0.83 & 0.23 & 0.80 & 0.36 & & & 0.66 & 0.44 & 0.64 & 0.49 & 1.00 & 0.84 & & \\
                               & 500  & 0.84 & 0.16 & 0.93 & 0.26 & 0.77 & 0.33 & & & 0.74 & 0.38 & 0.92 & 0.57 & 1.00 & 0.83 & & \\
                               & 1000 & 0.91 & 0.15 & 0.97 & 0.22 & 0.77 & 0.30 & & & 0.90 & 0.34 & 0.96 & 0.51 & 1.00 & 0.83 & & \\
  \midrule
  \multirow{3}{*}{Categorical} & 200  & 0.69 & 0.19 & 0.82 & 0.26 & 0.65 & 0.22 & & & 0.34 & 0.44 & 0.66 & 0.59 & 1.00 & 0.65 & & \\
                               & 500  & 0.86 & 0.18 & 0.97 & 0.31 & 0.70 & 0.20 & & & 0.84 & 0.41 & 0.96 & 0.70 & 1.00 & 0.64 & & \\
                               & 1000 & 0.94 & 0.16 & 0.99 & 0.27 & 0.73 & 0.19 & & & 0.98 & 0.38 & 1.00 & 0.62 & 1.00 & 0.64 & & \\
  \bottomrule
\end{tabular}
\end{table}

\FloatBarrier
\section{Gap experiments}
\label{app:gap}
We illustrate credible bands constructed using the predictive CLT in
Figures~\ref{fig:gap-gaussian-linear}--\ref{fig:gap-categorical-linear}. We draw
half of the sample of $x$ from $\mathrm{Uniform}(-8, -2)$ and the other half
from $\mathrm{Uniform}(2, 8)$. The targets $y$ are drawn from the following
data-generating process:

\paragraph{Linear (simple linear regression)}
\[
  Y = 0.2X + \epsilon, \quad \epsilon \sim \mathcal{N}(0, 1).
\]

\paragraph{Polynomial (polynomial regression)}
\[
  Y = 1 - 0.03X^2 + \epsilon, \quad \epsilon \sim \mathcal{N}(0, 1).
\]

\paragraph{Dependent (linear regression with dependent error)}
\[
  Y \mid X \sim \mathcal{N}(0.5X + 1, \sigma^2(X)) \quad \text{where} \quad \sigma(x) = 0.5 + 0.5|x|.
\]

\paragraph{Sine (sine wave with Gaussian noise)}
\[
  Y = 0.5\sin(X/2) + \epsilon, \quad \epsilon \sim \mathcal{N}(0, 0.5^2).
\]

\paragraph{Poisson (Poisson regression)}
\[
  Y \mid X \sim \mathrm{Poisson}(\lambda(X)) \quad \text{where} \quad \lambda(x) = 0.05(x^2 - 80) + 5 = 0.05x^2 + 1.
\]

\paragraph{Probit (probit regression)}
\[
  Y \mid X \sim \mathrm{Bernoulli}(p), \quad p = 0.6\Phi\left(\frac{x - 8}{4}\right) + 0.4\Phi\left(\frac{x + 8}{4}\right).
\]

\paragraph{Categorical (multinomial logistic regression)}
\[
  Y \mid X \sim \mathrm{Categorical}(p_0, p_1, p_2, p_3), \quad p_j = \frac{\exp(z_j(x))}{\sum_{i=0}^3 \exp(z_i(x))}, \quad \text{for } j \in \{0, 1, 2, 3\},
\]
where
\begin{equation*}
  z_0(x) = -\frac{(x+5)^2}{10}, \quad z_1(x) = -\frac{x^2}{30},
  z_2(x) = -\frac{(x-7)^2}{5}, \quad z_3(x) = -\frac{(x-4)^2}{8}.
\end{equation*}

We use \tabclf{} with Probit and Categorical DGPs, and
\tabreg{} for the remaining DGPs. We set $t = 0$ for the Gaussian
DGPs and $t = 2$ for the Poisson DGP. For the Probit and Categorical DGPs we
plot the class-1 probability $\PP(Y = 1 \mid x)$. Across the figures, the
credible bands widen visibly inside the gap on all DGPs except Probit, where
the response is muted.

\begin{figure}[p]
  \centering
  \includegraphics[width=\linewidth]{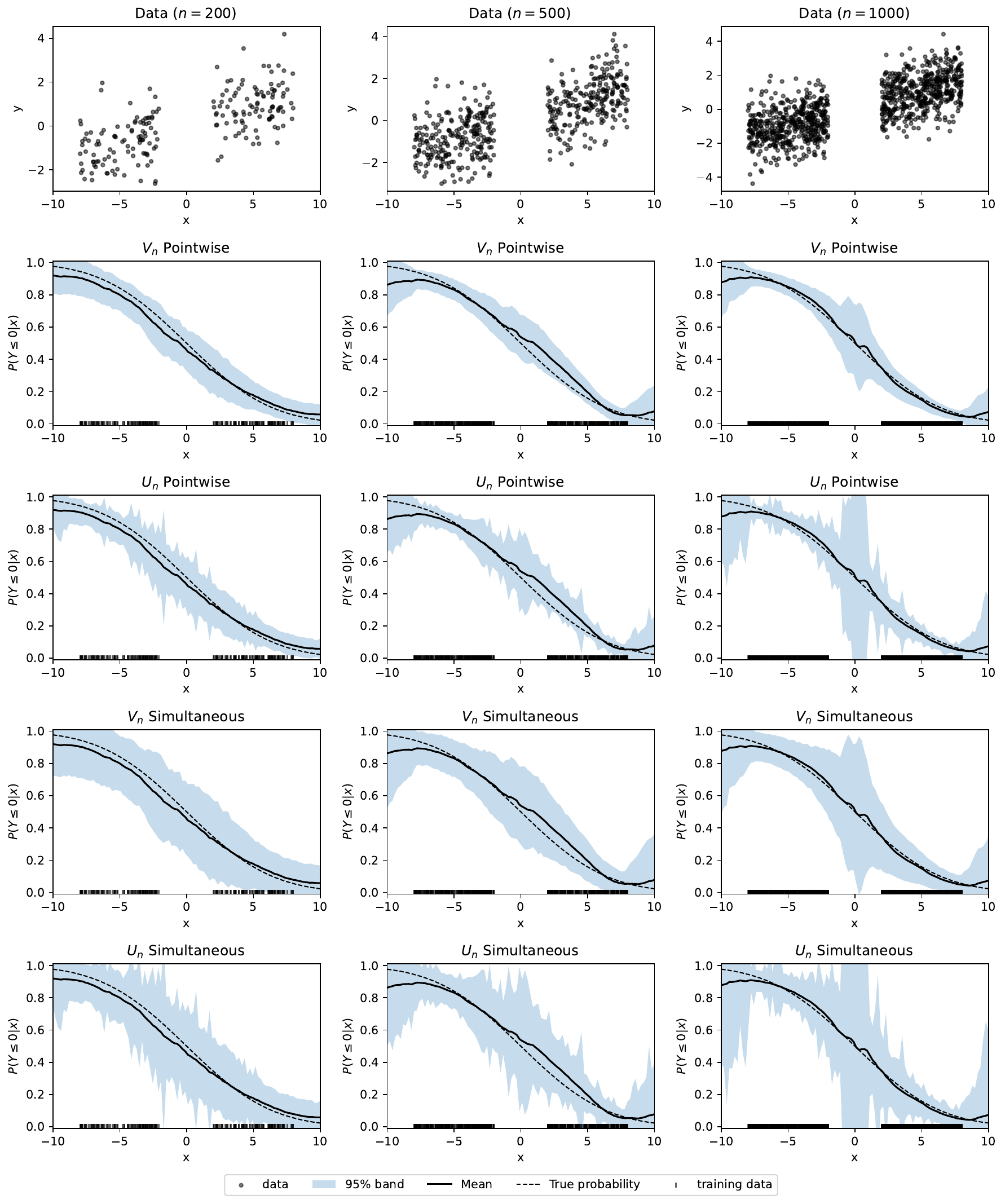}
  \caption{Credible bands with the Linear DGP under different combinations of
    construction methods (pointwise and simultaneous) and covariance estimators
    ($\Vb_n$ and $\Ub_n$), with a gap in observations between $-2$ and $2$. Each
    column corresponds to a different sample size $n$. The first row shows the
    sampled dataset, and the subsequent rows show the corresponding credible
    bands. The observed $x$ values are marked in the credible band plots. }
    \label{fig:gap-gaussian-linear}
\end{figure}

\begin{figure}[p]
  \centering
  \includegraphics[width=\linewidth]{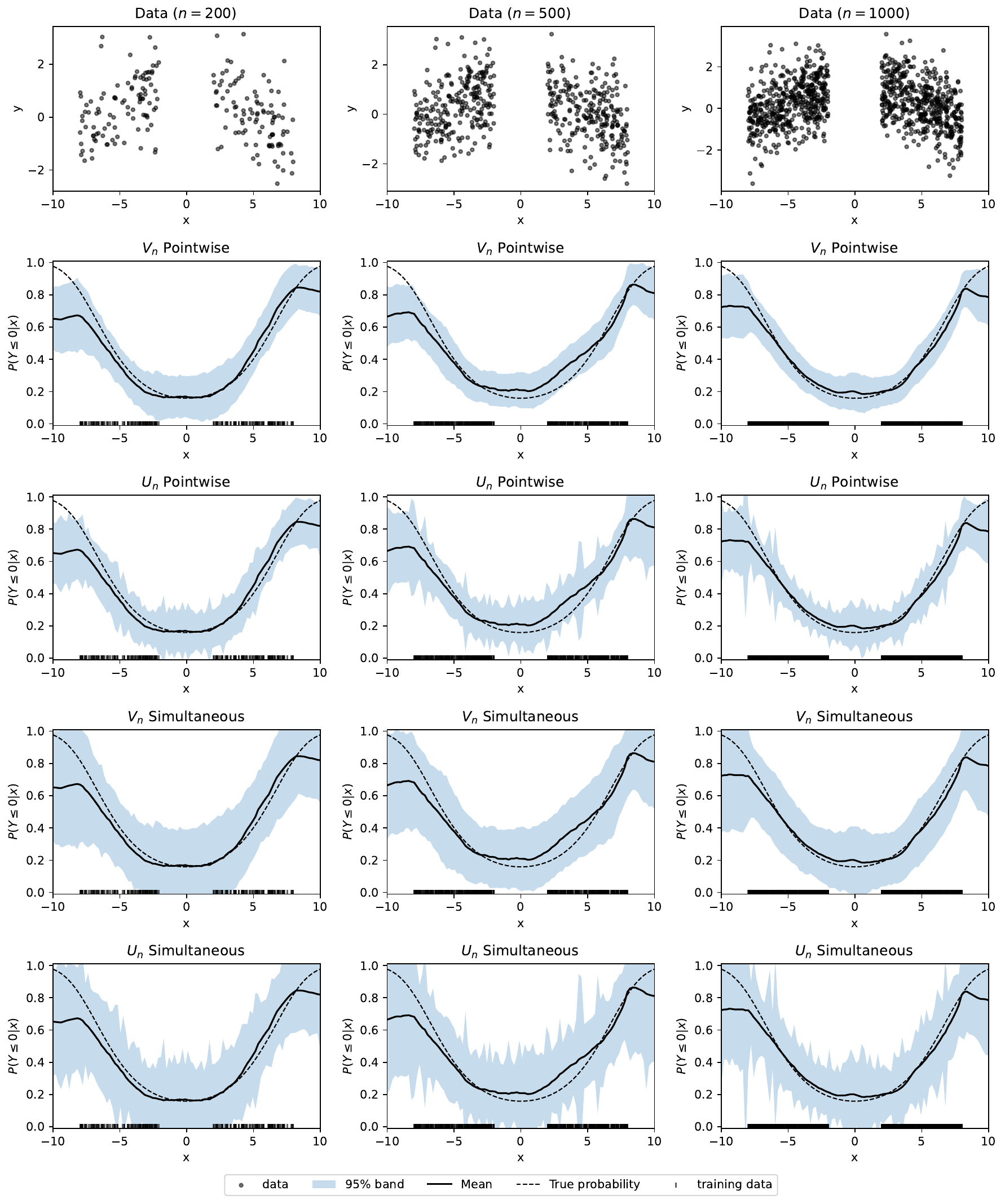}
  \caption{Credible bands with the Polynomial DGP under different combinations of
    construction methods (pointwise and simultaneous) and covariance estimators
    ($\Vb_n$ and $\Ub_n$), with a gap in observations between $-2$ and $2$. Each
    column corresponds to a different sample size $n$. The first row shows the
    sampled dataset, and the subsequent rows show the corresponding credible
    bands. The observed $x$ values are marked in the credible band plots. }
    \label{fig:gap-gaussian-polynomial}
\end{figure}

\begin{figure}[p]
  \centering
  \includegraphics[width=\linewidth]{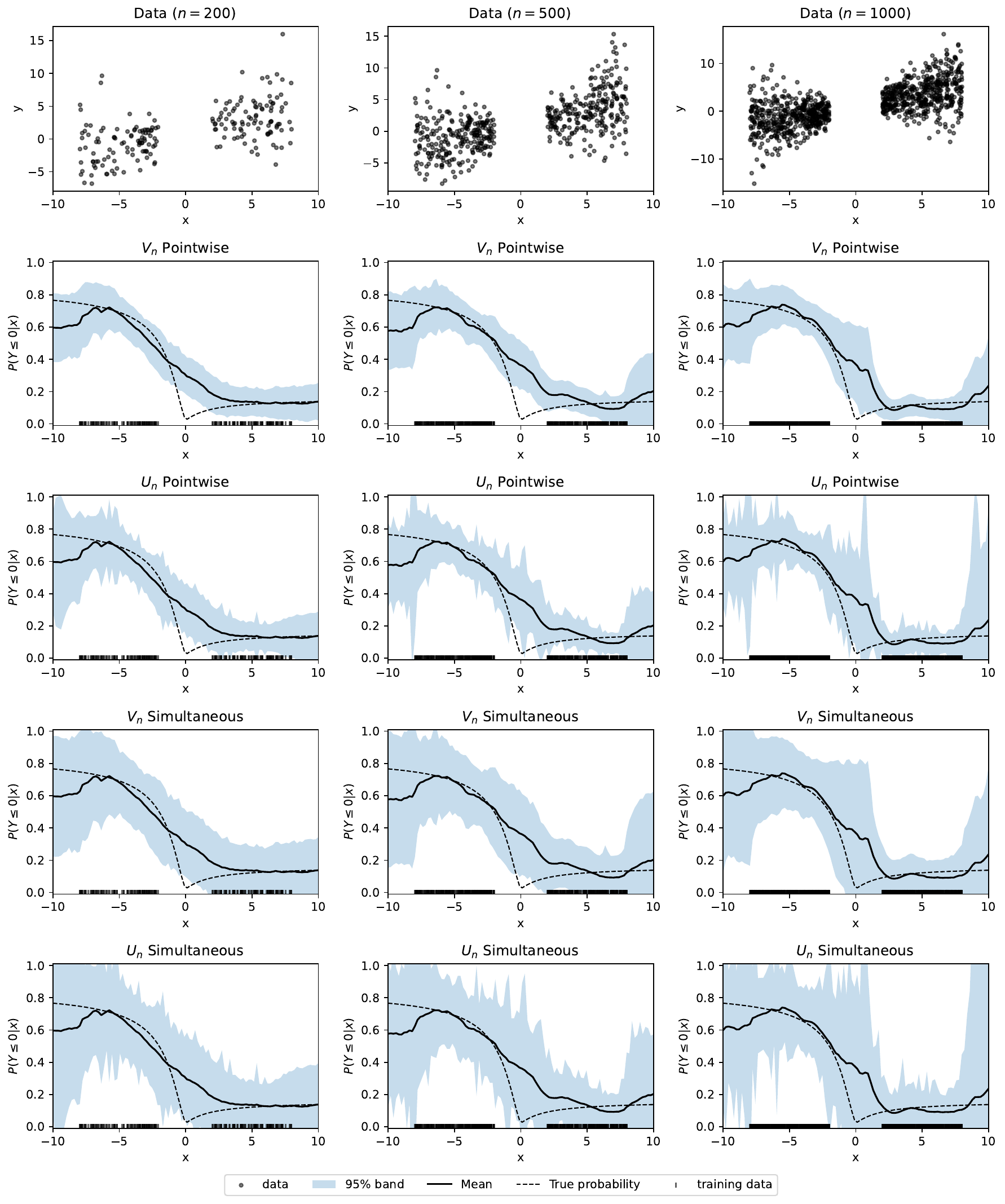}
  \caption{Credible bands with the Dependent DGP under different combinations of
    construction methods (pointwise and simultaneous) and covariance estimators
    ($\Vb_n$ and $\Ub_n$), with a gap in observations between $-2$ and $2$. Each
    column corresponds to a different sample size $n$. The first row shows the
    sampled dataset, and the subsequent rows show the corresponding credible
    bands. The observed $x$ values are marked in the credible band plots. }
    \label{fig:gap-gaussian-linear-dependent-error}
\end{figure}

\begin{figure}[p]
  \centering
  \includegraphics[width=\linewidth]{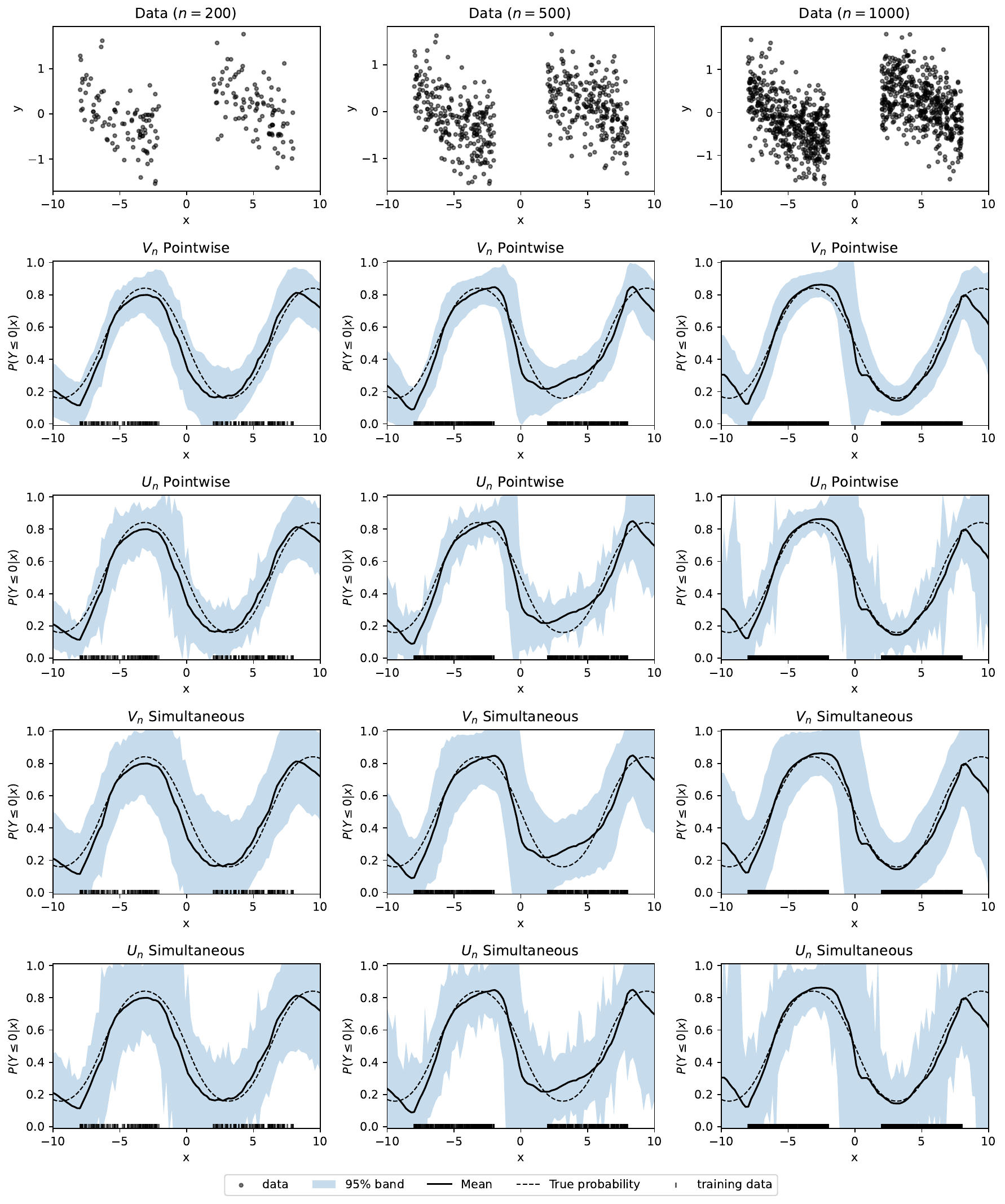}
  \caption{Credible bands with the Sine DGP under different combinations of
    construction methods (pointwise and simultaneous) and covariance estimators
    ($\Vb_n$ and $\Ub_n$), with a gap in observations between $-2$ and $2$. Each
    column corresponds to a different sample size $n$. The first row shows the
    sampled dataset, and the subsequent rows show the corresponding credible
    bands. The observed $x$ values are marked in the credible band plots. }
\end{figure}

\begin{figure}[p]
  \centering
  \includegraphics[width=\linewidth]{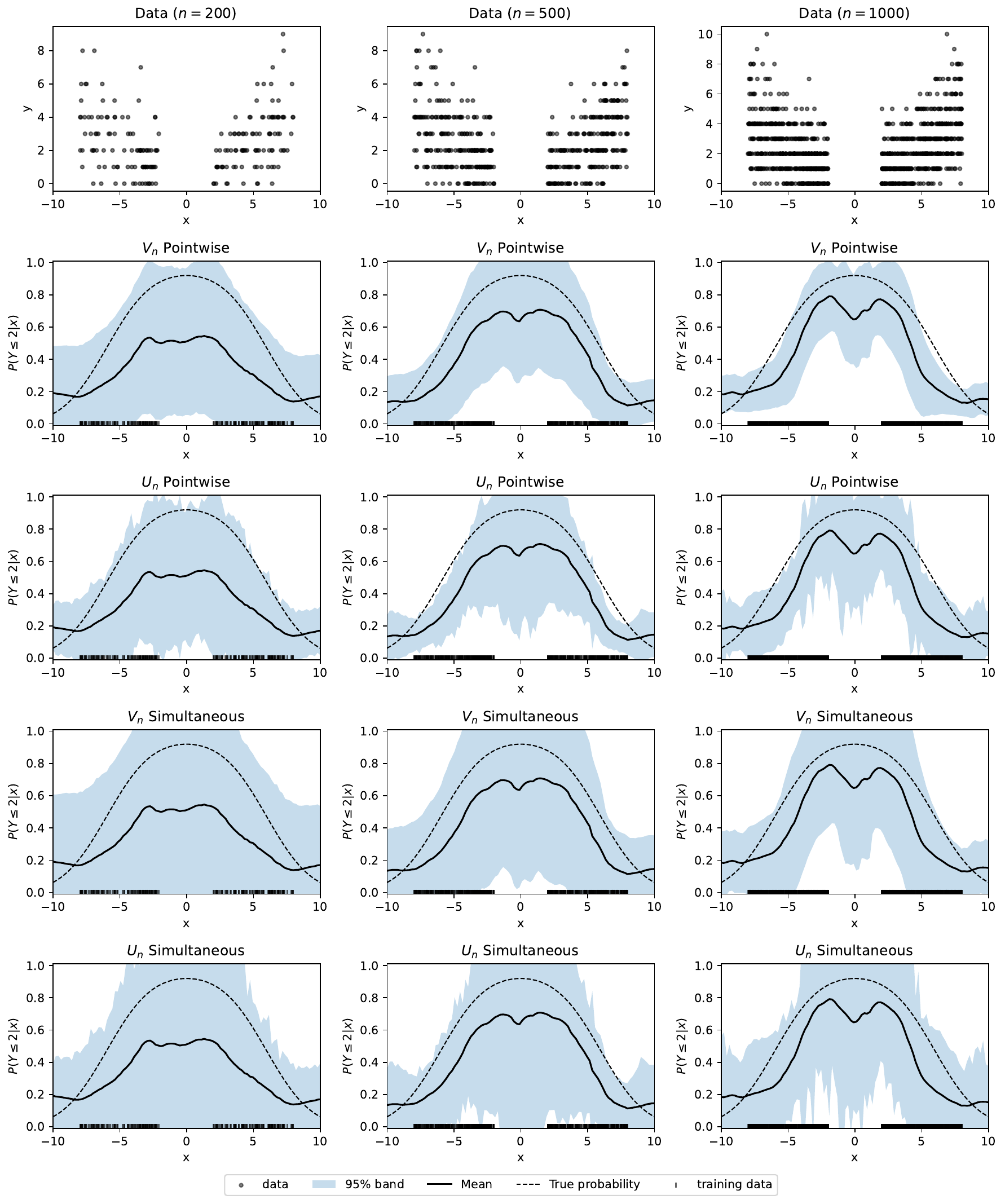}
  \caption{Credible bands with the Poisson DGP under different combinations of
    construction methods (pointwise and simultaneous) and covariance estimators
    ($\Vb_n$ and $\Ub_n$), with a gap in observations between $-2$ and $2$. Each
    column corresponds to a different sample size $n$. The first row shows the
    sampled dataset, and the subsequent rows show the corresponding credible
    bands. The observed $x$ values are marked in the credible band plots. }
\end{figure}

\begin{figure}[p]
  \centering
  \includegraphics[width=\linewidth]{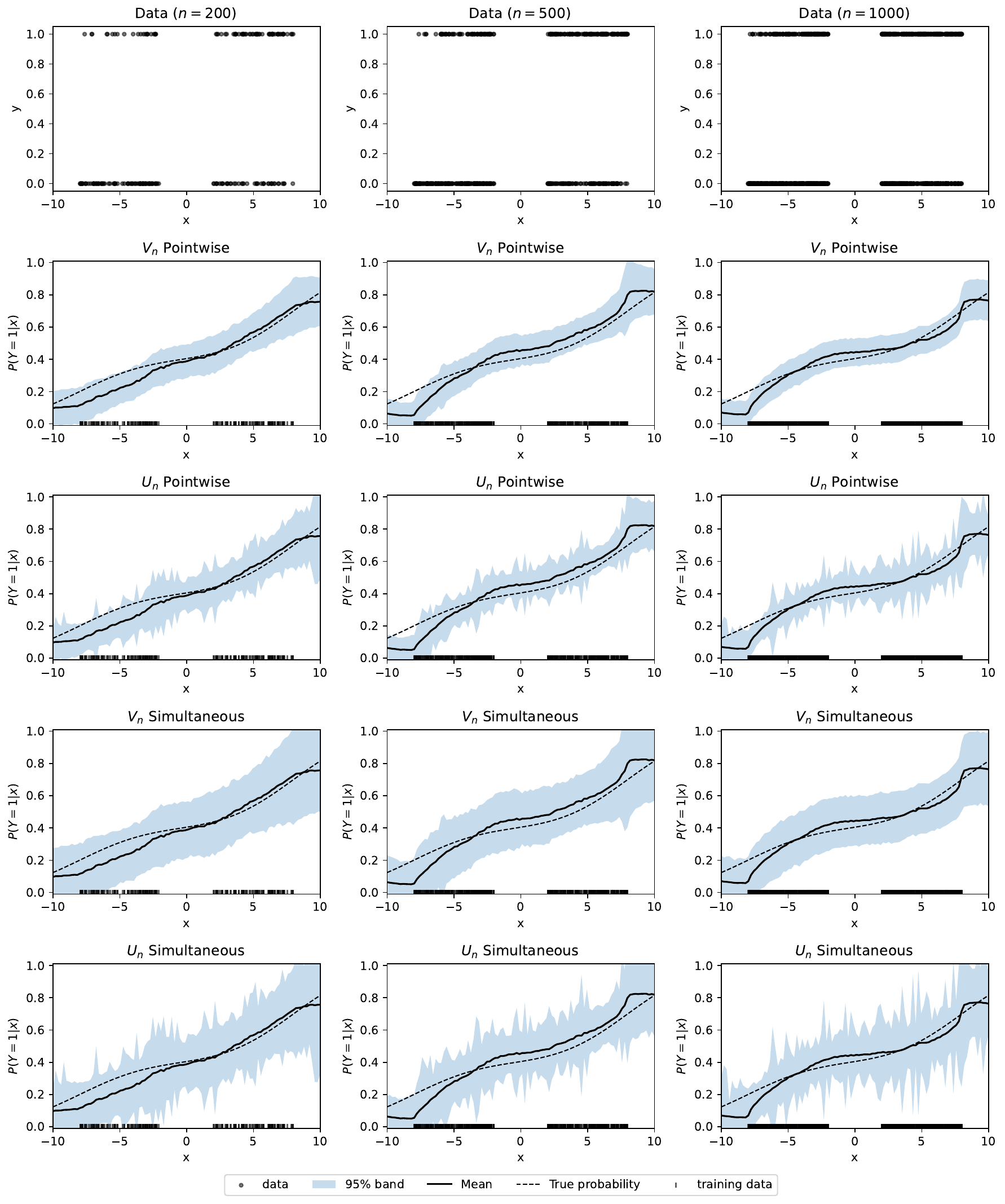}
  \caption{Credible bands with the Probit DGP under different combinations of
    construction methods (pointwise and simultaneous) and covariance estimators
    ($\Vb_n$ and $\Ub_n$), with a gap in observations between $-2$ and $2$. Each
    column corresponds to a different sample size $n$. The first row shows the
    sampled dataset, and the subsequent rows show the corresponding credible
    bands. The observed $x$ values are marked in the credible band plots. }
\end{figure}

\begin{figure}[p]
  \centering
  \includegraphics[width=\linewidth]{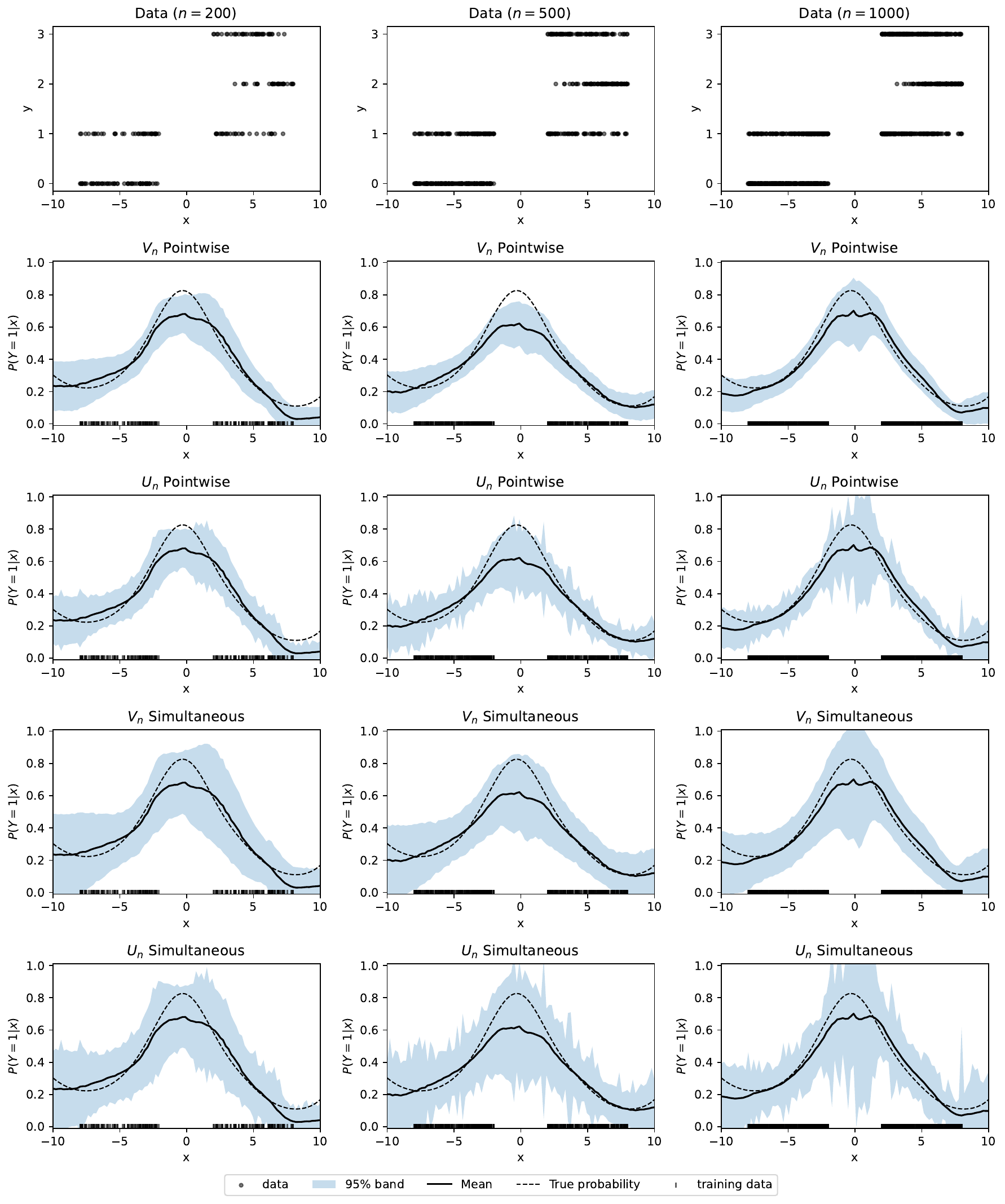}
  \caption{Credible bands with the Categorical DGP under different combinations
    of construction methods (pointwise and simultaneous) and covariance
    estimators ($\Vb_n$ and $\Ub_n$), with a gap in observations between $-2$
    and $2$. Each column corresponds to a different sample size $n$. The first
    row shows the sampled dataset, and the subsequent rows show the
    corresponding credible bands. The observed $x$ values are marked in the
    credible band plots. }
    \label{fig:gap-categorical-linear}
\end{figure}

\FloatBarrier
\section{Entropic UD}
\label{app:entropic_ud}

\subsection{Derivations}
\label{app:entropy-decomp}

Fix a test point $x^\ast$. Our goal is to approximate the aleatoric uncertainty component:
\[
  U_a(x^\ast, z_{1:n}) = \EE\bigl[h(\tilde g(x^\ast))\mid z_{1:n}\bigr],
\]
where $h(p) = -p\log p - (1-p)\log(1-p)$ is the binary entropy function. The Gaussian approximation provided by the predictive CLT, $\tilde{g}(x^\ast) \mid z_{1:n} \approx \mathcal{N}(g_n(x^\ast), v_n(x^\ast)/n)$, where $v_n(x^\ast)$ denotes the diagonal entry of $\Vb_n$ in \eqref{eq:Vnmult} (at $\gamma=1$) corresponding to the test point $x^\ast$, does not respect the support constraint $\tilde{g}(x^\ast) \in [0,1]$ and is therefore unsuitable for directly computing the expectation of the entropy.

To respect the support constraint we approximate the posterior of $\tilde g(x^\ast)\mid z_{1:n}$ by a Beta distribution moment-matched to the CLT mean and variance; its parameters $\alpha_n(x^\ast)$ and $\beta_n(x^\ast)$ are constructed in the subsubsection below. Let $\psi$ denote the digamma function. Writing $a=\alpha_n(x^\ast)$ and $b=\beta_n(x^\ast)$ for brevity, our estimator of aleatoric (entropic) uncertainty at $x^\ast$ is
\begin{equation}
  \widehat U_a(x^\ast, z_{1:n})
  := -\frac{a}{a+b}\,\psi(a+1)
     -\frac{b}{a+b}\,\psi(b+1)
     +\psi(a+b+1).
  \label{eq:hatUa}
\end{equation}

\subsubsection{Moment-matched Beta approximation}
Let $\sigma^2_n(x^\ast)$ denote the plug-in variance $v_n(x^\ast)/n$, clipped to be strictly less than $g_n(x^\ast)\bigl(1-g_n(x^\ast)\bigr)$; the strict clipping guarantees $T_n(x^\ast)>0$ below. We determine the parameters $\alpha_n = \alpha_n(x^\ast)$ and $\beta_n = \beta_n(x^\ast)$ of the approximating Beta distribution $G \sim \mathrm{Beta}(\alpha_n, \beta_n)$ by matching its first two moments to the CLT mean and the clipped variance:
\begin{align*}
  \EE[G]   & = \frac{\alpha_n}{\alpha_n + \beta_n} = g_n(x^\ast),                                             \\
  \mathrm{Var}(G) & = \frac{\alpha_n \beta_n}{(\alpha_n + \beta_n)^2(\alpha_n + \beta_n + 1)}=\sigma^2_n(x^\ast).
\end{align*}
Solving this system for $\alpha_n$ and $\beta_n$ yields:
\begin{align*}
  \alpha_n & = g_n(x^\ast) T_n(x^\ast),     \\
  \beta_n  & = (1-g_n(x^\ast)) T_n(x^\ast),
\end{align*}
where
\[
  T_n(x^\ast) = \frac{g_n(x^\ast)\bigl(1-g_n(x^\ast)\bigr)}{\sigma^2_n(x^\ast)} - 1.
\]
For $g_n(x^\ast)\in(0,1)$ and $T_n(x^\ast)>0$, this Beta distribution has mean $g_n(x^\ast)$ and variance $\sigma^2_n(x^\ast)$.

For $G \sim \mathrm{Beta}(\alpha_n, \beta_n)$, the identity $\EE[G\,f(G)] = \frac{\alpha_n}{\alpha_n+\beta_n}\,\EE_{\mathrm{Beta}(\alpha_n+1,\beta_n)}[f(G)]$ together with $\EE_{\mathrm{Beta}(a,b)}[\log G] = \psi(a)-\psi(a+b)$ gives
\[
  \EE[G \log G] = \frac{\alpha_n}{\alpha_n+\beta_n}\bigl(\psi(\alpha_n+1) - \psi(\alpha_n+\beta_n+1)\bigr),
\]
and analogously $\EE[(1-G)\log(1-G)] = \tfrac{\beta_n}{\alpha_n+\beta_n}\bigl(\psi(\beta_n+1) - \psi(\alpha_n+\beta_n+1)\bigr)$. Substituting into $\EE[h(G)] = -\EE[G\log G] - \EE[(1-G)\log(1-G)]$ yields the closed-form estimator \eqref{eq:hatUa}.

\paragraph{Asymptotic justification.}
The CLT (Theorem~\ref{th:ascondmult}) supplies an asymptotic Gaussian for the conditional law of $\tilde g(x^\ast) \mid z_{1:n}$, but the limiting predictive is only guaranteed to be asymptotically Gaussian; at finite $n$ both the true posterior and our Beta approximation depart from the Gaussian. The Beta matches the CLT mean and variance and respects the support $[0,1]$, but its higher moments are fixed by the Beta family. Alternative moment-matched families on $[0,1]$ (e.g., a logit-Gaussian) would yield slightly different entropy estimates in finite samples. As $v_n(x^\ast)/n \to 0$, every such approximation collapses to the point mass at $\lim_n g_n(x^\ast)$, so the bias of $\widehat U_a$ relative to the true expected entropy vanishes as $n \to \infty$; we do not characterise its rate.

\subsubsection{Comparison with Delta method}
A simpler alternative would be a second-order Delta method expansion around the mean $g_n(x^\ast)$. Since $h''(p) = -1/(p(1-p))$, this yields:
\begin{equation}
  U_a(x^\ast, z_{1:n}) \approx h(g_n(x^\ast)) - \frac{v_n(x^\ast)/n}{2 g_n(x^\ast)(1-g_n(x^\ast))}.
\end{equation}
However, this approximation is numerically unstable near the boundaries (where $g_n(x^\ast) \approx 0$ or $1$), as the curvature $h''$ explodes. Furthermore, this approximation is not guaranteed to be nonnegative, potentially yielding negative estimates of the aleatoric uncertainty $U_a(x^\ast, z_{1:n})$ (equivalently, epistemic estimates exceeding the total entropy). The moment-matched Beta approach avoids these pathologies by implicitly respecting the compact support of $\tilde{g}(x^\ast)$.

\subsubsection{Extension to multiclass classification}
For the multiclass setting with $K$ classes, let the predictive distribution be
denoted by the probability vector
$g_n(x^\ast) = [g_{n,1}(x^\ast), \dots, g_{n,K}(x^\ast)]^\top$, where
$\sum_{k=1}^K g_{n,k}(x^\ast) = 1$. The total predictive uncertainty is given by
the Shannon entropy:
\begin{equation}
  \mathbb H(Y^\ast \mid x^\ast, z_{1:n}) = -\sum_{k=1}^K g_{n,k}(x^\ast) \log g_{n,k}(x^\ast).
\end{equation}
We decompose this into aleatoric and epistemic components. 
To estimate the aleatoric uncertainty
$U_a(x^\ast, z_{1:n})$, we generalise the Beta approximation to a Dirichlet
approximation $\tilde g(x^\ast) \sim \mathrm{Dir}(\alpha_{n}(x^\ast))$, where
$\alpha_n(x^\ast) = [\alpha_{n,1}(x^\ast), \dots, \alpha_{n,K}(x^\ast)]^\top$.

We determine the parameters via moment matching. Let $\sigma^2_{n,k}(x^\ast)$
denote the variance for class $k$ provided by the predictive CLT, analogous to
the binary case. We define the term $\alpha_{n,0}(x^{\ast})$ by matching the
total variance of the Dirichlet distribution to the sum of the predictive
variances:
\begin{equation}
  \sum_{k=1}^K \frac{g_{n,k}(x^\ast) (1 - g_{n,k}(x^\ast))}{\alpha_{n,0}(x^{\ast}) + 1} = \sum_{k=1}^K \sigma^2_{n,k}(x^\ast).
\end{equation}
Rearranging the terms yields
\begin{equation}
  \alpha_{n,0}(x^\ast) := \frac{1 - \|g_n(x^\ast)\|_2^2}{\sum_{k=1}^K \sigma^2_{n,k}(x^\ast)} - 1.
\end{equation}
The sum of the predictive variances $\sum_{k=1}^K \sigma^2_{n,k}(x^\ast)$ is
clipped to be strictly less than $1 - \|g_n(x^\ast)\|_2^2$ to ensure
$\alpha_{n,0}(x^\ast) > 0$. The Dirichlet parameters are then set as
$\alpha_{n,k}(x^\ast) := g_{n,k}(x^\ast) \, \alpha_{n,0}(x^\ast)$.

The aleatoric uncertainty is estimated as the expected entropy of this Dirichlet
distribution. Using the properties of $\psi$, we obtain the
estimator:
\begin{equation}
  \widehat U_a(x^\ast, z_{1:n})
  := \psi(\alpha_{n,0}(x^\ast) + 1) - \sum_{k=1}^K g_{n,k}(x^\ast) \, \psi(\alpha_{n,k}(x^\ast) + 1).
\end{equation}
Epistemic uncertainty is then computed as the residual
$U_e(x^\ast, z_{1:n}) = \mathbb H(Y^\ast \mid x^\ast, z_{1:n}) - \widehat U_a(x^\ast, z_{1:n})$.

\subsection{Experiments}
\label{app:entropic_ud_experiments}
We take the synthetic experimental setups described in \citet{jayasekera25variational}, Appendix G.3 (``Synthetic Toy Experiments''), including the data-generating processes and evaluation grids for logistic regression, two moons, and spirals.

\subsubsection{Logistic regression}\label{app:logreg}
We consider a one-dimensional logistic-regression data-generating process. For each dataset size $n$, we sample covariates
$x_i \sim \mathcal{N}(1.5, 3.0^2)$ and then draw labels
\[
  y_i \sim \mathrm{Bernoulli}(p_i), \qquad
  p_i = \sigma(\beta x_i + \beta_0),
\]
where $\sigma(\cdot)$ is the logistic sigmoid, $\beta=0.25$, and $\beta_0=-0.5$.
We run this experiment for $n \in \{15,50,75,150\}$ with a fixed random seed.

For visualisation, we evaluate uncertainty on a one-dimensional test grid
$x^{\ast} \in [-15.0,\,15.0]$ with step size $0.2$. The results are in Figure~\ref{fig:ud-logreg-xstar}.


\begin{figure}[h]
  \centering
  \includegraphics[width=\linewidth]{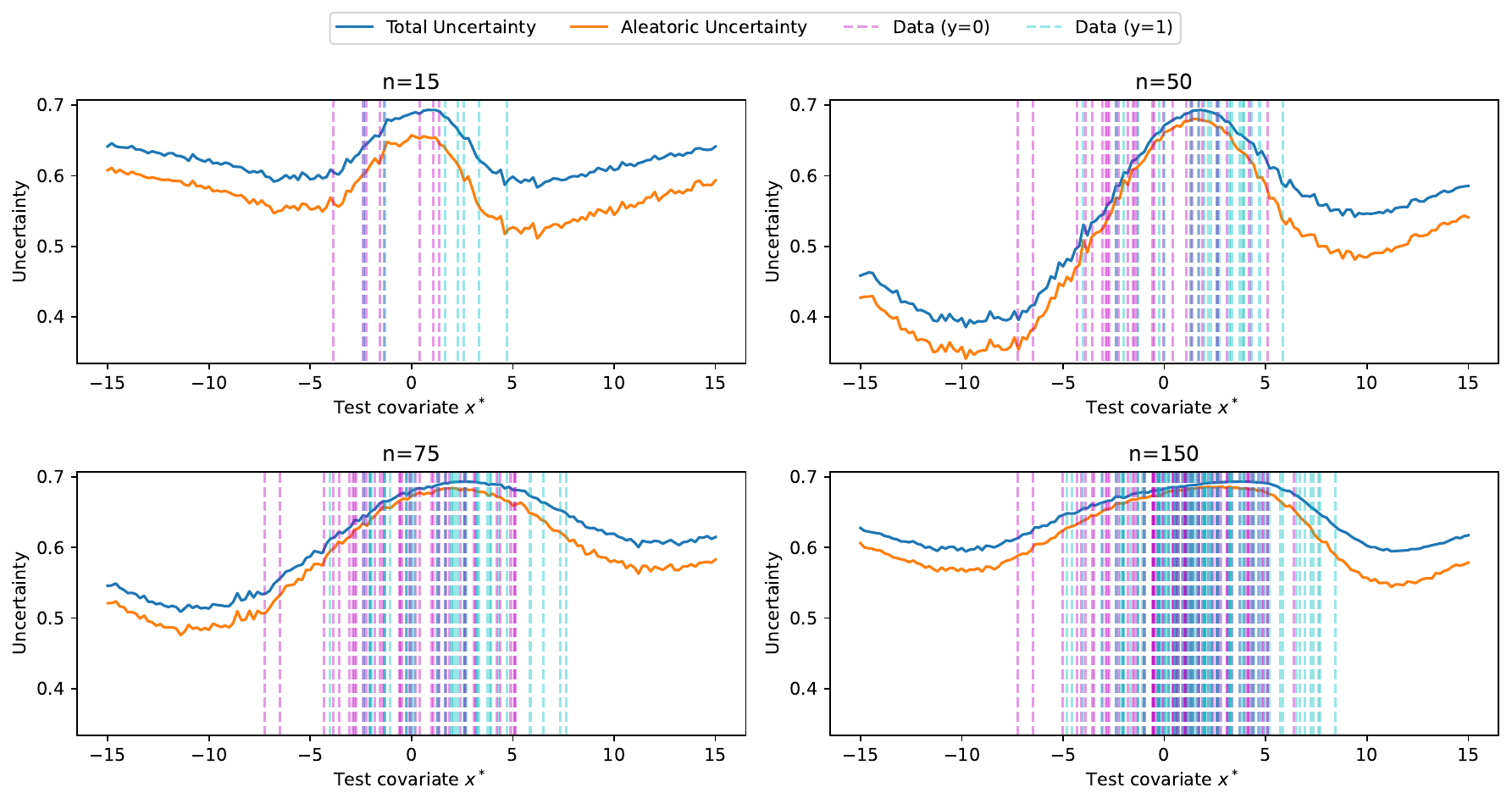}
  \caption{\textbf{Logistic regression (1D).} Entropic UD for TabPFN at
    dataset sizes $n\in\{15,50,75,150\}$.}
  \label{fig:ud-logreg-xstar}
\end{figure}

\paragraph{Size curves.}
To study scaling with context length, we evaluate uncertainties at test points
$x^{\ast} \in \{-15,-10,-5,0,5,10,15\}$ across context lengths
$n \in \{75, 80, 85, \ldots, 200\}$. For each context length $n$, we average results
across $50$ independently sampled datasets. The proportional version of
the aleatoric and epistemic uncertainty is shown in the main text
(Figure~\ref{fig:ud-logreg-context-length}); the raw values are presented
in Figure~\ref{fig:ud-logreg-context-length-raw}.

\begin{figure}[h]
  \centering
  \includegraphics[width=\linewidth]{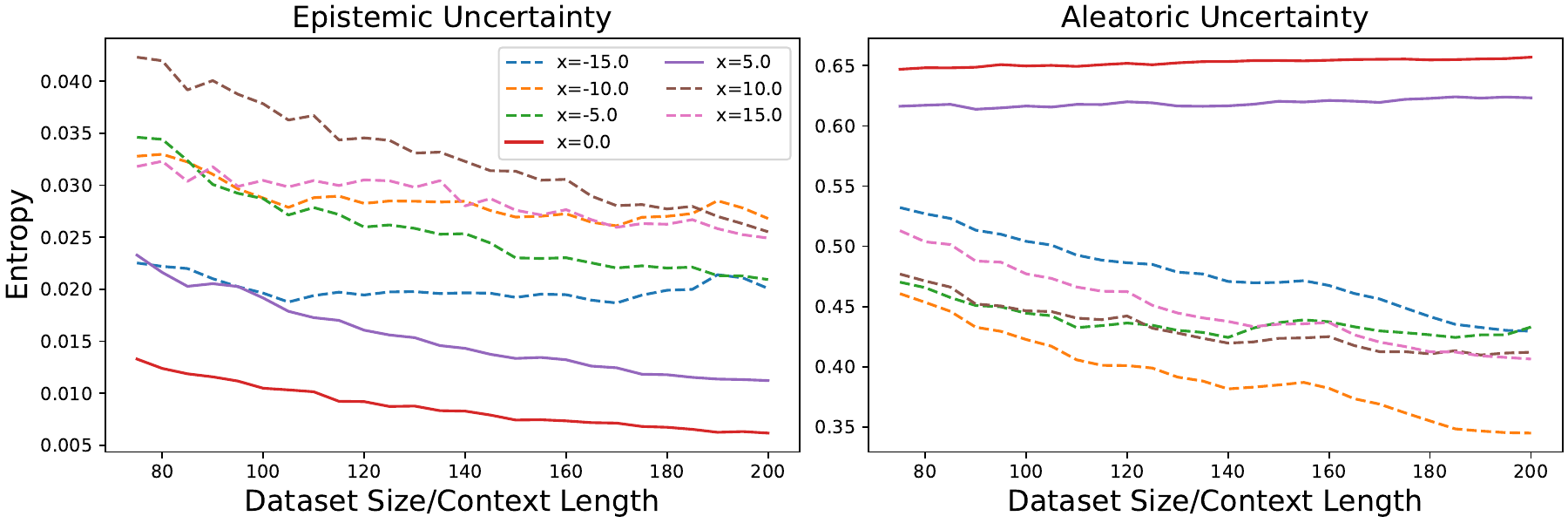}
  \caption{\textbf{Logistic regression (1D).} Raw aleatoric and epistemic
    uncertainty for TabPFN with different test covariates $x^\ast$ against
    varying context length. Solid and dotted lines indicate in-distribution
    and out-of-distribution $x^\ast$, respectively.}
  \label{fig:ud-logreg-context-length-raw}
\end{figure}

\subsubsection{Two moons}\label{app:moons}
We use the \texttt{scikit-learn} \texttt{make\_moons} generator with additive Gaussian noise.
We consider two noise regimes:
(i) \emph{Moons 1} with noise $\sigma=0.1$ and
(ii) \emph{Moons 2} with noise $\sigma=0.4$.
For each regime we run two dataset sizes, $n=30$ and $n=100$, using a fixed seed.
We evaluate uncertainty on two-dimensional uniform grids with 100 points in each
coordinate:
\begin{itemize}
  \item \textbf{Moons 1:} $x_1^{\ast} \in [-1.5,\,2.6]$ and $x_2^{\ast} \in [-1.5,\,2.6]$.
  \item \textbf{Moons 2:} $x_1^{\ast} \in [-3.0,\,3.6]$ and $x_2^{\ast} \in [-2.5,\,3.1]$.
\end{itemize}
We report heatmaps of total, aleatoric, and epistemic uncertainty over each grid, overlaying training samples by class, see Figure~\ref{fig:appendix-moons-grid}.

\subsubsection{Three-class spirals}\label{app:spirals}
We generate a three-class spirals dataset in $\mathbb{R}^2$ with $C=3$ arms. For class $c \in \{0,1,2\}$, we sample
$t \sim \mathrm{Uniform}(0,1)$, set radius $r = 4.0\,t$, and define the angle
\[
  \theta = 2\pi \cdot 2 \cdot t + \frac{2\pi c}{C}.
\]
Points are generated as
\[
  x_1 = r\cos\theta + \varepsilon_1,\qquad x_2 = r\sin\theta + \varepsilon_2,
\]
with i.i.d.\ Gaussian noise $\varepsilon_1,\varepsilon_2 \sim \mathcal{N}(0,\sigma^2)$ using $\sigma=0.1$.
We sample $n=200$ total points (approximately balanced across classes), randomly permute the dataset, and fix the random seed.

We evaluate uncertainty on a two-dimensional uniform grid over
$x_1^{\ast} \in [-4.0,\,4.0]$ and $x_2^{\ast} \in [-4.0,\,4.0]$ with 100 points
in each coordinate, and report heatmaps of total, aleatoric, and epistemic
uncertainty over this region, see Figure~\ref{fig:2moons-spirals}.


\begin{figure}[h]
  \centering
  \includegraphics[width=\linewidth]{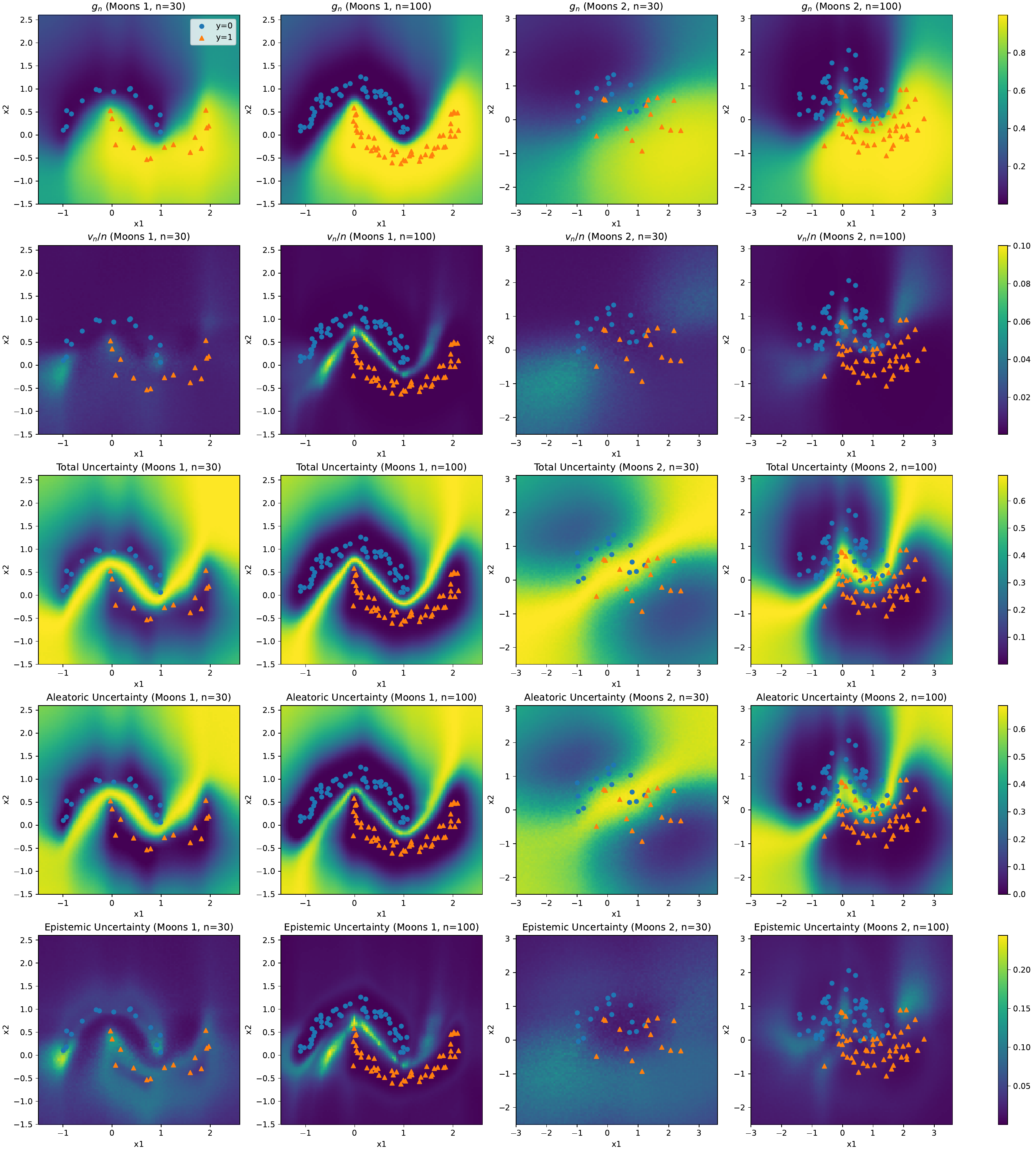} \\
  \caption{\textbf{Two moons.} Rows (top to bottom):
    $g_n(x)$, $v_n(x)/n$, total uncertainty, aleatoric uncertainty, epistemic
    uncertainty. Columns (left to right): Moons 1 ($n=30$), Moons 1 ($n=100$),
    Moons 2 ($n=30$), Moons 2 ($n=100$).}
  \label{fig:appendix-moons-grid}
\end{figure}


%% file: references.bib
@article{berti04limit,
  title = {Limit Theorems for a Class of Identically Distributed Random Variables},
  author = {Berti, Patrizia and Pratelli, Luca and Rigo, Pietro},
  year = 2004,
  journal = {The Annals of Probability},
  volume = {32},
  number = {3},
  pages = {2029--2052},
  publisher = {Institute of Mathematical Statistics}
}

@article{definetti37prevision,
  title = {La pr{\'e}vision~: ses lois logiques, ses sources subjectives},
  author = {de Finetti, Bruno},
  year = 1937,
  journal = {Annales de l'Institut Henri Poincar{\'e}},
  volume = {7},
  number = {1},
  pages = {1--68}
}

@book{albert20probability,
  title = {Probability and {{Bayesian Modeling}}},
  author = {Albert, Jim and Hu, Jingchen},
  year = 2020,
  series = {Texts in {{Statistical Science}}},
  edition = {first},
  publisher = {{Chapman and Hall/CRC}}
}

@unpublished{battiston25bayesian,
  title = {Bayesian Predictive Inference beyond Martingales},
  author = {Battiston, Marco and Cappello, Lorenzo},
  year = 2025,
  number = {arXiv:2507.21874},
  eprint = {2507.21874},
  primaryclass = {math},
  publisher = {arXiv},
  archiveprefix = {arXiv},
  note = {arXiv:2507.21874}
}

@unpublished{fortini26principled,
  title = {A principled framework for uncertainty decomposition in {{TabPFN}}},
  author = {Fortini, Sandra and Ng, Kenyon and Petrone, Sonia and Rousseau, Judith and Wei, Susan},
  year = 2026,
  number = {arXiv:2602.04596v1},
  eprint = {2602.04596v1},
  primaryclass = {stat.ML},
  publisher = {arXiv},
  archiveprefix = {arXiv},
  note = {arXiv:2602.04596v1; earlier version of the present work}
}

@article{crimaldi2009,
  title = {An Almost Sure Conditional Convergence Result and an Application to a Generalized {P}\'olya Urn},
  author = {Crimaldi, Irene},
  year = {2009},
journal={International Mathematical Forum}, 
volume= 4, 
number={23},
pages={1139-1156}
}

@article{crimaldi2016,
  title = {Fluctuation theorems for synchronization of interacting {P}\'olya's urns},
  author = {Crimaldi, Irene and Dai Pra, Paolo and Minelli, Ida Germana},
  journal = {Stochastic Processes and their Applications},
  volume = {126},
  number = {3},
  pages = {930--947},
  year = {2016},
  issn = {0304-4149},
  doi = {10.1016/j.spa.2015.10.005},
  url = {https://www.sciencedirect.com/science/article/pii/S0304414915002537}
}

@article{fong23martingale,
  title = {Martingale Posterior Distributions},
  author = {Fong, Edwin and Holmes, Chris and Walker, Stephen G.},
  year = 2023,
  journal = {Journal of the Royal Statistical Society: Series B (Statistical Methodology)},
  volume = {85},
  number = {5},
  pages = {1357--1391}
}

@article{fong26asymptotics,
  title = {Asymptotics for a class of parametric martingale posteriors},
  author = {Fong, Edwin and Yiu, Andrew},
  year = 2026,
  journal = {Biometrika},
  volume = {113},
  number = {2},
  doi = {10.1093/biomet/asag007}
}

@article{fong25bayesian,
  title = {Bayesian Quantile Estimation and Regression with Martingale Posteriors},
  author = {Fong, Edwin and Yiu, Andrew},
  year = 2025,
  journal = {Journal of the Royal Statistical Society: Series B (Statistical Methodology)},
  doi = {10.1093/jrsssb/qkaf080},
  note = {Advance article}
}

@article{fortini18notion,
  title = {On a Notion of Partially Conditionally Identically Distributed Sequences},
  author = {Fortini, Sandra and Petrone, Sonia and Sporysheva, Polina},
  year = 2018,
  journal = {Stochastic Processes and their Applications},
  volume = {128},
  number = {3},
  pages = {819--846}
}

@article{fortini20quasibayes,
  title = {Quasi-{{Bayes}} Properties of a Procedure for Sequential Learning in Mixture Models},
  author = {Fortini, Sandra and Petrone, Sonia},
  year = 2020,
  journal = {Journal of the Royal Statistical Society: Series B (Statistical Methodology)},
  volume = {82},
  number = {4},
  pages = {1087--1114}
}

@article{fortini23predictionbased,
  title = {Prediction-Based Uncertainty Quantification for Exchangeable Sequences},
  author = {Fortini, Sandra and Petrone, Sonia},
  year = 2023,
  journal = {Philosophical Transactions of the Royal Society A: Mathematical, Physical and Engineering Sciences},
  volume = {381},
  number = {2247},
  pages = {20220142},
  doi = {10.1098/rsta.2022.0142}
}

@article{fortini25exchangeability,
  title = {Exchangeability, Prediction and Predictive Modeling in {{Bayesian}} Statistics},
  author = {Fortini, Sandra and Petrone, Sonia},
  year = 2025,
  journal = {Statistical Science},
  volume = {40},
  number = {1},
  pages = {40--67},
  publisher = {Institute of Mathematical Statistics}
}

@inproceedings{genewein23memorybased,
  title = {Memory-Based Meta-Learning on Non-Stationary Distributions},
  booktitle = {Proceedings of the {{International Conference}} on {{Machine Learning}}},
  author = {Genewein, Tim and Del{\'e}tang, Gr{\'e}goire and Ruoss, Anian and Wenliang, Li Kevin and Catt, Elliot and Dutordoir, Vincent and {Grau-Moya}, Jordi and Orseau, Laurent and Hutter, Marcus and Veness, Joel},
  year = 2023,
  pages = {11173--11195}
}

@inproceedings{grau-moya24learning,
  title = {Learning Universal Predictors},
  booktitle = {Proceedings of the {{International Conference}} on {{Machine Learning}}},
  author = {{Grau-Moya}, Jordi and Genewein, Tim and Hutter, Marcus and Orseau, Laurent and Del{\'e}tang, Gr{\'e}goire and Catt, Elliot and Ruoss, Anian and Wenliang, Li Kevin and Mattern, Christopher and Aitchison, Matthew and Veness, Joel},
  year = 2024,
  pages = {16178--16205}
}

@book{hamada08bayesian,
  title = {Bayesian {{Reliability}}},
  author = {Hamada, Michael S. and Wilson, Alyson G. and Reese, C. Shane and Martz, Harry F.},
  year = 2008,
  series = {Springer {{Series}} in {{Statistics}}},
  publisher = {Springer}
}

@article{hollmann25accurate,
  title = {Accurate Predictions on Small Data with a Tabular Foundation Model},
  author = {Hollmann, Noah and M{\"u}ller, Samuel and Purucker, Lennart and Krishnakumar, Arjun and K{\"o}rfer, Max and Hoo, Shi Bin and Schirrmeister, Robin Tibor and Hutter, Frank},
  year = 2025,
  journal = {Nature},
  volume = {637},
  number = {8045},
  pages = {319--326},
  publisher = {Nature Publishing Group}
}

@inproceedings{hollmann23tabpfn,
  title = {{{TabPFN}}: A Transformer That Solves Small Tabular Classification Problems in a Second},
  author = {Hollmann, Noah and M{\"u}ller, Samuel and Eggensperger, Katharina and Hutter, Frank},
  year = 2023,
  booktitle = {International Conference on Learning Representations (ICLR)},
  url = {https://openreview.net/forum?id=cp5PvcI6w8_}
}

@inproceedings{jayasekera25variational,
  title = {Variational Uncertainty Decomposition for In-Context Learning},
  booktitle = {Advances in {{Neural Information Processing Systems}}},
  author = {Jayasekera, I. Shavindra and Si, Jacob and Valdettaro, Filippo and Chen, Wenlong and Faisal, Aldo A. and Li, Yingzhen},
  year = 2025
}

@book{kallenberg21foundations,
  title = {Foundations of {{Modern Probability}}},
  author = {Kallenberg, Olav},
  year = 2021,
  series = {Probability {{Theory}} and {{Stochastic Modelling}}},
  volume = {99},
  publisher = {Springer International Publishing}
}

@inproceedings{mikulik20metatrained,
  title = {Meta-Trained Agents Implement {{Bayes-optimal}} Agents},
  booktitle = {Advances in {{Neural Information Processing Systems}}},
  author = {Mikulik, Vladimir and Del{\'e}tang, Gr{\'e}goire and McGrath, Tom and Genewein, Tim and Martic, Miljan and Legg, Shane and Ortega, Pedro A.},
  year = 2020,
  volume = {33},
  pages = {18691--18703}
}

@unpublished{nagler25uncertainty,
  title = {Uncertainty Quantification for Prior-Data Fitted Networks Using Martingale Posteriors},
  author = {Nagler, Thomas and R{\"u}gamer, David},
  year = 2025,
  number = {arXiv:2505.11325},
  eprint = {2505.11325},
  primaryclass = {stat},
  publisher = {arXiv},
  archiveprefix = {arXiv},
  note = {arXiv:2505.11325}
}

@unpublished{ng25tabmgp,
  title = {{{TabMGP}}: {{Martingale}} Posterior with {{TabPFN}}},
  shorttitle = {{{TabMGP}}},
  author = {Ng, Kenyon and Fong, Edwin and Frazier, David T. and Knoblauch, Jeremias and Wei, Susan},
  year = 2025,
  number = {arXiv:2510.25154},
  eprint = {2510.25154},
  primaryclass = {stat},
  publisher = {arXiv},
  archiveprefix = {arXiv},
  note = {arXiv:2510.25154}
}

@article{olea19simultaneous,
  title = {Simultaneous Confidence Bands: {{Theory}}, Implementation, and an Application to {{SVARs}}},
  shorttitle = {Simultaneous Confidence Bands},
  author = {Olea, Jos{\'e} Luis Montiel and Plagborg-M{\o}ller, Mikkel},
  year = 2019,
  journal = {Journal of Applied Econometrics},
  volume = {34},
  number = {1},
  pages = {1--17},
  publisher = {John Wiley \& Sons, Ltd.}
}

@unpublished{ortega19metalearning,
  title = {Meta-learning of sequential strategies},
  author = {Ortega, Pedro A. and Wang, Jane X. and Rowland, Mark and Genewein, Tim and {Kurth-Nelson}, Zeb and Pascanu, Razvan and Heess, Nicolas and Veness, Joel and Pritzel, Alex and Sprechmann, Pablo and Jayakumar, Siddhant M. and McGrath, Tom and Miller, Kevin and Azar, Mohammad and Osband, Ian and Rabinowitz, Neil and Gy{\"o}rgy, Andr{\'a}s and Chiappa, Silvia and Osindero, Simon and Teh, Yee Whye and van Hasselt, Hado and de Freitas, Nando and Botvinick, Matthew and Legg, Shane},
  year = 2019,
  number = {arXiv:1905.03030},
  eprint = {1905.03030},
  primaryclass = {cs},
  publisher = {arXiv},
  archiveprefix = {arXiv},
  note = {arXiv:1905.03030}
}

@misc{priorlabs25tabpfn,
  title = {{{TabPFN}} Documentation: {{Intended}} Use},
  author = {{PriorLabs}},
  year = 2025,
  howpublished = {https://priorlabs.ai/getting\_started/intended\_use/\#computational-and-time-requirements}
}

@inproceedings{muller22pfns,
  title = {Transformers Can Do {{Bayesian}} Inference},
  booktitle = {International Conference on Learning Representations (ICLR)},
  author = {M{\"u}ller, Samuel and Hollmann, Noah and {Arango}, Sebastian Pineda and Grabocka, Josif and Hutter, Frank},
  year = 2022
}

@inproceedings{garg22what,
  title = {What Can Transformers Learn In-Context? {{A}} Case Study of Simple Function Classes},
  booktitle = {Advances in Neural Information Processing Systems},
  author = {Garg, Shivam and Tsipras, Dimitris and Liang, Percy and Valiant, Gregory},
  year = 2022
}

@inproceedings{raventos23pretraining,
  title = {Pretraining Task Diversity and the Emergence of Non-{{Bayesian}} In-Context Learning for Regression},
  booktitle = {Advances in Neural Information Processing Systems},
  author = {Ravent{\'o}s, Allan and Paul, Mansheej and Chen, Feng and Ganguli, Surya},
  year = 2023
}

@inproceedings{wurgaft25strategies,
  title = {In-Context Learning Strategies Emerge Rationally},
  author = {Wurgaft, Daniel and Lubana, Ekdeep Singh and Park, Core Francisco and Tanaka, Hidenori and Reddy, Gautam and Goodman, Noah D.},
  year = 2025,
  booktitle = {Advances in Neural Information Processing Systems}
}

@inproceedings{park25competition,
  title = {Competition Dynamics Shape Algorithmic Phases of In-Context Learning},
  author = {Park, Core Francisco and Lubana, Ekdeep Singh and Pres, Itamar and Tanaka, Hidenori},
  year = 2025,
  booktitle = {International Conference on Learning Representations (ICLR)},
  note = {Spotlight},
  eprint = {2412.01003},
  archivePrefix = {arXiv}
}

@article{zhang24trained,
  title = {Trained Transformers Learn Linear Models In-Context},
  author = {Zhang, Ruiqi and Frei, Spencer and Bartlett, Peter L.},
  journal = {Journal of Machine Learning Research},
  year = 2024,
  volume = {25},
  number = {49},
  pages = {1--55}
}

@inproceedings{wu24pretraining,
  title = {How Many Pretraining Tasks Are Needed for In-Context Learning of Linear Regression?},
  author = {Wu, Jingfeng and Zou, Difan and Chen, Zixiang and Braverman, Vladimir and Gu, Quanquan and Bartlett, Peter L.},
  booktitle = {International Conference on Learning Representations (ICLR)},
  year = 2024,
  note = {Spotlight}
}

@inproceedings{muller23pfns4bo,
  title = {{{PFNs4BO}}: In-context learning for {{Bayesian}} optimization},
  booktitle = {Proceedings of the {{International Conference}} on {{Machine Learning}}},
  author = {M{\"u}ller, Samuel and Feurer, Matthias and Hollmann, Noah and Hutter, Frank},
  year = 2023
}

@inproceedings{yu26gitbo,
  title = {{{GIT-BO}}: High-dimensional {{Bayesian}} optimization with tabular foundation models},
  author = {Yu, Rosen Ting-Ying and Picard, Cyril and Ahmed, Faez},
  year = 2026,
  booktitle = {International Conference on Learning Representations (ICLR)}
}
